%% file: main_tmlr.tex
\documentclass[10pt]{article}

\usepackage[accepted]{tmlr}

\usepackage{amsmath}
\usepackage{amssymb}
\usepackage{amsthm} 
\usepackage{array}
\usepackage{booktabs}
\usepackage{microtype}
\usepackage{xcolor}
\usepackage{mathrsfs}
\usepackage{bbm}

\theoremstyle{plain}
\newtheorem{theorem}{Theorem}[section]
\newtheorem{proposition}[theorem]{Proposition}

\newtheorem{corollary}[theorem]{Corollary}

\theoremstyle{definition}
\newtheorem{definition}[theorem]{Definition}
\newtheorem{assumption}[theorem]{Assumption}

\theoremstyle{remark}
\newtheorem{remark}[theorem]{Remark}

\newcolumntype{L}[1]{>{\raggedright\arraybackslash}p{#1}}
\usepackage{url}

\title{Structuring Relations Among Learning Paradigms\\via Protocol--Objective--Resource Reductions}

\author{\name Junwei Su\textsuperscript{*} \email junweisu.cs@gmail.com \\
      \addr University of Science and Technology of China
      \AND
      \name Changjie Wang \email changjie\_w@mail.ustc.edu.cn \\
      \addr University of Science and Technology of China
      \AND
      \name Dongyang Chang \email dychang@mail.ustc.edu.cn\\
      \addr University of Science and Technology of China 
      }

\def\month{09}  
\def\year{2026} 
\def\openreview{\url{https://openreview.net/forum?id=8ZyXUFRfb2}} 

\begin{document}

\raggedbottom

\maketitle

\input{paper/abstract}
\input{paper/introduction}
\input{paper/preliminary}
\input{paper/main_results}
\input{paper/related_work}
\input{paper/conclusion}
\input{paper/broader_impact}

\bibliography{reference}
\bibliographystyle{tmlr}

\clearpage
\appendix
\input{paper/appendix_discussion}
\input{paper/appendix_proofs}

\end{document}

%% file: paper/abstract.tex
\begin{abstract}
Modern machine learning increasingly spans supervised, transfer, continual,
meta-learning, and other training regimes, often while reusing the same
hypothesis families, architectures, and optimizers. Yet these regimes can
differ substantially in information access, success criteria, memory
assumptions, adaptation protocols, and sample accounting. As a result, it is
often unclear whether a proposed paradigm is genuinely distinct, a special case
of an existing paradigm, or part of a broader structural relationship among
paradigms. This motivates a structural theory for comparing learning paradigms
while separating representational capacity from protocol, objective, and
resource choices. We develop a protocol--objective--resource (POR) framework
for making such comparisons precise. In this framework, a learning paradigm is
specified by its environment class, observation protocol, admissible learner
class, performance functional, and resource accounting rule. We define POR
reductions through environment embeddings, learner compilers, threshold maps,
and calibrated declared-resource overheads. Our main theorem shows that a
POR reduction implies worst-case complexity domination, in the declared
accounting units, on embedded comparison classes; under labeled-example
accounting this is sample-complexity domination. It transfers upper bounds in
one direction and lower bounds in the other. We further provide reusable templates for
proving reductions, show why nontrivial, calibrated accuracy regimes are
necessary to exclude vacuous comparisons, and prove that strengthening the objective can strictly
increase minimax sample complexity even when the protocol and learner class
remain unchanged. We instantiate the framework on supervised, transfer,
continual, and meta-learning abstractions. We operationalize transcript laws and resource costs and adopt a uniform fixed-budget minimax notion. Under explicit interface-compatibility assumptions, the exact hierarchy gives canonical special-case sanity checks: continual contains transfer, transfer contains supervised, and meta contains supervised under aligned raw-example accounting. We also give a non-exact episode-to-example reduction for episodic meta-learning. The same formalism captures within-paradigm refinements such as replay memory and task identifiers. Together, these results structure relations among learning paradigms and enable transfer of complexity guarantees in the declared accounting units.
 \let\thefootnote\relax\footnotetext{$*$: corresponding author}
\end{abstract}

%% file: paper/introduction.tex
\section{Introduction}
\label{sec:introduction}

Reductions are among the most powerful organizing principles in computer
science. The theory of NP-completeness showed that apparently different
computational problems can be related systematically: an instance of one problem
can be encoded as an instance of another, so that a solver for the latter yields
a solver for the former
\citep{cook1971complexity,karp1972reducibility,garey1979computers,papadimitriou1994computational,arora2009computational}.
This idea did more than classify individual problems as hard. It revealed a
network of relationships among difficult problems, allowing algorithms, proof
techniques, lower bounds, and impossibility results to move across problem
formulations rather than being rebuilt from scratch each time.

This structural perspective can help conserve research effort within the
interfaces covered by a reduction. When two formulations reduce to one another,
formal guarantees may be reused; when a reduction is obstructed, the obstruction
identifies which stated protocol, objective, or resource condition differs.
This does not by itself decide empirical or scientific novelty, but it turns an
informal similarity claim into explicit, auditable obligations.

Modern machine learning increasingly needs an analogous language. The field is
organized around supervised learning, transfer learning, continual learning,
meta-learning, and many variants of these regimes
\citep{caruana1997multitask,thrun1998lifelong,pan2010survey,bendavid2010theory,weiss2016survey,parisi2019continual,delange2022continual,vanderven2022three,baxter2000model,finn2017model,hospedales2022meta,su2023towards,su2024pres,su2023limitation}.
These labels are useful, but they can hide distinct sources of difficulty.
Different paradigms often reuse the same hypothesis families, representations,
architectures, losses, and optimizers
\citep{bengio2013representation,caruana1997multitask,crammer2008learning,maurer2016benefit,pentina2015lifelong,tripuraneni2020task,su2024topology,su2025interplay,su2026improving},
while differing in what data are revealed, when the data are revealed, what
counts as success, how memory and adaptation are constrained, and how samples
are charged.
A fixed predictor architecture, for example, may be trained from scratch,
adapted from source tasks, updated sequentially across tasks, or used inside an
episodic meta-learning procedure.

Without a structural theory, it is difficult to state precisely whether one
learning interface is a special case of another or whether their formal
difference lies in protocol, objective, or resource accounting. This question
is narrower than scientific novelty: it concerns which guarantees follow under
specified interfaces. It also makes method-transfer claims auditable. Saying
that a method transfers from one paradigm to another is not only a statement
about using the same model class; it additionally requires the interaction
protocol, success criterion, and charged resource budget to be matched.

Existing learning-theoretic complexity measures do not resolve this issue.
Tools such as VC dimension, Rademacher complexity, covering numbers, stability,
and norm-based capacity bounds compare hypothesis classes under a fixed
learning setup
\citep{vapnik1971uniform,valiant1984theory,blumer1989learnability,vapnik1998statistical,anthony1999neural,shalev2014understanding,mohri2018foundations,bartlett2002rademacher,bousquet2002stability,su2025non,liu2026full,su2026multi}.
They are not designed to separate the effect of representational capacity from
the effect of changing the learner's information interface, objective, memory
constraint, adaptation protocol, or sample-accounting rule. Consequently, a
statement such as ``continual learning is harder than transfer learning'' is
formally incomplete. The claimed gap might come from the need to control
forgetting, the absence of task identifiers, a stronger all-task objective, a
smaller replay budget, or a different convention for counting source and target
samples
\citep{robins1995catastrophic,french1999catastrophic,kirkpatrick2017overcoming,lopezpaz2017gradient,rebuffi2017icarl,delange2022continual,vanderven2022three,kalan2020minimax,sheng2024mspipe,su2025temporal,su2024bg,shen2026structuring}. This paper asks the following question:

\begin{quote}
{\bf \it What exactly makes one learning paradigm harder than another, and under
what conditions can understanding and methods be systematically transferred
across paradigms?}
\end{quote}

We answer this question by developing a reduction-based framework for comparing
learning paradigms. The analogy is with reductions in complexity theory, but the
objects are statistical learning interfaces rather than decision problems. A
learning paradigm is represented as a protocol--objective--resource (POR)
specification: it consists of an environment class, an observation protocol, an
admissible learner class, a performance functional, and a resource accounting
rule. A POR reduction certifies that any successful learner for one paradigm can
be systematically compiled into a successful learner for another, after
translating environments, success requirements, and declared resource budgets.

Operationally, the observation protocol and accounting rule induce a
budgeted execution law on transcripts: an execution at budget \(M\) may reveal
only a transcript whose charged cost is at most \(M\). A reduction must couple
the solver and compiled executions, map every admissible solver learner into an
admissible reduced learner, preserve the required success event, and certify the
affine cost bound. Worst-case complexity in the declared accounting unit is
uniform: a single learner at a single budget must meet the target on every
environment in the stated comparison class.

The practical role of the framework is to separate representation from the rest
of the learning interface. Shared architectures or optimizers do not by
themselves imply that understanding or methods transfer. For understanding
transfer, a reduction states when upper bounds, lower bounds, and impossibility
results in one paradigm imply corresponding results in another. For method
transfer, the same reduction gives a checklist: the target setting must be able
to reproduce the source method's data transcript, source success must imply
target success after the threshold translation, and the simulation must not hide
extra sample, memory, replay, or adaptation cost. This makes informal claims of
transfer auditable rather than merely intuitive.

The formal compiler is an interface-level simulation. Claims that it is
computationally efficient or directly implementable require an additional
constructivity or computational-overhead condition; the present results certify
complexity transfer in the declared accounting unit only.

\paragraph{Contributions.}
The main contributions of this paper are as follows.

\begin{itemize}
    \item \textbf{A reduction framework for learning paradigms.}
    Section~\ref{sec:preliminaries-framework} formalizes learning paradigms as
    POR objects and defines POR reductions. The definition records four pieces
    of reduction data: environments must be embedded, learners must be compiled,
    success requirements must be translated, and resource budgets must be
    calibrated. Accuracy regimes are included to state which thresholds are
    scientifically meaningful; non-vacuity additionally requires the regime and
    threshold map to exclude trivially achievable targets. The induced coupling
    then has three checks: admissibility and protocol validity, resource
    calibration, and objective transfer. The definition makes transcript
    simulation and the resource cost operational. Collectively,
    these elements establish a reduction theory for machine learning paradigms,
    analogous in spirit to the role of reductions in complexity theory.

    \item \textbf{Consequence and structural properties of POR reduction.}
    Section~\ref{sec:structural-results} develops the main theoretical consequences of the POR reduction framework. Proposition~\ref{prop:budget-calibration} characterizes how declared resource budgets are calibrated when a simulation consumes a bounded number of charged units. Theorem~\ref{thm:reduction-domination} establishes the central result of the framework: a POR reduction implies minimax declared-resource-complexity domination on embedded comparison classes. Building on this theorem, Corollaries~\ref{cor:lower-bound-transfer},
    \ref{cor:exact-domination}, and~\ref{cor:reduction-obstruction} provide lower-bound transfer, exact-domination results, and a complexity obstruction for ruling out proposed reductions. Beyond complexity transfer, we establish several structural properties of the framework. We prove monotonicity with respect to thresholds and comparison classes (Propositions~\ref{prop:threshold-monotonicity} and~\ref{prop:class-monotonicity}), show that POR hardness forms a preorder whose induced equivalence classes capture mutual reducibility (Proposition~\ref{prop:preorder} and Corollary~\ref{cor:equivalence-classes}), and derive reusable proof templates based on special-case embeddings, objective strengthening, additional information, and enlarged learner resources (Proposition~\ref{prop:generic-templates}). Finally, Propositions~\ref{prop:strict-objective} and~\ref{prop:scalable-objective-gap} demonstrate that modifying the learning objective alone can strictly increase minimax sample complexity, even when the environment class, observation protocol, learner class, hypothesis family, and resource accounting rule remain unchanged. Together, these results establish POR as a mathematically structured reduction theory and clarify how protocol, objective, and resource choices shape the relative difficulty of learning paradigms.

    \item \textbf{POR Structure among representative learning paradigms.}
     Section~\ref{sec:example-hardness-relations} instantiates the POR framework on abstractions of supervised, transfer, continual, and meta-learning. These results are intentionally canonical sanity checks rather than unconditional or surprising orderings: their purpose is to verify, one obligation at a time, what the stated interface-compatibility assumptions imply. For each fixed \(K\), we establish the exact reduction relations
\[
\mathrm{Cont}_{K+1} \succeq_{\mathrm{ex}} \mathrm{Trans}_{K}
\succeq_{\mathrm{ex}} \mathrm{Sup},
\qquad
\mathrm{Meta} \succeq_{\mathrm{ex}} \mathrm{Sup},
\]
    where \(A \succeq_{\mathrm{ex}} B\) denotes that \(B\) exact-reduces to \(A\). These results formally identify transfer learning as a special case of continual learning, supervised learning as a special case of transfer learning, and supervised learning as a special case of meta-learning under the stated interface assumptions and aligned raw-example costs. The reductions are established in Propositions~\ref{prop:transfer-dominates-supervised}--\ref{prop:meta-dominates-supervised}, and their sample-complexity consequences are summarized in Corollary~\ref{cor:canonical-sample-hierarchy}. We then give two worked uses: two accounting variants of the same episodic meta interface differ by the explicit overhead \(s+u\), and a standard VC-dimension lower bound propagates to the embedded transfer and continual classes. This illustrates how POR reductions enable the systematic transfer of theoretical understanding across learning paradigms.
    
\end{itemize}

\paragraph{Scope.}
The reductions studied here are structural worst-case statements. They do not
claim that optimization difficulty, constants, empirical benchmark behavior, or
typical-case performance are identical across paradigms. They also do not impose
a universal total order over supervised, transfer, continual, and meta-learning.
Instead, the framework identifies the precise assumptions under which one
learning interface contains another for minimax declared-resource-complexity
purposes, and
it explains which obligation fails when such a comparison cannot be justified.

%% file: paper/preliminary.tex
\section{Preliminaries: POR Objects and Reductions}
\label{sec:preliminaries-framework}

We use \(\mathbb N=\{0,1,2,\dots\}\), and \(\log\) denotes the natural
logarithm. All paradigms compared in a reduction share an input space
\(\mathcal X\), output space \(\mathcal Y\), base loss
\(\ell:\mathcal Y\times\mathcal Y\to\mathbb R_+\), and hypothesis family
\[
\mathcal H=\{h_\theta:\theta\in\Theta\}.
\]
For a distribution \(D\) over \(\mathcal X\times\mathcal Y\), write
\[
R_D(h):=\mathbb E_{(x,y)\sim D}[\ell(h(x),y)] .
\]
The shared \(\mathcal H\) assumption isolates protocol, objective, and
resource effects from representational capacity. The default resource unit is
one revealed labeled example, unless an explicit conversion rule is stated.

\subsection{POR objects}

A learning paradigm over \(\mathcal H\) is a tuple
\[
P=(\mathcal E_P,\mathcal O_P,\mathfrak T_P,\mathbf J_P,\kappa_P).
\]
Here \(\mathcal E_P\) is the environment class; \(\mathcal O_P\) is the
observation protocol; \(\mathfrak T_P\) is the admissible learner class;
\(\mathbf J_P\) is the performance functional; and \(\kappa_P\) is the
resource accounting rule. Environments may be single distributions, source and
target task tuples, sequential task streams, or task distributions for
episodic meta-learning.

{\color{black}
\paragraph{Operational execution law.}
The protocol has a measurable full-execution space \(\mathcal Z_P\). An element
of \(\mathcal Z_P\) records the learner-visible transcript, protocol-generated
latent evaluation coordinates, actions, learner private randomness, final
output, and any auxiliary randomness used by the performance evaluation; write
\(\pi_P^{\mathrm{vis}}:\mathcal Z_P\to\mathcal V_P\) for its projection onto
the learner-visible transcript space \(\mathcal V_P\).
The unknown environment itself is an external argument, not a coordinate that a
learner or compiler must emit. Likewise, a protocol-generated latent coordinate
such as a sampled meta-evaluation task is supplied by the environment side of
the operational law and need not be learner-visible.
For every environment \(e\), admissible learner \(T\), and external budget
\(M\), the interaction of \(T\) with \(\mathcal O_P\) induces a random execution
\(Z_{P,T,e}^M\in\mathcal Z_P\). The accounting rule is a measurable cumulative cost map
\[
\kappa_P:\mathcal Z_P\to\mathbb N.
\]
Its value is generated by an adapted, nondecreasing prefix-cost process: at
every protocol step, the charged cost so far is measurable from the current
execution prefix, cannot decrease when that prefix is extended, and equals
\(\kappa_P\) on the terminal full record. It depends only on the charged
protocol events recorded in the execution;
private and evaluation randomness is uncharged unless explicitly stated.
An \(M\)-budget execution admits a charged protocol event only if the resulting
prefix cost remains at most \(M\), and is stopped before the next charged event
that would violate
\[
\kappa_P(Z_{P,T,e}^M)\le M
\qquad\text{almost surely}.
\]
The performance functional is a measurable map of the environment and full
record; with the same symbol for the induced random variable,
\[
\mathbf J_P:\mathcal E_P\times\mathcal Z_P\to\mathbb R^{d_P},
\qquad
\mathbf J_P(T,e;M):=\mathbf J_P(e,Z_{P,T,e}^M).
\]
An admissible learner is a measurable randomized, non-anticipating procedure: each action may
depend only on the revealed history, and the learner must output an element of
\(\mathcal H\) through the interface specified by \(\mathcal O_P\). Structural
restrictions such as replay capacity, adaptation depth, or communication
topology may additionally be encoded in \(\mathfrak T_P\). Thus \(\kappa_P\)
controls the charged execution cost, whereas \(\mathfrak T_P\) controls which
procedures are allowed. Changing \(\kappa_P\) can change the transcript
available at the same numerical budget and hence can change both success and
the reduction relation.
}

\paragraph{Example: supervised learning.}
For ordinary supervised learning, \(\mathcal E_P\) may be a class of
distributions \(D\) on \(\mathcal X\times\mathcal Y\);
\(\mathcal O_P\) reveals i.i.d.\ labeled samples from \(D\);
\(\mathfrak T_P\) contains the training procedures allowed to output a
predictor \(h\in\mathcal H\); the performance functional can be the scalar risk
\[
\mathbf J_P(T,D;M)=R_D(h_T(S_M)),
\qquad S_M\sim D^M,
\]
where \(D\) is the environment, \(S_M\) is the random training sample generated
by the protocol, and \(h_T(S_M)\) is the predictor output by \(T\). Here
\(\pi_P^{\mathrm{vis}}(Z)=S\in
\bigcup_{m\ge0}(\mathcal X\times\mathcal Y)^m\) and
\(\kappa_P(Z)=|S|\), so an \(M\)-budget execution reveals at most \(M\)
labeled examples. The remaining components of \(Z\) record the learner's
randomness, output, and population-risk evaluation. Thus
\(\mathbf J_P(T,D;M)\) is random through the sampled dataset \(S_M\), but its
explicit environment argument is the distribution \(D\). The resource rule
\(\kappa_P\) counts the number of labeled examples used.

\noindent\emph{Description and purpose.} This definition describes a learning
paradigm as the full interface between an environment, a learner, an
evaluation rule, and a resource model. We need this object because the paper
compares paradigms while holding the hypothesis family fixed; the tuple makes
the protocol, objective, learner constraints, and accounting convention
explicit rather than hiding them behind a paradigm label.

\begin{assumption}[Budget capping]
\label{ass:budget-capping}
{\color{black}
Learner classes are closed under budget capping. For any admissible learner
\(T\in\mathfrak T_P\) and any budget-capping rule
\(q:\mathbb N\to\mathbb N\) satisfying \(q(M)\le M\), define \(T^q\) to be the
learner that, under external budget \(M\), runs \(T\) with internal budget
\(q(M)\), requests no further charged interaction, and leaves the remaining
allowance unused. Then \(T^q\in\mathfrak T_P\). Moreover, there is a canonical
environment-independent causal cap-wrapper coupling such that, for every
environment \(e\in\mathcal E_P\) and budget \(M\in\mathbb N\),
\[
\kappa_P(Z_{P,T^q,e}^{M})\le q(M)
\]
and
\[
Z_{P,T^q,e}^{M}=Z_{P,T,e}^{q(M)},
\qquad
\mathbf J_P(T^q,e;M)=\mathbf J_P(T,e;q(M))
\quad\text{almost surely}.
\]
The equality concerns the active full-execution record; the unused external
allowance is not part of \(\mathcal Z_P\).
}
\end{assumption}

\noindent\emph{Description and reasonableness.} This assumption states
that a declared resource budget is an upper bound: a valid learner may use fewer
charged units and leave the remaining allowance unused. If a simulation costs
only \(\psi(M)\le aM+b\) units while the reduction grants budget \(aM+b\), the
compiled learner must be allowed to run that simulation and stop. Without this
closure, reductions would depend on the unnatural distinction between
``exactly \(M\) charged units'' and ``at most \(M\) charged units.''

The induced performance variable is an \(\mathbb R^{d_P}\)-valued random vector
\(\mathbf J_P(T,e;M)\), whose law includes environment, sampling, protocol, and
learner randomness. Its coordinates may include risk, regret, forgetting, or
post-adaptation query error. For \(u,v\in\mathbb R^d\), write \(u\preceq v\)
when \(u_i\le v_i\) for every coordinate. We assume all events
\(\{\mathbf J_P(T,e;M)\preceq\tau\}\) are measurable.

\subsection{Success and minimax declared-resource complexity}

An accuracy regime for \(P\) is a set
\(\mathscr T_P\subseteq\mathbb R^{d_P}\) of meaningful threshold vectors.
Regimes are part of every reduction statement. For a reduction
\(B\preceq_{\mathrm{POR}} A\), the threshold map sends thresholds for the
solver paradigm \(A\) to thresholds for the reduced paradigm \(B\). Regimes
help expose vacuous reductions, but do not by themselves prevent them: a
scientifically meaningful reduction must also exclude trivial zero-cost
thresholds or otherwise calibrate the threshold map.

\noindent\emph{Description and purpose.} This definition describes the set of
accuracy thresholds that are considered meaningful for a paradigm. We need the
regime because reductions compare success requirements, and without restricting
the thresholds one could obtain formally correct but vacuous comparisons by
mapping every requirement to an automatically satisfied target. The
    vacuity result in Proposition~\ref{prop:vacuity-without-regimes} makes the required
nontriviality condition explicit.

{\color{black}
For a nonempty comparison class
\(\mathcal C_P\subseteq\mathcal E_P\), define the uniform fixed-budget
complexity of learner \(T\) by
\[
N_P(T,\mathcal C_P;\tau_P,\delta)
:=
\inf\Bigl\{M\in\mathbb N:
\forall e\in\mathcal C_P,\;
\Pr[\mathbf J_P(T,e;M)\preceq\tau_P]\ge1-\delta
\Bigr\},
\]
and define the minimax complexity in the declared accounting unit by
\begin{align}
N_P^\star(\mathcal C_P;\tau_P,\delta)
&:=\inf_{T\in\mathfrak T_P}
N_P(T,\mathcal C_P;\tau_P,\delta)\\
&=\inf\Bigl\{M\in\mathbb N:\exists T\in\mathfrak T_P\text{ such that }
\forall e\in\mathcal C_P,\;
\Pr[\mathbf J_P(T,e;M)\preceq\tau_P]\ge1-\delta\Bigr\}.
\label{eq:uniform-minimax}
\end{align}
We use \(\inf\emptyset=\infty\). The quantifiers require one learner and one
common budget to work over the entire comparison class; the successful budget
may not depend on the environment. This is the fixed-budget quantity used in
the objective-separation proofs below.
\emph{When \(\kappa_P\) charges labeled examples, this is sample complexity;
when it charges completed episodes, it is episode complexity.}
}
All affine expressions involving \(N_P^\star\) use the usual extended-real
conventions. Comparison classes matter: unrelated transfer tasks, shared
representation tasks, and slowly drifting continual streams are different
statistical problems.

\noindent\emph{Description and purpose.} These definitions describe the
uniform declared-resource complexity of a fixed learner over a comparison
class, and then the minimax declared-resource complexity of the best admissible
learner on that class. POR reductions are evaluated by their consequences for
worst-case complexity in the stated accounting units on explicitly chosen
environment classes.

\subsection{Reduction between learning paradigms}

The central idea is that paradigm \(B\) reduces to paradigm \(A\) if any
algorithm that succeeds under \(A\) can be systematically transformed into an
algorithm that succeeds under \(B\), with controlled change in the success
threshold and declared resource budget.

{\color{black}
\paragraph{Causal simulation kernels.}
For a protocol \(P\), let \(\mathcal F_t^P\) be the operational filtration
generated by the learner-visible projection of the full execution through
protocol step \(t\), together with all learner and currently running compiler
private randomness drawn through that step. Thus \(\mathcal F_t^P\) is exactly
the information available to the running compiled procedure; it contains no
unrevealed environment-side latent coordinate. A compiler that
simulates \(A\) from \(B\) must implement, for each \(T_A\) and \(M\), a
sequence of measurable stochastic kernels
\[
\mathsf S^{A\leftarrow B}_{T_A,M,t}
\bigl(dz_{A,t}\mid z_{A,<t},z_{B,\le \sigma_t}\bigr),
\qquad t\ge 1,
\]
where \(z_{P,t}\) denotes a simulator-controlled increment of the augmented
full-execution record and \((\sigma_t)_{t\ge1}\) is an adapted nondecreasing
schedule of almost-surely finite stopping times for the filtration
\((\mathcal F_s^B)_{s\ge0}\), so
\(0\le\sigma_1\le\sigma_2\le\cdots<\infty\) almost surely. At an interaction
step, the kernel emits the next
part of the virtual learner-visible \(A\)-transcript and the associated actions
and private randomness using only the previous augmented \(A\)-history, the
learner-visible \(B\)-history revealed by time \(\sigma_t\), and fresh private
randomness. A terminal simulator-controlled increment records the output and
fresh auxiliary evaluation randomness, but not an unknown environment or latent
task identity. Environment-generated response and latent-evaluation coordinates
are supplied by the corresponding protocol side of the joint operational law;
they cannot influence simulated learner actions except through the
learner-visible responses already revealed.
Equivalently, under the induced coupling, causality requires the virtual
operational filtration to satisfy
\[
\mathcal F_t^{A,\mathrm{virtual}}
\subseteq
\mathcal F_{\sigma_t}^{B}
\qquad\text{for every }t,
\]
where the right-hand side includes the running compiler's private randomness as
defined above.
The stopping rules and kernels may depend on \(T_A\) and \(M\), but have no
environment argument and are the same for every \(e\); the environment affects
simulator-controlled coordinates only through responses supplied by the
\(B\)-protocol. It may separately determine the external argument of
\(\mathbf J_P\) and the environment-side coordinates prescribed by the
operational protocols, neither of which is available to simulated learner
actions unless revealed through the visible transcript.

Running these kernels online with the \(B\)-protocol, the compiled learner, and
the environment-side coordinates prescribed by the embedded operational
protocol therefore induces a joint law of the two full execution records. We call this law a
\emph{protocol-valid simulation coupling} if its \(B\)-marginal is the
operational \(B\)-execution law and its virtual \(A\)-marginal is the
operational \(A\)-execution law in the embedded environment. Thus a simulation
coupling is generated by an environment-independent causal kernel implemented
by the compiler; it is not an arbitrary coupling chosen separately for each
environment. The environment-side coordinates are fixed or sampled by the
stated protocols as part of this joint law, not emitted by the compiler and not
recoupled after the executions. All other randomness shared across the
executions must arise from the online construction; no additional
environment-dependent recoupling is allowed at evaluation time. Event inclusion
below is understood under this induced law.
Equality of coupled performance variables means equality on this law; when only
equality in distribution is asserted, we write \(\overset{d}{=}\).
}

\begin{definition}[Protocol--objective--resource reduction]
Let \(A\) and \(B\) be learning paradigms over the common hypothesis family
\(\mathcal H\), with accuracy regimes
\(\mathscr T_A\subseteq\mathbb R^{d_A}\) and
\(\mathscr T_B\subseteq\mathbb R^{d_B}\). We say that \(B\)
\emph{POR-reduces} to \(A\) relative to
\((\mathscr T_A,\mathscr T_B)\), written
\[
B \preceq_{\mathrm{POR}} A,
\]
if there exist:
\begin{enumerate}
    \item an environment embedding
    \[
    \Phi : \mathcal E_B \to \mathcal E_A,
    \]
    \item a compiler
    \[
    \mathfrak C : \mathfrak T_A \to \mathfrak T_B,
    \]
    \item a threshold transfer map
    \[
    \Gamma : \mathscr T_A \to \mathscr T_B,
    \]
    \item integer overhead parameters \(a\in\mathbb N\) and
    \(b\in\mathbb N\), with \(a\ge 1\),
\end{enumerate}
{\color{black}
such that, for every learner \(T_A \in \mathfrak T_A\) and every budget
\(M \in \mathbb N\), the compiler implements an
environment-independent causal simulator
\(\mathsf S^{A\leftarrow B}_{T_A,M}\) as above. For every environment
\(e \in \mathcal E_B\), running this simulator with the \(B\)-protocol induces
a protocol-valid simulation coupling of
\[
Z_A:=Z_{A,T_A,\Phi(e)}^M,
\qquad
Z_B:=Z_{B,\mathfrak C(T_A),e}^{aM+b}
\]
with the following properties:
\begin{enumerate}
    \item \textbf{admissibility and protocol validity:}
    \(\mathfrak C(T_A)\in\mathfrak T_B\), and both coupled executions have
    their stated protocol marginals under the simulator-induced law;
    \item \textbf{resource calibration:}
    \[
    \kappa_A(Z_A)\le M,
    \qquad
    \kappa_B(Z_B)\le aM+b
    \quad\text{almost surely};
    \]
    \item \textbf{objective transfer:} for every
    \(\tau_A\in\mathscr T_A\),
    \[
    \{\mathbf J_A(T_A,\Phi(e);M)\preceq\tau_A\}
    \subseteq
    \{\mathbf J_B(\mathfrak C(T_A),e;aM+b)
       \preceq\Gamma(\tau_A)\}
    \]
    up to a null event under the simulation coupling.
\end{enumerate}
}
\end{definition}

\noindent\emph{Description and purpose.} This definition describes when one
learning paradigm can be simulated through another: environments are embedded,
learners are compiled, thresholds are translated, and declared resource budgets are
calibrated. The coupling makes protocol simulation and charged cost
checkable rather than leaving them implicit in a marginal success statement.
The reduction therefore has four pieces of data, while the induced coupling has
three checks: its first check jointly certifies compiler admissibility and both
protocol marginals.
We need this definition because informal claims that one paradigm
contains another, or that a method transfers between paradigms, are ambiguous
unless these four data components and the three coupling checks are explicit.

When the regimes are clear from context, we omit them from the notation. Exact
reductions, hardness notation, and the transfer theorems below are always
understood relative to the same fixed regimes. The word ``embedding'' denotes
the environment map used by the reduction; injectivity is not required unless
explicitly stated.

\begin{remark}
The compiler \(\mathfrak C\) is required to transform \emph{any} successful
learner for \(A\) into a learner for \(B\). This uniformity is what makes the
reduction useful for proving hardness. It is not enough to show that some
algorithm solves \(A\) and some possibly unrelated algorithm solves \(B\).
The present definition is extensional at the declared-resource-complexity level; an
efficient method-transfer claim requires a separately stated computational
bound on the compiler.
\end{remark}

\begin{remark}[Why the accuracy regime matters]
    The definition packages four nontrivial pieces of reduction data: an embedding of
environments, a simulation of admissible interaction, a threshold transfer map,
and a calibrated declared-resource overhead. In applications, most proposed hardness
comparisons fail because at least one of these ingredients is unavailable. The
accuracy regimes are part of these obligations. Without them, one could map
every target threshold to an uninformatively loose threshold and obtain a
formally correct but meaningless comparison. Even with regimes, the image
of \(\Gamma\) must exclude thresholds that are uniformly achievable at zero
cost.
\end{remark}

\begin{definition}[Exact POR-reduction]
A POR-reduction is called \emph{exact} if \(a=1\) and \(b=0\). In this case we write
\[
B \preceq_{\mathrm{ex}} A.
\]
Thus, exact reductions preserve the declared resource budget exactly.
\end{definition}

\noindent\emph{Description and purpose.} This definition describes the
zero-overhead case of a POR reduction, where the simulation preserves the
declared resource budget exactly. We need it to distinguish true
special-case containment from reductions that require additional
charged units through an affine overhead.

\begin{definition}[Relative hardness]
We say that \(A\) is at least as hard as \(B\), written
\[
A \succeq_{\mathrm{POR}} B,
\]
if
\[
B \preceq_{\mathrm{POR}} A.
\]
Similarly, \(A \succeq_{\mathrm{ex}} B\) means \(B \preceq_{\mathrm{ex}} A\).
\end{definition}

\noindent\emph{Description and purpose.} This definition describes the hardness
ordering induced by POR reductions: \(A\) is at least as hard as \(B\) when a
solver for \(A\) can be used to solve \(B\). We need this reverse-direction
notation because the reduction relation points from the simulated problem to
the solver problem, while hardness comparisons are usually read in the
opposite direction.

%% file: paper/main_results.tex
\section{Structural Results of the Framework}
\label{sec:structural-results}

The preliminary setup of Section~\ref{sec:preliminaries-framework} identifies
the basic objects that define a learning paradigm and the POR-reduction notion
used to compare paradigms. We now prove the main structural consequences
of that definition: reductions transfer upper bounds in the declared accounting
units, transfer lower bounds in the reverse direction, and generate
obstructions to proposed comparisons. Under the default labeled-example cost,
these are sample-complexity statements.

\subsection{Declared-resource-complexity transfer}

The first result formalizes a useful budget-calibration step for
POR-reductions.

\begin{proposition}[Budget calibration]
\label{prop:budget-calibration}
Fix accuracy regimes \(\mathscr T_A\) and \(\mathscr T_B\), and suppose the
reduced paradigm \(B\) satisfies the budget-capping closure in
Assumption~\ref{ass:budget-capping}. Suppose there exist an environment
embedding \(\Phi\), a pre-compiler
\(\mathfrak C_0:\mathfrak T_A\to\mathfrak T_B\), a threshold transfer map
\(\Gamma:\mathscr T_A\to\mathscr T_B\), and a budget-conversion map
\(\psi:\mathbb N\to\mathbb N\) with the following property. For every
\(T_A\in\mathfrak T_A\) and \(M\in\mathbb N\), the pre-compiler implements
an environment-independent causal simulator
\(\mathsf S^{0,A\leftarrow B}_{T_A,M}\). For every
\(e\in\mathcal E_B\), running this simulator with internal \(B\)-budget
\(\psi(M)\) induces a protocol-valid simulation coupling of
\[
Z_A:=Z_{A,T_A,\Phi(e)}^M
\quad\text{and}\quad
Z_B^0:=Z_{B,\mathfrak C_0(T_A),e}^{\psi(M)}
\]
such that \(\kappa_B(Z_B^0)\le\psi(M)\) almost surely and, for every
\(\tau_A\in\mathscr T_A\),
\[
\{\mathbf J_A(T_A,\Phi(e);M)\preceq\tau_A\}
\subseteq
\{\mathbf J_B(\mathfrak C_0(T_A),e;\psi(M))
  \preceq\Gamma(\tau_A)\}
\]
up to a null event under that simulator-induced coupling.
If there exist integer overhead parameters \(a\in\mathbb N\) and
\(b\in\mathbb N\), with \(a\ge 1\), such that
\[
\psi(M)\le aM+b
\qquad \text{for all } M\in\mathbb N,
\]
then \(B \preceq_{\mathrm{POR}} A\) with overhead parameters \(a,b\).
Budget capping promotes the same causal simulation from internal cost
\(\psi(M)\) to external cost \(aM+b\).
\end{proposition}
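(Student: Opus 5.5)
The plan is to build the required reduction directly: keep \(\Phi\) and \(\Gamma\), and obtain the compiler by budget-capping the pre-compiler. For each \(T_A\in\mathfrak T_A\), set
\[
\mathfrak C(T_A):=\mathfrak C_0(T_A)^{q},\qquad q(M'):=\psi\bigl(\lfloor (M'-b)/a\rfloor\bigr)\ \text{when } M'\ge b,
\]
where \(M'\) denotes the external budget. For \(M'=aM+b\) this gives \(q(M')=\psi(M)\le aM+b=M'\). For the remaining external budgets \(M'\) I will choose \(q\) by the same formula when \(M'\ge b\) and set \(q(M')=0\) when \(M'<b\), so that \(q(M')\le M'\) everywhere. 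Setting \(\tilde M:=\lfloor (M'-b)/a\rfloor\), we get \(\psi(\tilde M)\le a\tilde M+b\le M'\), which is exactly the inequality used in the hypothesis.

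One subtlety is that the pre-compiler's simulator depends on \(M\), while the capped learner sees only \(M'\). I will handle this by letting \(\mathfrak C(T_A)\), under external budget \(M'\), run \(\mathfrak C_0(T_A)\) with its \(\tilde M\)-indexed simulator. Since \(a\ge1\), the map \(M\mapsto aM+b\) is injective, so at \(M'=aM+b\) we recover \(\tilde M=M\). Admissibility then follows from Assumption~\ref{ass:budget-capping}: \(\mathfrak C_0(T_A)\in\mathfrak T_B\) and \(q(M')\le M'\) imply \(\mathfrak C(T_A)\in\mathfrak T_B\).

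Next I transport the coupling. Fix \(T_A\), \(M\), and \(e\in\mathcal E_B\). By the hypothesis, the simulator \(\mathsf S^{0,A\leftarrow B}_{T_A,M}\) run at internal budget \(\psi(M)\) gives a protocol-valid coupling of \(Z_A\) and \(Z_B^0\). Assumption~\ref{ass:budget-capping} supplies a canonical, environment-independent, causal cap-wrapper coupling under which
\[
Z_{B,\mathfrak C(T_A),e}^{aM+b}=Z_{B,\mathfrak C_0(T_A),e}^{\psi(M)}
\]
almost surely, together with the corresponding equality of \(\mathbf J_B\). I will compose the two couplings. The kernels of the compiled simulator are those of \(\mathsf S^{0}\), and the stopping schedule \(\sigma_t\) is unchanged. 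The wrapper only declines charged events after cost \(\psi(M)\), which \(Z_B^0\) never exceeds in any case. The composed simulator therefore stays environment-independent and causal, the filtration inclusion \(\mathcal F_t^{A,\mathrm{virtual}}\subseteq\mathcal F_{\sigma_t}^B\) carries over, and both marginals remain the operational laws. This gives protocol validity.

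Resource calibration is then immediate. We have \(\kappa_A(Z_A)\le M\) by the operational budget rule, and \(\kappa_B(Z_B)\le q(aM+b)=\psi(M)\le aM+b\) almost surely. Objective transfer also follows directly: by the a.s. equality \(\mathbf J_B(\mathfrak C(T_A),e;aM+b)=\mathbf J_B(\mathfrak C_0(T_A),e;\psi(M))\), the assumed event inclusion into \(\{\mathbf J_B(\mathfrak C_0(T_A),e;\psi(M))\preceq\Gamma(\tau_A)\}\) becomes an inclusion into \(\{\mathbf J_B(\mathfrak C(T_A),e;aM+b)\preceq\Gamma(\tau_A)\}\), up to the union of two null events.

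The main obstacle is the coupling-composition step. The cap-wrapper equality holds under the wrapper's own canonical coupling, while the objective inclusion holds under the simulator coupling, so I must show that these are one and the same joint law. I plan to argue that the wrapper adds no randomness and does not depend on the environment: it is a deterministic stopping modification of the same online process. The simulator-induced law for \(\mathfrak C(T_A)\) is therefore the pushforward of the law for \(\mathfrak C_0(T_A)\) under the identity on the active record, and all the almost-sure statements hold on a single probability space.
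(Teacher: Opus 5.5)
Your proposal is correct and follows essentially the same route as the paper's proof. Both use the same cap \(q(M')=\psi(\lfloor (M'-b)/a\rfloor)\) for \(M'\ge b\) and \(q(M')=0\) otherwise (the paper writes \(\lfloor (M'-b)/a\rfloor\) as \(\max\{m:am+b\le M'\}\)), get admissibility of \(\mathfrak C_0(T_A)^q\) from budget-capping closure, derive \(q(aM+b)=\psi(M)\) from \(a\ge1\), and note that the deterministic, environment-independent cap wrapper composes with the pre-compiler's simulator on a single joint law, which yields resource calibration and objective transfer; your remark that the \(M\)-indexed simulator is recoverable from the external budget \(aM+b\) makes explicit a point the paper leaves implicit.
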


\noindent\emph{Description and significance.} This result shows that a
simulation whose internal charged-resource use is bounded by \(aM+b\) can be converted
into a valid POR-reduction by capping the reduced learner's budget. It is useful
because reduction proofs can first build a natural simulation and then certify
its resource overhead separately.

\begin{remark}[Episode-to-example conversion]
Proposition~\ref{prop:budget-calibration} is especially useful when two
paradigms count different atomic units. If one unit of paradigm \(A\) can always
be implemented using at most \(q\) units of paradigm \(B\) on the embedded
class, then \(\psi(M)=qM\) and the resulting POR-reduction has multiplicative
overhead \(a=q\), \(b=0\).
\end{remark}

{\color{black}
\begin{proposition}[Episode-to-example accounting can be genuinely non-exact]
\label{prop:episode-example-accounting}
Fix support and query sizes \(s,u\in\mathbb N\) with \(s,u\ge1\), and set
\(r:=s+u\). Consider two versions of the same episodic meta-learning
experiment, with identical environments, raw full-execution law, output
interface, and performance functional. Their common learner class is closed
under the identity wrapper between the two accounting interfaces, and
performance is a measurable function of the shared raw execution and output;
only the accounting/stopping rule changes. Equip both paradigms with the
singleton accuracy regime \(\{0\}\). In
\(P_{\mathrm{epi}}^{s,u}\), a completed support/query episode costs one unit;
in \(P_{\mathrm{ex}}^{s,u}\), every labeled support or query example costs one
unit. Both protocols expose only completed episodes. Then
\[
P_{\mathrm{ex}}^{s,u}
\preceq_{\mathrm{POR}}
P_{\mathrm{epi}}^{s,u}
\]
with overhead \((a,b)=(r,0)\). Moreover, exactness can fail under the canonical
common learner class of all randomized causal policies from completed-episode
transcripts to \(\mathcal H\): there is a two-environment, zero-one-loss
comparison class \(\mathcal C\) on which, for
every \(\delta\in(0,1/2)\),
\[
N_{P_{\mathrm{epi}}^{s,u}}^\star(\mathcal C;0,\delta)=1,
\qquad
N_{P_{\mathrm{ex}}^{s,u}}^\star(\mathcal C;0,\delta)=r,
\]
and there is no exact POR-reduction from the example-accounted restriction on
\(\mathcal C\) to the episode-accounted restriction on \(\mathcal C\).
Consequently some nonzero overhead is necessary on this restriction. The
statement does not claim that \((r,0)\) is a minimal affine-overhead pair.
\end{proposition}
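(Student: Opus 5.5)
The proof has three parts: the reduction with overhead \((r,0)\), a separating comparison class, and an obstruction via the complexity-domination theorem.

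\emph{Reduction.} We use Proposition~\ref{prop:budget-calibration}. Take \(\Phi\) to be the identity on environments, \(\Gamma\) the identity on \(\{0\}\), and the pre-compiler \(\mathfrak C_0\) the identity wrapper. The common learner class is closed under this wrapper by hypothesis. Set \(\psi(M)=rM\).

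The simulator passes each completed episode revealed by the example-accounted protocol straight to \(T_A\). Since both protocols expose only completed episodes and share the raw execution law, the kernels emit exactly the same visible transcript. They use only past visible history and no environment argument, so the coupling is protocol-valid.

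It remains to check two things.
\begin{itemize}
\item \emph{Cost.} Under \(P_{\mathrm{ex}}^{s,u}\) at budget \(rM\), the stopping rule admits an episode only if its \(r\) charged examples keep the prefix cost at most \(rM\). Hence at most \(M\) complete episodes are admitted. These are exactly the episodes \(T_A\) would receive under episode budget \(M\), and they cost \(rM\). Budget capping (Assumption~\ref{ass:budget-capping}) handles any slack.
\item \emph{Objective transfer.} Performance is a measurable function of the shared raw execution and output. The two coupled records therefore coincide, which gives objective transfer with equality.
\end{itemize}
Proposition~\ref{prop:budget-calibration} then yields the claim with \((a,b)=(r,0)\).

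\emph{Separation.} Take \(\mathcal X=\{x_0\}\) and \(\mathcal Y=\{0,1\}\) with zero-one loss. Let \(\mathcal H\) contain the two constant hypotheses. Let \(\mathcal C=\{e_0,e_1\}\), where \(e_y\) is the task distribution concentrated on the deterministic task labeling \(x_0\) by \(y\). The performance is the risk of the output on that task, and the threshold is \(0\).

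\emph{Upper bound \(N_{\mathrm{epi}}^\star\le1\).} With one episode the learner sees a label and outputs the matching constant. This succeeds with probability \(1\).

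\emph{Lower bound \(N_{\mathrm{epi}}^\star\ge1\).} At budget \(0\) nothing is revealed, so the output law is the same under both environments. Writing \(p\) for the probability of outputting \(h_0\), the learner succeeds with probability \(p\) under \(e_0\) and \(1-p\) under \(e_1\). One of these is at most \(1/2<1-\delta\), so budget \(0\) fails.

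\emph{Example accounting, \(N_{\mathrm{ex}}^\star=r\).} At any budget \(M<r\), no complete episode fits, so the visible transcript is empty and the same argument applies. At budget \(r\), one episode is revealed and the learner succeeds. The "only completed episodes exposed" clause is what makes partial budgets useless.

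\emph{Obstruction.} Suppose there were an exact reduction from the example-accounted restriction on \(\mathcal C\) to the episode-accounted restriction on \(\mathcal C\). The route is through Theorem~\ref{thm:reduction-domination}, which is stated later in the paper but is needed here; if it cannot be cited, its content for exact reductions must be reproved directly from the definition.

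The argument runs as follows. Take the learner \(T_A\) that succeeds at episode budget \(1\) on all of \(\Phi(\mathcal C)\subseteq\mathcal C\). Its compiled learner \(\mathfrak C(T_A)\) then succeeds at example budget \(1\) on \(\mathcal C\), at threshold \(\Gamma(0)=0\), by objective transfer. This gives \(N_{\mathrm{ex}}^\star(\mathcal C;0,\delta)\le1<r\), contradicting the computed value \(r\). Hence some nonzero overhead is necessary. Nothing here bounds the minimal pair, consistent with the statement.

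\emph{Main obstacle.} The delicate step is the measure-theoretic verification that the identity-wrapper simulation is a protocol-valid causal coupling with an exact cost bound. In particular, one must argue that example-level stopping never admits a partial episode. The fix is to read the stopping rule as acting on charged protocol events, where an episode is revealed atomically together with its \(r\) charges.
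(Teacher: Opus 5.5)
Your proposal is correct and follows the paper's proof essentially step for step: the identity embedding, threshold map and wrapper with $\kappa_{\mathrm{ex}}=r\,\kappa_{\mathrm{epi}}$; the same single-input two-point class $\{\delta_{D_0},\delta_{D_1}\}$; and the obstruction via exact domination together with $\Phi(\mathcal C)\subseteq\mathcal C$. The one difference is that you route the reduction through Proposition~\ref{prop:budget-calibration}, which requires Assumption~\ref{ass:budget-capping}, a hypothesis not stated in the proposition; since $\psi(M)=rM$ equals $aM+b$ exactly, no capping actually occurs, and your direct cost and objective checks at budget $rM$ already verify the POR definition, as the paper does directly.
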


\noindent\emph{Worked interpretation.}
At episode budget \(M\), the solver sees \(M\) complete episodes. Reproducing
the identical transcript when examples are charged individually costs exactly
\((s+u)M\). For instance, five support and fifteen query examples per episode
give the explicit conversion \(M\mapsto20M\). The identity environment and
threshold maps, the identity learner wrapper, equality of the coupled
transcripts, and this cost calculation provide the four pieces of reduction
data and verify all three coupling checks. The
detailed proof is in Appendix~\ref{app:proof-episode-accounting}.
}

\begin{theorem}[Reduction implies declared-resource-complexity domination]
\label{thm:reduction-domination}
Fix accuracy regimes \(\mathscr T_A\) and \(\mathscr T_B\). Suppose
\(B \preceq_{\mathrm{POR}} A\) relative to these regimes, with embedding
\(\Phi\), threshold map \(\Gamma\), and overhead parameters \(a,b\). Then for
every comparison class \(\mathcal C_B \subseteq \mathcal E_B\), every threshold
\(\tau_A \in \mathscr T_A\), and every confidence level \(\delta \in (0,1)\),
\[
N_B^\star(\mathcal C_B;\Gamma(\tau_A),\delta)
\;\le\;
a\,N_A^\star(\Phi(\mathcal C_B);\tau_A,\delta)+b,
\]
where
\[
\Phi(\mathcal C_B):=\{\Phi(e):e\in\mathcal C_B\}.
\]
\end{theorem}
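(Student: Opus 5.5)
The argument is a direct witness-transfer. If \(N_A^\star(\Phi(\mathcal C_B);\tau_A,\delta)=\infty\), the right-hand side is \(\infty\) under the extended-real conventions and there is nothing to prove. Otherwise the infimum in \eqref{eq:uniform-minimax} is over a nonempty subset of \(\mathbb N\), so it is attained. Let \(M:=N_A^\star(\Phi(\mathcal C_B);\tau_A,\delta)\). By attainment, fix a learner \(T_A\in\mathfrak T_A\) such that
\[
\Pr[\mathbf J_A(T_A,e';M)\preceq\tau_A]\ge 1-\delta
\qquad\text{for every } e'\in\Phi(\mathcal C_B).
\]
It is important to use attainment here, because the minimax definition requires a single learner at a single common budget. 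Taking an arbitrary near-optimal pair is then unnecessary.

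Next, apply the reduction data to this fixed pair \((T_A,M)\). The candidate \(B\)-learner is \(\mathfrak C(T_A)\), which lies in \(\mathfrak T_B\) by the admissibility check. The candidate budget is \(aM+b\). Fix any \(e\in\mathcal C_B\). The compiler's environment-independent causal simulator induces a protocol-valid simulation coupling of \(Z_A=Z^M_{A,T_A,\Phi(e)}\) and \(Z_B=Z^{aM+b}_{B,\mathfrak C(T_A),e}\). On this joint law the objective-transfer check gives the event inclusion
\[
\{\mathbf J_A(T_A,\Phi(e);M)\preceq\tau_A\}\subseteq\{\mathbf J_B(\mathfrak C(T_A),e;aM+b)\preceq\Gamma(\tau_A)\}
\]
up to a null set, so probabilities are monotone under the coupling. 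Since the coupling has the correct marginals, the \(A\)-event probability equals the operational one, which is at least \(1-\delta\) because \(\Phi(e)\in\Phi(\mathcal C_B)\). Likewise the \(B\)-event probability under the coupling equals \(\Pr[\mathbf J_B(\mathfrak C(T_A),e;aM+b)\preceq\Gamma(\tau_A)]\) under the operational \(B\)-law. Hence this latter probability is at least \(1-\delta\).

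The learner \(\mathfrak C(T_A)\) and the budget \(aM+b\) do not depend on \(e\). Therefore they witness membership of \(aM+b\) in the set defining \(N_B^\star(\mathcal C_B;\Gamma(\tau_A),\delta)\), and the bound follows. Resource calibration is not needed beyond guaranteeing that the executions are valid budget-\(M\) and budget-\((aM+b)\) runs.

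The only delicate step is the transfer of probabilities through the coupling. Two facts must be invoked explicitly. First, the coupling's marginals are the operational execution laws, so the events' probabilities are computed from the genuine \(Z_A\) and \(Z_B\). Second, the relevant events are measurable functions of these records, since \(\mathbf J_P\) is a measurable map of the environment and full record. The uniformity of the simulator in \(e\) matters only so that a single \(\mathfrak C(T_A)\) works for the whole class. The definition of the compiler already provides this uniformity.
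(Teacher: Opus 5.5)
Your proposal is correct and follows essentially the same route as the paper's proof: you dispose of the infinite case, use attainment of the infimum over a nonempty subset of \(\mathbb N\) to fix a single learner \(T_A\) at the common budget, and push the success probability through the objective-transfer inclusion under the protocol-valid coupling to certify \(\mathfrak C(T_A)\) at budget \(aM+b\) uniformly over \(\mathcal C_B\). Your explicit remarks about the coupling's marginals and the measurability of the success events spell out what the paper's proof uses implicitly.
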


\noindent\emph{Description and significance.} The theorem states that a
POR-reduction from \(B\) to \(A\) forces the minimax complexity, measured in
each paradigm's declared accounting unit, of the embedded \(B\)-class to be
bounded by that of \(A\), up to the reduction overhead. This is the central
soundness guarantee of the framework: a valid reduction is a simulation
certificate whose definition entails worst-case declared-resource-complexity
domination at one common budget over the comparison class. When both accounting
rules charge labeled examples, this is the usual sample-complexity domination
statement.

\begin{corollary}[Lower-bound transfer]
\label{cor:lower-bound-transfer}
Under the assumptions of Theorem~\ref{thm:reduction-domination}, for every
comparison class \(\mathcal C_B \subseteq \mathcal E_B\), every threshold
\(\tau_A\in\mathscr T_A\), and every \(\delta\in(0,1)\),
\[
N_A^\star(\Phi(\mathcal C_B);\tau_A,\delta)
\;\ge\;
\frac{N_B^\star(\mathcal C_B;\Gamma(\tau_A),\delta)-b}{a}.
\]
Hence, any lower bound for \(B\) induces a lower bound for \(A\).
\end{corollary}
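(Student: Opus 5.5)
The corollary is a direct rearrangement of Theorem~\ref{thm:reduction-domination}, so the plan is to invoke that theorem and solve the resulting affine inequality for \(N_A^\star\). Fix \(\mathcal C_B\subseteq\mathcal E_B\), \(\tau_A\in\mathscr T_A\), and \(\delta\in(0,1)\). Abbreviate
\[
x:=N_A^\star(\Phi(\mathcal C_B);\tau_A,\delta)\in\mathbb N\cup\{\infty\},
\qquad
y:=N_B^\star(\mathcal C_B;\Gamma(\tau_A),\delta)\in\mathbb N\cup\{\infty\}.
\]
Theorem~\ref{thm:reduction-domination} gives \(y\le ax+b\), under the extended-real conventions stated after \eqref{eq:uniform-minimax}.

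The key step is to divide by \(a\) after subtracting \(b\). This is legitimate because \(a\ge1\) is a positive integer and \(b\in\mathbb N\) is finite, so \(t\mapsto(t-b)/a\) is a strictly increasing map on \(\mathbb R\cup\{\infty\}\). For finite \(x\), we have \(ax+b<\infty\), which forces \(y<\infty\). Then \(y\le ax+b\) rearranges to \((y-b)/a\le x\), which is the claimed inequality.

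The only point needing care, and the nearest thing to an obstacle, is the infinite cases. If \(x=\infty\), the inequality \(x\ge(y-b)/a\) holds trivially whatever \(y\) is. If \(y=\infty\), the theorem forces \(ax+b=\infty\), hence \(x=\infty\). So the convention \((\infty-b)/a=\infty\) is consistent and the inequality holds in every case. The right-hand side may also be negative when \(y<b\); the bound then holds trivially because \(x\ge0\). I will note this but need not handle it separately.

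Finally, the interpretive clause follows by monotonicity. Suppose \(L\) is any lower bound for \(B\), so \(y\ge L\). Then \(x\ge(L-b)/a\), and this gives a lower bound for \(A\) on the embedded class \(\Phi(\mathcal C_B)\) at threshold \(\tau_A\).
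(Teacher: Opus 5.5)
Your proposal is correct and follows essentially the same route as the paper: invoke Theorem~\ref{thm:reduction-domination}, dispose of the case \(N_A^\star(\Phi(\mathcal C_B);\tau_A,\delta)=\infty\) trivially, and otherwise use finiteness of the right-hand side to get finiteness of \(N_B^\star\) before subtracting \(b\) and dividing by the positive integer \(a\). Your extra remarks on the case \(N_B^\star=\infty\) and on a possibly negative right-hand side are harmless additions.
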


\noindent\emph{Description and significance.} This corollary rewrites the main
domination inequality as a lower-bound transfer principle. Its significance is
that a hard embedded subproblem of \(B\) immediately yields a lower bound for
the richer or more demanding paradigm \(A\).

\begin{corollary}[Exact declared-resource-complexity domination]
\label{cor:exact-domination}
Under the assumptions of Theorem~\ref{thm:reduction-domination}, if
\(B \preceq_{\mathrm{ex}} A\), then for every comparison class
\(\mathcal C_B \subseteq \mathcal E_B\), every threshold
\(\tau_A\in\mathscr T_A\), and every \(\delta\in(0,1)\),
\[
N_B^\star(\mathcal C_B;\Gamma(\tau_A),\delta)
\le
N_A^\star(\Phi(\mathcal C_B);\tau_A,\delta).
\]
\end{corollary}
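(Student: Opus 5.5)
This is the special case \(a=1\), \(b=0\) of Theorem~\ref{thm:reduction-domination}, so the plan is to specialize that theorem rather than argue from scratch. By the definition of exact POR-reduction, \(B\preceq_{\mathrm{ex}} A\) means \(B\preceq_{\mathrm{POR}} A\) relative to the fixed regimes \((\mathscr T_A,\mathscr T_B)\), with some embedding \(\Phi\), compiler \(\mathfrak C\), threshold map \(\Gamma\), and overhead parameters \((a,b)=(1,0)\). All hypotheses of Theorem~\ref{thm:reduction-domination} therefore hold with these parameters. For any comparison class \(\mathcal C_B\subseteq\mathcal E_B\), threshold \(\tau_A\in\mathscr T_A\), and \(\delta\in(0,1)\), the theorem gives
\[
N_B^\star(\mathcal C_B;\Gamma(\tau_A),\delta)\le 1\cdot N_A^\star(\Phi(\mathcal C_B);\tau_A,\delta)+0 .
\]

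The only point needing care is the extended-real convention when \(N_A^\star=\infty\). Under the stated conventions, \(1\cdot\infty+0=\infty\), so the inequality holds trivially in that case. In the finite case the right-hand side is just \(N_A^\star(\Phi(\mathcal C_B);\tau_A,\delta)\). Either way we obtain the claimed bound.

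I expect no real obstacle. One check matters: the objects \(\Phi\) and \(\Gamma\) in the corollary must be those of the exact reduction. Since the corollary is stated ``under the assumptions of Theorem~\ref{thm:reduction-domination}'' with \(B\preceq_{\mathrm{ex}} A\), they are.

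As a sanity check, the direct argument is also immediate. Take a learner \(T_A\) that succeeds uniformly on \(\Phi(\mathcal C_B)\) at budget \(M\). Then \(\mathfrak C(T_A)\) runs at budget \(1\cdot M+0=M\), and by the event inclusion in the objective-transfer check it succeeds on every \(e\in\mathcal C_B\) with probability at least \(1-\delta\). Taking the infimum over \(M\) gives the bound.
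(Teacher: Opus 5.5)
Your proposal is correct and matches the paper's proof: exactness means the reduction has overhead parameters $a=1$, $b=0$, and substituting these into Theorem~\ref{thm:reduction-domination} gives the inequality directly. Your handling of the infinite case and your direct sanity-check argument are consistent with this but go beyond what the substitution requires.
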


\noindent\emph{Description and significance.} Exact reductions remove the
additive and multiplicative calibration terms from the comparison. Thus, when
one paradigm contains another as a genuine special case, the resulting minimax
declared-resource-complexity inequality is direct and has no budget loss.

\begin{corollary}[Declared-resource-complexity obstruction to reductions]
\label{cor:reduction-obstruction}
Fix candidate reduction data consisting of an environment embedding \(\Phi\),
a threshold map \(\Gamma:\mathscr T_A\to\mathscr T_B\), and overhead
parameters \(a,b\). If there exist a comparison class \(\mathcal C_B\), a
threshold \(\tau_A\in\mathscr T_A\), and a confidence level
\(\delta\in(0,1)\) such that
\[
N_B^\star(\mathcal C_B;\Gamma(\tau_A),\delta)
>
a\,N_A^\star(\Phi(\mathcal C_B);\tau_A,\delta)+b,
\]
then no POR-reduction with this candidate data exists. In particular, if the
displayed inequality can be witnessed for every possible embedding and
threshold map, then \(B\not\preceq_{\mathrm{POR}} A\) for the specified
overhead parameters.
\end{corollary}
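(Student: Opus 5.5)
This is the contrapositive of Theorem~\ref{thm:reduction-domination}, and the plan is to argue by contradiction. Fix the candidate data \((\Phi,\Gamma,a,b)\), and suppose some POR-reduction \(B\preceq_{\mathrm{POR}}A\) uses exactly this embedding, threshold map and overhead parameters, with some compiler \(\mathfrak C\). Only the existence of the compiler is left open.

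Theorem~\ref{thm:reduction-domination} uses only the four pieces of reduction data and the three coupling checks, and its conclusion does not involve \(\mathfrak C\). So applying it to this hypothetical reduction, with the witnessing triple \((\mathcal C_B,\tau_A,\delta)\), gives
\[
N_B^\star(\mathcal C_B;\Gamma(\tau_A),\delta)\le a\,N_A^\star(\Phi(\mathcal C_B);\tau_A,\delta)+b.
\]
This directly contradicts the strict reverse inequality in the hypothesis, so no such reduction exists.

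The one step needing care is the extended-real bookkeeping. If \(N_A^\star(\Phi(\mathcal C_B);\tau_A,\delta)=\infty\), the right-hand side is \(\infty\) because \(a\ge1\), so the hypothesised strict inequality cannot hold and there is nothing to prove. If the right-hand side is finite, the two displayed inequalities are incompatible in \(\mathbb N\cup\{\infty\}\). In both cases the comparison is well defined under the stated conventions.

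For the ``in particular'' clause, quantify over all data with the given \((a,b)\). Any POR-reduction with overhead \((a,b)\) carries some embedding \(\Phi\) and some threshold map \(\Gamma\). If the inequality is witnessed for every such pair, the first part excludes each candidate, so \(B\not\preceq_{\mathrm{POR}}A\) with those overhead parameters.

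I expect no real obstacle: the substance lies in Theorem~\ref{thm:reduction-domination}. The only points to state explicitly are the infinite case and the fact that the compiler does not enter the conclusion of the theorem.
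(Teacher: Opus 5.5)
Your proposal is correct and matches the paper's proof: assume a reduction with the candidate data $(\Phi,\Gamma,a,b)$ exists, apply Theorem~\ref{thm:reduction-domination} to the witnessing triple $(\mathcal C_B,\tau_A,\delta)$, and obtain a contradiction with the strict inequality; the ``in particular'' clause follows by letting the witness range over every embedding and threshold map. Your extended-real remark is harmless but not needed, since a strict inequality and its non-strict reverse are incompatible in $\mathbb N\cup\{\infty\}$.
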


\noindent\emph{Description and significance.} This obstruction gives a way to
falsify proposed POR-reductions: if the declared-resource complexities violate the
inequality that a reduction would imply, that reduction cannot exist. It turns
complexity separations into diagnostic tools for checking informal
claims of containment.

\subsection{General structural properties}

The next propositions record the main structural consequences of the framework.

\begin{proposition}[Monotonicity in thresholds]
\label{prop:threshold-monotonicity}
Fix a learning paradigm \(P\), a comparison class \(\mathcal C_P\subseteq\mathcal E_P\), and \(\delta\in(0,1)\). If \(\tau_P,\tau_P' \in \mathbb R^{d_P}\) satisfy
\[
\tau_P \preceq \tau_P',
\]
then
\[
N_P^\star(\mathcal C_P;\tau_P,\delta)
\ge
N_P^\star(\mathcal C_P;\tau_P',\delta).
\]
\end{proposition}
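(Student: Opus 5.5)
The plan is a set-inclusion argument on the feasible budgets appearing in the definition of \(N_P^\star\) in \eqref{eq:uniform-minimax}. For a threshold \(\tau\) define
\[
\mathcal M(\tau):=\Bigl\{M\in\mathbb N:\exists T\in\mathfrak T_P\ \forall e\in\mathcal C_P,\ \Pr[\mathbf J_P(T,e;M)\preceq\tau]\ge1-\delta\Bigr\},
\]
so that \(N_P^\star(\mathcal C_P;\tau,\delta)=\inf\mathcal M(\tau)\) with \(\inf\emptyset=\infty\). I will show \(\mathcal M(\tau_P)\subseteq\mathcal M(\tau_P')\). The claimed inequality then follows, because the infimum over a larger subset of \(\mathbb N\) can only be smaller. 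This includes the convention that if \(\mathcal M(\tau_P)=\emptyset\), the left-hand side is \(\infty\) and the inequality holds trivially.

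The key pointwise step is the transitivity of the coordinatewise order \(\preceq\) on \(\mathbb R^{d_P}\). If \(\mathbf J_P(T,e;M)\preceq\tau_P\) and \(\tau_P\preceq\tau_P'\), then for every coordinate \(i\) we have \(\mathbf J_{P,i}\le\tau_{P,i}\le\tau'_{P,i}\). Hence, as events on the execution law of \(Z_{P,T,e}^M\),
\[
\{\mathbf J_P(T,e;M)\preceq\tau_P\}\subseteq\{\mathbf J_P(T,e;M)\preceq\tau_P'\}.
\]
Both events are measurable by the standing measurability assumption, so monotonicity of probability gives
\[
\Pr[\mathbf J_P(T,e;M)\preceq\tau_P']\ge\Pr[\mathbf J_P(T,e;M)\preceq\tau_P]\ge1-\delta.
\]

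Now fix \(M\in\mathcal M(\tau_P)\) with its witness learner \(T\). The same \(T\) and the same \(M\) witness \(M\in\mathcal M(\tau_P')\), because the displayed bound holds for every \(e\in\mathcal C_P\). This uses nothing about the accounting rule, the learner class, or budget capping, since neither the learner nor the budget changes and only the target event is enlarged.

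There is no real obstacle here. The only point needing care is keeping the uniform quantifier structure intact, so that a single learner at a single budget serves every environment. Taking the same witness \((T,M)\) ensures this, and the \(\inf\emptyset\) convention covers the degenerate case.
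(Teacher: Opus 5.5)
Your proof is correct and follows the paper's own argument. The paper likewise uses the event inclusion $\{\mathbf J_P(T,e;M)\preceq\tau_P\}\subseteq\{\mathbf J_P(T,e;M)\preceq\tau_P'\}$ to show that any learner--budget pair feasible uniformly at $\tau_P$ stays feasible at $\tau_P'$, and then compares the smallest feasible budgets; your set $\mathcal M(\tau)$ and your handling of the $\inf\emptyset=\infty$ case only make that final step more explicit.
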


\noindent\emph{Description and significance.} The proposition records that
weaker success requirements cannot increase minimax declared-resource complexity. This sanity check matters because POR comparisons depend on the chosen accuracy regime, and monotonicity ensures that threshold relaxations behave as expected.

\begin{proposition}[Monotonicity in comparison classes]
\label{prop:class-monotonicity}
Fix a learning paradigm \(P\), a threshold \(\tau_P\), and \(\delta\in(0,1)\). If
\[
\mathcal C_P \subseteq \mathcal C_P' \subseteq \mathcal E_P,
\]
then
\[
N_P^\star(\mathcal C_P;\tau_P,\delta)
\le
N_P^\star(\mathcal C_P';\tau_P,\delta).
\]
\end{proposition}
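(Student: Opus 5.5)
The plan is to argue directly from the definition of uniform minimax complexity in \eqref{eq:uniform-minimax}, comparing the sets of feasible budgets for the two classes. Write
\[
\mathcal M(\mathcal C):=\Bigl\{M\in\mathbb N:\exists T\in\mathfrak T_P\ \forall e\in\mathcal C,\ \Pr[\mathbf J_P(T,e;M)\preceq\tau_P]\ge1-\delta\Bigr\},
\]
so that \(N_P^\star(\mathcal C;\tau_P,\delta)=\inf\mathcal M(\mathcal C)\) with \(\inf\emptyset=\infty\). The whole statement then reduces to the set inclusion \(\mathcal M(\mathcal C_P')\subseteq\mathcal M(\mathcal C_P)\): the infimum over a larger subset of \(\mathbb N\cup\{\infty\}\)-valued budgets is no larger, which is exactly \(N_P^\star(\mathcal C_P;\tau_P,\delta)\le N_P^\star(\mathcal C_P';\tau_P,\delta)\).

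To prove the inclusion, take \(M\in\mathcal M(\mathcal C_P')\) with a witnessing learner \(T\in\mathfrak T_P\). The success condition is a universally quantified statement over environments, and for each \(e\in\mathcal C_P\subseteq\mathcal C_P'\) it holds because it holds for every element of \(\mathcal C_P'\). The same learner \(T\) and the same budget \(M\) therefore witness \(M\in\mathcal M(\mathcal C_P)\). No change of learner, budget, or coupling is needed, so neither Assumption~\ref{ass:budget-capping} nor any simulation structure enters.

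Finally, I will dispose of the edge cases with the extended-real conventions. If \(\mathcal M(\mathcal C_P')=\emptyset\), the right-hand side is \(\infty\) and the inequality is trivial. Nonemptiness of \(\mathcal C_P\) is assumed in the definition of a comparison class, so no vacuous-quantifier subtlety arises; even if it were empty, the inclusion argument would still hold. Alternatively, one could phrase the argument per learner, showing \(N_P(T,\mathcal C_P;\tau_P,\delta)\le N_P(T,\mathcal C_P';\tau_P,\delta)\) for each \(T\) and taking the infimum over \(T\); I will mention this as the equivalent first line of \eqref{eq:uniform-minimax}.

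There is no real obstacle here. The only point requiring care is to respect the uniform quantifier order — one learner and one budget for the whole class — which the inclusion argument preserves automatically.
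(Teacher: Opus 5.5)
Your proof is correct and follows essentially the same route as the paper: the paper also observes that any learner--budget pair feasible for \(\mathcal C_P'\) is feasible for the subclass \(\mathcal C_P\), so the infimum over the larger feasible set is no larger. Your explicit handling of the empty feasible set via \(\inf\emptyset=\infty\) is a harmless addition to that argument.
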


\noindent\emph{Description and significance.} The result states that enlarging
the comparison class cannot make a minimax learning problem easier. It is
important because claims about paradigm hardness are only meaningful relative
to the environment class over which uniform performance is required.

\begin{proposition}[Preorder structure for regime-equipped paradigms]
\label{prop:preorder}
Fix an accuracy regime for each paradigm under consideration. The relation
\(\succeq_{\mathrm{POR}}\) is a preorder on these regime-equipped paradigms.
In particular:
\begin{enumerate}
    \item it is reflexive;
    \item it is transitive.
\end{enumerate}
The same is true for \(\succeq_{\mathrm{ex}}\).
\end{proposition}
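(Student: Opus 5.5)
The plan is to verify reflexivity and transitivity directly from the definition of a POR reduction, building the reduction data explicitly in each case.

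\textbf{Reflexivity.} For a regime-equipped paradigm \(P\), I take
\[
\Phi=\mathrm{id}_{\mathcal E_P},\qquad \mathfrak C=\mathrm{id}_{\mathfrak T_P},\qquad \Gamma=\mathrm{id}_{\mathscr T_P},\qquad (a,b)=(1,0).
\]
The causal simulator is the trivial kernel: the virtual \(P\)-increment at step \(t\) copies the real increment, with \(\sigma_t=t\). This gives
\[
\mathcal F_t^{P,\mathrm{virtual}}=\mathcal F_t^{P},
\]
so causality holds. The induced law is the diagonal coupling \(Z_A=Z_B\), and both of its marginals are the operational law. Resource calibration is the almost-sure budget constraint \(\kappa_P(Z)\le M\). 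Objective transfer is equality of the events. This yields an exact reduction, so reflexivity holds for both \(\succeq_{\mathrm{POR}}\) and \(\succeq_{\mathrm{ex}}\).

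\textbf{Transitivity.} Suppose \(C\preceq_{\mathrm{POR}} B\) with data \((\Phi_1,\mathfrak C_1,\Gamma_1,a_1,b_1)\), and \(B\preceq_{\mathrm{POR}} A\) with data \((\Phi_2,\mathfrak C_2,\Gamma_2,a_2,b_2)\). I set
\[
\Phi=\Phi_2\circ\Phi_1,\qquad \mathfrak C=\mathfrak C_1\circ\mathfrak C_2,\qquad \Gamma=\Gamma_1\circ\Gamma_2.
\]
The composite \(\Gamma\) is well defined because \(\Gamma_2(\mathscr T_A)\subseteq\mathscr T_B\), which is the domain of \(\Gamma_1\). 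For the budgets, running \(T_A\) at \(M\) gives \(\mathfrak C_2(T_A)\) at \(a_2M+b_2\), and hence \(\mathfrak C(T_A)\) at \(a_1(a_2M+b_2)+b_1\). The composite overhead is therefore
\[
a=a_1a_2\ge1,\qquad b=a_1b_2+b_1\in\mathbb N.
\]
When both reductions are exact, this gives \((a,b)=(1,0)\), so the same argument covers \(\succeq_{\mathrm{ex}}\).

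Resource calibration and objective transfer then chain through the two couplings:
\[
\{\mathbf J_A\preceq\tau_A\}\subseteq\{\mathbf J_B\preceq\Gamma_2(\tau_A)\}\subseteq\{\mathbf J_C\preceq\Gamma_1(\Gamma_2(\tau_A))\}.
\]
The first inclusion uses the \(A\)–\(B\) coupling at \(\Phi_1(e)\). The second uses the \(B\)–\(C\) coupling applied to the learner \(\mathfrak C_2(T_A)\) at budget \(a_2M+b_2\). Each inclusion holds up to a null event, and a union of two null events is null.

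\textbf{Main obstacle: composing the simulators.} I must show that the composite is itself an environment-independent causal kernel whose induced law is a single protocol-valid coupling of \((Z_A,Z_C)\). The plan is to run the two kernels in cascade online. The \(B\)-simulator \(\mathsf S^{B\leftarrow C}\) generates virtual \(B\)-increments from the \(C\)-history. The \(A\)-simulator \(\mathsf S^{A\leftarrow B}\) consumes that virtual \(B\)-history as if it were real. The composed schedule is
\[
\sigma_t=\sigma^{(1)}_{\sigma^{(2)}_t}.
\]
This is an almost-surely finite stopping time for \((\mathcal F^C_s)\), because \(\sigma^{(2)}_t\) is measurable with respect to the virtual \(B\)-filtration, which is contained in the \(C\)-filtration at the corresponding time. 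The filtration inclusions compose in the same way:
\[
\mathcal F^{A,\mathrm{virtual}}_t\subseteq\mathcal F^{B,\mathrm{virtual}}_{\sigma^{(2)}_t}\subseteq\mathcal F^C_{\sigma_t}.
\]

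It remains to check protocol validity. The virtual \(B\)-marginal equals the operational \(B\)-law in \(\Phi_1(e)\), because the first coupling is protocol-valid. Since the second kernel does not depend on the environment, feeding it a history with the operational \(B\)-law produces the operational \(A\)-law in \(\Phi_2(\Phi_1(e))\). The resulting triple law has the required \((A,C)\) marginal, and the two almost-sure inclusions hold on it jointly.

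The delicate point is that the environment-side latent \(B\)-coordinates must be supplied consistently to both stages. They are generated once by the protocol in the joint law and passed to both stages, rather than re-sampled. These coordinates do not enter the learner-visible filtrations, so causality is unaffected.
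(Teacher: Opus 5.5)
Your proposal is correct and follows essentially the same route as the paper. Reflexivity uses identity data and identity kernels with overhead $(1,0)$. Transitivity uses the composed data $\Phi_2\circ\Phi_1$, $\mathfrak C_1\circ\mathfrak C_2$, $\Gamma_1\circ\Gamma_2$ with overhead $(a_1a_2,\,a_1b_2+b_1)$, a sequentially composed causal kernel justified by the time-change property, and the chained event inclusions on the single simulator-induced joint law; your explicit remark that the environment-side latent $B$-coordinates are generated once and shared by both stages treats a point the paper leaves implicit slightly more carefully.
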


\noindent\emph{Description and significance.} This proposition shows that the
POR hardness relation is reflexive and transitive once accuracy regimes are
fixed. The result justifies chaining reductions across paradigms and treating
the framework as an organizing structure rather than a collection of isolated
comparisons.

\begin{corollary}[Equivalence classes]
\label{cor:equivalence-classes}
For regime-equipped paradigms, define \(A \equiv_{\mathrm{POR}} B\) if
\(A \succeq_{\mathrm{POR}} B\) and \(B \succeq_{\mathrm{POR}} A\). Then
\(\equiv_{\mathrm{POR}}\) is an equivalence relation, and
\(\succeq_{\mathrm{POR}}\) induces a partial order on the equivalence classes.
\end{corollary}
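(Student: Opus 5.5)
The plan is to derive the corollary directly from the preorder structure of Proposition~\ref{prop:preorder}, using only the standard order-theoretic argument that the symmetric part of a preorder is an equivalence relation and that the preorder descends to a partial order on the quotient. No property of POR reductions beyond reflexivity and transitivity of \(\succeq_{\mathrm{POR}}\) on regime-equipped paradigms should be needed.

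\emph{Step 1: \(\equiv_{\mathrm{POR}}\) is an equivalence relation.}
\begin{itemize}
\item Reflexivity: \(A\succeq_{\mathrm{POR}}A\) holds by item~1 of Proposition~\ref{prop:preorder}, so \(A\equiv_{\mathrm{POR}}A\).
\item Symmetry: this is immediate because the definition of \(\equiv_{\mathrm{POR}}\) is a conjunction that is symmetric in \(A\) and \(B\).
\item Transitivity: suppose \(A\equiv_{\mathrm{POR}}B\) and \(B\equiv_{\mathrm{POR}}C\). From \(A\succeq_{\mathrm{POR}}B\succeq_{\mathrm{POR}}C\), item~2 of Proposition~\ref{prop:preorder} gives \(A\succeq_{\mathrm{POR}}C\). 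From \(C\succeq_{\mathrm{POR}}B\succeq_{\mathrm{POR}}A\), it gives \(C\succeq_{\mathrm{POR}}A\).
\end{itemize}
Throughout, the regimes attached to each paradigm are held fixed, as Proposition~\ref{prop:preorder} requires. The relation therefore lives on regime-equipped paradigms, and composed reductions are taken relative to the same fixed regimes.

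\emph{Step 2: the induced order on classes is well defined.} Define \([A]\succeq[B]\) iff \(A\succeq_{\mathrm{POR}}B\). We must check independence of representatives. If \(A'\equiv_{\mathrm{POR}}A\) and \(B'\equiv_{\mathrm{POR}}B\), then
\[
A'\succeq_{\mathrm{POR}}A\succeq_{\mathrm{POR}}B\succeq_{\mathrm{POR}}B',
\]
so two applications of transitivity give \(A'\succeq_{\mathrm{POR}}B'\).

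\emph{Step 3: the induced relation is a partial order.}
\begin{itemize}
\item Reflexivity and transitivity are inherited from the representatives via Proposition~\ref{prop:preorder}.
\item Antisymmetry: if \([A]\succeq[B]\) and \([B]\succeq[A]\), then \(A\equiv_{\mathrm{POR}}B\) by definition, so \([A]=[B]\).
\end{itemize}

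The only point requiring care, and the closest thing to an obstacle, is foundational rather than mathematical. Regime-equipped paradigms may form a proper class rather than a set. I would either note that the argument holds verbatim for any fixed set of paradigms under consideration, or phrase it as a statement about the relation on a chosen universe. I would also remark that the argument applies unchanged to \(\succeq_{\mathrm{ex}}\), since Proposition~\ref{prop:preorder} asserts the same preorder properties for it.
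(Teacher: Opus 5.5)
Your proposal is correct and follows essentially the same route as the paper. The equivalence-relation properties come from reflexivity and transitivity in Proposition~\ref{prop:preorder}, well-definedness uses the chain \(A'\succeq_{\mathrm{POR}}A\succeq_{\mathrm{POR}}B\succeq_{\mathrm{POR}}B'\), and antisymmetry holds because mutually comparable classes have POR-equivalent representatives (your remarks on proper classes and on \(\succeq_{\mathrm{ex}}\) are harmless extras not in the paper).
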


\noindent\emph{Description and significance.} The corollary says that mutually
reducible paradigms form equivalence classes, and that these classes are
partially ordered by POR hardness. This makes precise when two superficially
different learning setups should be treated as structurally equivalent for the
purposes of worst-case complexity in the declared accounting units.

\subsection{Generic sufficient conditions for hardness}

The next proposition gives several reusable proof templates. In practice, most cross-paradigm hardness results are instances of one of these patterns.

\begin{proposition}[Generic sufficient conditions]
\label{prop:generic-templates}
Let \(A\) and \(B\) be learning paradigms over the same hypothesis family, with
fixed accuracy regimes \(\mathscr T_A\) and \(\mathscr T_B\).
\begin{enumerate}
    \item \textbf{Special-case embedding.}
    Suppose there exist an embedding
    \(\Phi:\mathcal E_B\to\mathcal E_A\) and a compiler
    \(\mathfrak C:\mathfrak T_A\to\mathfrak T_B\) such that
    for every \(e,T,M\), \(\mathfrak C(T)\) implements an
    environment-independent causal, protocol-valid simulation coupling of the
    budget-\(M\) execution of \(T\) for \(A\) on \(\Phi(e)\) and the
    budget-\(M\) execution of \(\mathfrak C(T)\) for \(B\) on \(e\), with both
    declared costs at most \(M\) almost surely. Suppose also
    that there exists \(\Gamma:\mathscr T_A\to\mathscr T_B\) such that
    \[
    \{\mathbf J_A(T,\Phi(e);M)\preceq\tau_A\}
    \subseteq
    \{\mathbf J_B(\mathfrak C(T),e;M)\preceq \Gamma(\tau_A)\}
    \]
    under the protocol-valid simulation coupling, up to null events, for all \(e,T,M\), and all
    \(\tau_A\in\mathscr T_A\). Then
    \(A \succeq_{\mathrm{ex}} B\).

    \item \textbf{Objective strengthening.}
    Suppose \(A\) and \(B\) have the same environments, protocol, learner class, and resource accounting, and suppose there exists \(\Gamma:\mathscr T_A\to\mathscr T_B\) such that
    \[
    \{\mathbf J_A(T,e;M)\preceq\tau_A\}
    \subseteq
    \{\mathbf J_B(T,e;M)\preceq \Gamma(\tau_A)\}
    \]
    under the identity coupling, up to null events, for all \(e,T,M\), and all
    \(\tau_A\in\mathscr T_A\). Then
    \(A \succeq_{\mathrm{ex}} B\).

    \item \textbf{Information monotonicity.}
    Suppose \(A\) and \(B\) have the same environments, learner class,
    objective, resource accounting, and accuracy regime, but \(B\)'s protocol
    reveals at least as much side information as \(A\)'s protocol, and learners
    under \(B\) are free to ignore the extra information. Then
    \(A \succeq_{\mathrm{ex}} B\).

    \item \textbf{Resource monotonicity.}
    Suppose \(A\) and \(B\) have the same environments, protocol, objective,
    resource accounting, and accuracy regime, but
    \[
    \mathfrak T_A \subseteq \mathfrak T_B.
    \]
    Then \(A \succeq_{\mathrm{ex}} B\).
\end{enumerate}
\end{proposition}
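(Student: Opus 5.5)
The plan is to verify, for each template, the four pieces of reduction data and the three coupling checks of the POR-reduction definition with $a=1$, $b=0$. Item~1 is essentially the definition itself. Items~2--4 follow by exhibiting identity data and reducing to item~1.

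\textbf{Item~1 (special-case embedding).} Take the given $\Phi$, $\mathfrak C$, $\Gamma$ and set $(a,b)=(1,0)$, so that $aM+b=M$. The hypothesis supplies, for each $T$ and $M$, an environment-independent causal simulator. For each $e$ this simulator induces a protocol-valid coupling of $Z_A=Z^M_{A,T,\Phi(e)}$ and $Z_B=Z^M_{B,\mathfrak C(T),e}$. The three checks are then read off directly. Admissibility holds because $\mathfrak C$ maps into $\mathfrak T_B$ and the marginals are protocol-valid. Calibration holds because both declared costs are at most $M$. Objective transfer is the displayed event inclusion. Hence $B\preceq_{\mathrm{ex}}A$, that is, $A\succeq_{\mathrm{ex}}B$.

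\textbf{Item~2 (objective strengthening).} Take $\Phi=\mathrm{id}$ and $\mathfrak C=\mathrm{id}$; the latter is legitimate because the learner classes coincide. Use the identity simulator: every $B$-transcript increment is copied verbatim as the $A$-increment, with $\sigma_t=t$. This is causal and environment-independent, and $\mathcal F^{A,\mathrm{virtual}}_t=\mathcal F^B_t$. Since the protocols and environments are the same, both marginals equal the common operational law. The execution records coincide, so $\kappa_A(Z_A)=\kappa_B(Z_B)\le M$. Only the performance functionals differ, and the assumed inclusion is exactly objective transfer under this identity coupling. Item~1 then gives the conclusion.

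\textbf{Item~3 (information monotonicity).} Here $\Phi=\mathrm{id}$, and we set $\mathfrak C(T)$ to be $T$ run on the $B$-transcript with the side information deleted. The step I expect to need the most care is formalizing ``reveals at least as much side information.'' I would interpret it as the existence of a measurable, causal, environment-independent projection from the $B$-visible transcript to the $A$-visible transcript. This projection should commute with the operational laws, so that the projected $B$-execution has the $A$-execution law, and it should leave charged events, and hence $\kappa$, unchanged. The hypothesis that ``learners may ignore extra information'' gives $\mathfrak C(T)\in\mathfrak T_B$. The simulator kernel applies the projection with $\sigma_t=t$; causality holds because the projection is prefix-measurable. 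Under this coupling $T$'s actions, private randomness, and output are literally identical in the two records, and the charged costs agree. Because the objective is the same measurable function of the environment and the (shared) output and evaluation coordinates, $\mathbf J_A=\mathbf J_B$ almost surely. The inclusion therefore holds with $\Gamma=\mathrm{id}$, which is valid because the regimes coincide, and item~1 applies.

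\textbf{Item~4 (resource monotonicity).} Take $\Phi=\mathrm{id}$, $\Gamma=\mathrm{id}$, and let $\mathfrak C$ be the inclusion $\mathfrak T_A\hookrightarrow\mathfrak T_B$. Use the identity coupling exactly as in item~2. The executions are identical in law and on the coupling, costs coincide, and the performance variables are equal. Item~1 again finishes the proof.

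The only genuine obstacle is the measure-theoretic bookkeeping in item~3: one must check that the projection coupling has the correct $A$-marginal, including the latent evaluation coordinates. I would handle this by making the protocol-side coordinates common to both records, so that they are supplied once by the environment.
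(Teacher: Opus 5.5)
Your proposal is correct and follows the paper's proof. Item~1 is read off directly from the POR definition with $(a,b)=(1,0)$. Items~2--4 use the identity embedding together with, respectively, the identity compiler and the given $\Gamma$; the side-information-discarding compiler and $\Gamma=\mathrm{id}$; and the inclusion $\mathfrak T_A\hookrightarrow\mathfrak T_B$ with $\Gamma=\mathrm{id}$. Your explicit pathwise projection coupling in item~3, which yields $\mathbf J_A=\mathbf J_B$ almost surely, is slightly more careful than the paper's remark that the success events ``agree in distribution.'' The definition requires event inclusion on the simulator-induced coupling, not merely equality of laws.
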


\noindent\emph{Description and significance.} The proposition gives reusable
routes for proving POR-reductions through special-case embeddings, objective
strengthening, information monotonicity, and resource monotonicity. Its value is
practical: it converts common informal arguments about one paradigm containing
another into a short list of formal obligations.

\subsection{Strictness of objective-based hardness}
\label{sec:strictness}

Objective strengthening is not merely a formal inclusion: changing an objective
can strictly change minimax sample complexity even when the environment class,
protocol, learner class, and hypothesis family are fixed. The next proposition
gives a minimal example. In the two constructions below, \(h_T^M\) denotes the
predictor output by learner \(T\) after the \(M\)-sample transcript generated
by the stated i.i.d.\ protocol.

\begin{proposition}[Objective strengthening can be strict]
\label{prop:strict-objective}
Let \(\mathcal X=\{1,2\}\), \(\mathcal Y=\{0,1\}\), let \(\ell\) be the
zero-one loss, and let \(\mathcal H\) contain all functions
\(\mathcal X\to\mathcal Y\). For each
\(\sigma=(\sigma_1,\sigma_2)\in\{0,1\}^2\), let \(D_\sigma\) be the
distribution that draws \(x\) uniformly from \(\{1,2\}\) and returns the
deterministic label \(y=\sigma_x\). Consider two paradigms with the same
environment class \(\{D_\sigma:\sigma\in\{0,1\}^2\}\), the same i.i.d.
sampling protocol, and the same unrestricted learner class:
\[
\mathbf J_{\mathrm{one}}(T,D_\sigma;M)
=
\mathbf 1\{h_T^M(1)\ne \sigma_1\},
\]
and
\[
\mathbf J_{\mathrm{all}}(T,D_\sigma;M)
=
\bigl(
\mathbf 1\{h_T^M(1)\ne \sigma_1\},
\mathbf 1\{h_T^M(2)\ne \sigma_2\}
\bigr).
\]
Use the zero-threshold accuracy regimes
\(\mathscr T_{\mathrm{one}}=\{0\}\) and
\(\mathscr T_{\mathrm{all}}=\{(0,0)\}\), and let
\(\mathcal C=\{D_\sigma:\sigma\in\{0,1\}^2\}\). Then
\[
P_{\mathrm{all}}\succeq_{\mathrm{ex}} P_{\mathrm{one}},
\]
but the domination is strict: for every \(\delta\in(0,1/2)\),
\[
N_{P_{\mathrm{one}}}^\star(\mathcal C;0,\delta)
=
\left\lceil \log_2\frac{1}{2\delta}\right\rceil,
\qquad
N_{P_{\mathrm{all}}}^\star(\mathcal C;(0,0),\delta)
=
\left\lceil \log_2\frac{1}{\delta}\right\rceil .
\]
Consequently \(P_{\mathrm{all}}\) and \(P_{\mathrm{one}}\) are not equivalent
under exact POR-reductions relative to these zero-threshold regimes.
\end{proposition}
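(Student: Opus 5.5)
The proof has three parts: the exact reduction, the two minimax values, and non-equivalence.

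\emph{Exact reduction.} I will apply the objective-strengthening template, Proposition~\ref{prop:generic-templates}(2). Both paradigms share environments, protocol, learner class and accounting, so I take the identity coupling and \(\Gamma((0,0))=0\). The event \(\{\mathbf J_{\mathrm{all}}\preceq(0,0)\}\) forces its first coordinate to be \(0\), and that coordinate is exactly \(\mathbf J_{\mathrm{one}}\). Hence the event inclusion holds pointwise, giving \(P_{\mathrm{all}}\succeq_{\mathrm{ex}}P_{\mathrm{one}}\).

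\emph{Upper bounds.} After \(M\) i.i.d.\ draws, a point \(x\in\{1,2\}\) is unseen with probability \(2^{-M}\). Labels are deterministic, so the learner can copy the observed label whenever \(x\) was seen.
\begin{itemize}
\item For \(P_{\mathrm{one}}\), the learner predicts a fair coin at \(1\) when \(1\) is unseen. Its failure probability is \(2^{-M}\cdot\tfrac12=2^{-(M+1)}\) for every \(\sigma\).
\item For \(P_{\mathrm{all}}\), it guesses uniformly at unseen points. The failure probability is \(1-(1-2^{-M-1})^2\), since each point is independently unseen with probability \(2^{-M-1}\) per mismatch.
\end{itemize}
This second quantity does not obviously give \(\lceil\log_2(1/\delta)\rceil\). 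I will compute it carefully through the joint event. Both points unseen requires \(M=0\). For \(M\ge1\), exactly one point is unseen with probability \(2\cdot2^{-M}\), and then one coin guess is wrong with probability \(1/2\), so the failure probability is \(2^{-M}\). For \(M=0\) the failure probability is \(3/4\). Both formulas are uniform in \(\sigma\), which yields the claimed ceilings using \(\delta<1/2\).

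\emph{Matching lower bounds.} I will use a Bayesian argument with \(\sigma\) uniform on \(\{0,1\}^2\), since a uniform-over-\(\mathcal C\) guarantee implies the same guarantee on average.
\begin{itemize}
\item For \(P_{\mathrm{one}}\): conditional on the event that \(1\) is unseen, \(\sigma_1\) is independent of the transcript and of the learner's randomness. Any output is then wrong with probability \(1/2\), so the average failure is at least \(2^{-(M+1)}\). Requiring \(2^{-(M+1)}\le\delta\) gives \(M\ge\log_2\frac1{2\delta}\).
\item For \(P_{\mathrm{all}}\): the posterior of each unseen coordinate is uniform, so the failure probability is at least \(P(\text{some point unseen})/2\). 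For \(M\ge1\) this is at least \(2^{-M}\), and for \(M=0\) it is \(3/4>\delta\). Hence \(M\ge\log_2(1/\delta)\).
\end{itemize}
The hardest step is making the conditional-independence argument rigorous with randomized learners. I will condition on the sample \(x\)-sequence, use the fact that the learner's output is a measurable function of the transcript and its private randomness, and note that the unseen labels do not enter either.

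\emph{Non-equivalence.} Suppose \(P_{\mathrm{one}}\succeq_{\mathrm{ex}}P_{\mathrm{all}}\). The regimes are singletons, so \(\Gamma(0)=(0,0)\). Corollary~\ref{cor:exact-domination}, combined with Proposition~\ref{prop:class-monotonicity}, gives
\[
N_{\mathrm{all}}^\star(\mathcal C)\le N_{\mathrm{one}}^\star(\Phi(\mathcal C))\le N_{\mathrm{one}}^\star(\mathcal C),
\]
using \(\Phi(\mathcal C)\subseteq\mathcal C=\mathcal E\). This contradicts
\[
\lceil\log_2(1/\delta)\rceil=\lceil\log_2(1/(2\delta))+1\rceil>\lceil\log_2(1/(2\delta))\rceil.
\]
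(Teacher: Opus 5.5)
Your proposal is correct and follows the paper's proof essentially step for step: the reduction via Proposition~\ref{prop:generic-templates}(2), memorize-and-guess learners for the upper bounds, a uniform-prior Bayesian argument for the matching lower bounds (you put the prior on both bits for $P_{\mathrm{one}}$ where the paper fixes $\sigma_2$, which makes no difference), and class monotonicity plus exact domination to rule out the reverse exact reduction. The one fix needed is to delete the stray formula $1-(1-2^{-M-1})^2$ in the $P_{\mathrm{all}}$ upper bound, which is wrong for $M\ge1$ because the two unseen events are not independent (they cannot co-occur once $M\ge1$); your subsequent joint-event computation giving failure $2^{-M}$ is the correct one.
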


\noindent\emph{Description and significance.} This example proves that a
stronger objective can strictly increase minimax sample complexity even when
the environments, data protocol, hypothesis class, and learner class are
unchanged. It demonstrates that POR hardness is not only about information
access; the definition of success itself can create genuine statistical
difficulty.

{\color{black}
\paragraph{Uniform-overhead reduction families.}
For indexed, regime-equipped families
\(\mathbf A=(A_m)_{m\in I}\) and \(\mathbf B=(B_m)_{m\in I}\), write
\(\mathbf B\preceq_{\mathrm{uPOR}}\mathbf A\) if there exist coefficients
\(a\ge1\) and \(b\ge0\), independent of \(m\), such that for every \(m\) there
is a POR reduction \(B_m\preceq_{\mathrm{POR}}A_m\) with overhead \((a,b)\).
The embeddings \(\Phi_m\), compilers \(\mathfrak C_m\), and threshold maps
\(\Gamma_m\) may depend arbitrarily on \(m\); requiring one uniformly
computable construction would be a stronger notion not used here. Uniform
\(a,b\) express dimension-independent resource calibration; allowing them to
grow with \(m\) can absorb the asymptotic gap used below. This notion does
not rule out separately calibrated reductions with coefficients
\((a_m,b_m)\) for each fixed \(m\).
}

\begin{proposition}[A scalable objective-strengthening gap]
\label{prop:scalable-objective-gap}
For each \(m\ge 3\), let
\(\mathcal X_m=\{1,\dots,m\}\), \(\mathcal Y=\{0,1\}\), and let
\(\mathcal H_m\) contain all functions \(\mathcal X_m\to\mathcal Y\). For
each \(\sigma\in\{0,1\}^m\), let \(D_\sigma\) draw \(x\) uniformly from
\(\mathcal X_m\) and return \(y=\sigma_x\). Define two paradigms with the same
environment class, i.i.d. sampling protocol, and unrestricted learner class:
\[
\mathbf J_{\mathrm{one}}^{(m)}(T,D_\sigma;M)
=
\mathbf 1\{h_T^M(1)\ne \sigma_1\},
\]
and
\[
\mathbf J_{\mathrm{all}}^{(m)}(T,D_\sigma;M)
=
\bigl(\mathbf 1\{h_T^M(j)\ne \sigma_j\}\bigr)_{j=1}^m.
\]
Let \(\mathcal C_m=\{D_\sigma:\sigma\in\{0,1\}^m\}\), use the singleton
accuracy regimes
\[
\mathscr T_{\mathrm{one}}^{(m)}=\{0\},
\qquad
\mathscr T_{\mathrm{all}}^{(m)}=\{\mathbf 0_m\},
\]
where \(\mathbf 0_m\) is the zero vector in \(\mathbb R^m\), and set
\(\delta=1/8\). Then
\[
N_{P_{\mathrm{one}}^{(m)}}^\star(\mathcal C_m;0,\delta)
\le
2m,
\qquad
N_{P_{\mathrm{all}}^{(m)}}^\star(\mathcal C_m;\mathbf 0_m,\delta)
\ge
\frac{m}{4}\log m .
\]
Consequently,
\[
\bigl(P_{\mathrm{all}}^{(m)}\bigr)_{m\ge3}
\not\preceq_{\mathrm{uPOR}}
\bigl(P_{\mathrm{one}}^{(m)}\bigr)_{m\ge3}.
\]
This conclusion rules out only dimension-independent affine overhead; it does
not rule out a separately calibrated reduction for a fixed \(m\).
\end{proposition}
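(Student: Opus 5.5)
The plan has three parts: an upper bound for the one-coordinate objective, a coupon-collector lower bound for the all-coordinate objective, and a contradiction with Theorem~\ref{thm:reduction-domination} for any uniform-overhead family.

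\textbf{Upper bound for \(P_{\mathrm{one}}^{(m)}\).} Use the learner that outputs \(h(1)=y\) for the first sample with \(x=1\), and an arbitrary label if no such sample exists. Zero-one success fails only if no sample hits \(x=1\). With \(M=2m\) samples this happens with probability
\[
(1-1/m)^{2m}\le e^{-2}<1/8 .
\]
The same budget works for every \(\sigma\), so \(N^\star\le 2m\).

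\textbf{Lower bound for \(P_{\mathrm{all}}^{(m)}\).} I would argue by a Bayesian average. Put the uniform prior on \(\sigma\in\{0,1\}^m\). For any learner and any coordinate \(j\) not observed in the sample, the posterior of \(\sigma_j\) is uniform and independent of the transcript and of the learner's private randomness. Hence the success probability is at most
\[
\mathbb E\bigl[2^{-U}\bigr]\le \Pr[U=0]+\tfrac12\Pr[U\ge1],
\]
where \(U\) is the number of unseen points. Success probability at least \(7/8\) for every \(\sigma\) implies the same under the prior average, so we need \(\Pr[U\ge1]\le 1/4\).

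It then remains to show that for \(M<\frac m4\log m\) the coupon-collector event \(\{U\ge1\}\) has probability greater than \(1/4\). For this I would use the second-moment method:
\[
\mathbb E U=m(1-1/m)^M\ge m\,e^{-M/(m-1)}\cdot c,
\]
which is roughly \(m^{3/4}\) and so large. For the variance, the indicators are negatively correlated, so \(\mathrm{Var}\,U\le\mathbb E U\). Chebyshev then gives
\[
\Pr[U=0]\le 1/\mathbb E U,
\]
and this is below \(3/4\) once \(\mathbb E U>4/3\).

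The main obstacle is making this quantitative for \emph{all} \(m\ge3\), not just asymptotically. I would use \((1-1/m)^M\ge \exp(-M/(m-1))\) together with \(M/(m-1)\le \frac{m}{4(m-1)}\log m\). This yields
\[
\mathbb E U\ge m^{1-\frac{m}{4(m-1)}},
\]
and I would check small \(m\) (\(m=3,4\)) directly, where \(\frac m4\log m<2\) forces \(M\le1\) and the bound is trivial. If Chebyshev is too weak at intermediate \(m\), I would instead use the direct inclusion--exclusion bound \(\Pr[U\ge1]\ge \mathbb E U-\binom m2(1-2/m)^M\), or an exact formula for small \(M\).

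\textbf{Non-reduction.} Suppose uniform \((a,b)\) existed. Each \(\Gamma_m\) must map \(\{0\}\) to \(\mathbf 0_m\), the only element of the singleton regime. Apply Theorem~\ref{thm:reduction-domination} with \(\mathcal C_{B}=\mathcal C_m\) and \(\delta=1/8\), then Proposition~\ref{prop:class-monotonicity} to pass from \(\Phi_m(\mathcal C_m)\) to \(\mathcal C_m\). This gives
\[
\tfrac m4\log m\le a\cdot 2m+b \quad\text{for all } m,
\]
which is false for large \(m\). That contradiction finishes the proof.
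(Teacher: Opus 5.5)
\textbf{Gap in the upper bound.} Your upper bound for $P_{\mathrm{one}}^{(m)}$ fails as written because it rests on $e^{-2}<1/8$, which is false: $e^{2}\approx 7.39<8$, so $e^{-2}\approx 0.135>1/8$. Your argument bounds the failure probability by $\Pr[\text{input }1\text{ unseen}]=(1-1/m)^{2m}$. For a fixed fallback label this bound is attained, by taking $\sigma_1$ to be the opposite label. Since $(1-1/m)^{2m}$ increases to $e^{-2}$, it already exceeds $1/8$ for every $m\ge 14$; for example, $(13/14)^{28}\approx 0.1256$. So your learner does not certify $N^\star_{P_{\mathrm{one}}^{(m)}}(\mathcal C_m;0,1/8)\le 2m$.

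The missing ingredient is the factor $\tfrac12$ that randomization supplies. The learner class is unrestricted, so when input $1$ is unseen, guess its label uniformly at random. The failure probability is then $\tfrac12(1-1/m)^{2m}\le\tfrac12 e^{-2}<1/8$ for every $\sigma$; this is exactly the paper's learner. A deterministic fallback at budget $3m$ would also work, since $e^{-3}<1/8$, and any linear bound suffices for the non-reduction. It does not, however, prove the stated $2m$ bound.

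\textbf{The remaining steps.} These are correct and follow the paper's route. You bound success under the uniform prior by $\mathbb E[2^{-U}]$, use negative correlation of the unseen-point indicators to get $\mathrm{Var}\,U\le\mathbb E U$, and reach the contradiction via Theorem~\ref{thm:reduction-domination} and Proposition~\ref{prop:class-monotonicity}. Your variations are harmless:
\begin{itemize}
\item You use Chebyshev, $\Pr[U=0]\le 1/\mathbb E U$, in place of the paper's Cauchy--Schwarz bound $\Pr[Z>0]\ge \mathbb E Z/(1+\mathbb E Z)$. You need only $\mathbb E U>4/3$, since you only need $\Pr[U\ge 1]>1/4$.
\item You use $(1-1/m)^M\ge e^{-M/(m-1)}$ in place of $\log(1-u)\ge -2u$.
\end{itemize}
In fact your estimate gives $\mathbb E U\ge m^{1-m/(4(m-1))}\ge 3^{5/8}\approx 1.99>4/3$ for all $m\ge 3$. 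The small-$m$ checks and the inclusion--exclusion fallback you mention are therefore unnecessary.
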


\noindent\emph{Description and significance.} This scalable example amplifies
the previous strictness result by producing a growing gap between one-coordinate
and all-coordinate objectives. The growth rules out a uniform affine-overhead
reverse reduction, showing that objective strengthening prevents
dimension-independent resource calibration across this indexed family.

\begin{remark}[Exact versus affine overhead]
Proposition~\ref{prop:strict-objective} rules out exact reverse reductions in
a minimal two-point example, while
Proposition~\ref{prop:scalable-objective-gap} rules out reverse reductions with
any fixed affine overhead across the scalable family. The embeddings,
compilers, and threshold maps may still vary with \(m\); only \(a,b\) are
shared. This distinction matters:
constant sample gaps are meaningful for exact containment, but affine-overhead
POR reductions intentionally ignore fixed additive and multiplicative
calibration costs.
\end{remark}

\section{Hardness Relations Among Representative Paradigms}
\label{sec:example-hardness-relations}

We now instantiate the general framework on four representative paradigms:
supervised learning, transfer learning, continual learning, and meta-learning.
The purpose of this section is not to impose a universal total ordering on all
paradigms, but rather to identify the hardness relations that follow directly
from the framework under natural closure assumptions. The exact relations
below are deliberately assumption-driven sanity checks. They show that the POR
definition recovers familiar special cases only after environment membership,
compiler admissibility, transcript-law compatibility, objective alignment, and
resource accounting have all been verified; they are not presented as an
unconditional or surprising hierarchy.

\subsection{Canonical forms of the representative paradigms}

We use the following simplified canonical models. For every fixed
\(K\ge0\), \(\mathrm{Trans}_K\) has exactly \(K\) source phases and one target
phase; \(\mathrm{Cont}_{K+1}\) has exactly \(K+1\) sequential phases. Unindexed
names are shorthand only, and every displayed reduction below is between
fixed-dimensional indexed objects. For each external budget \(M\), the
protocol specifies a deterministic phase-allocation vector
   \(\mathbf m_K(M)=(m_1(M),\ldots,m_{K+1}(M))\) with
   \(\sum_jm_j(M)=M\). A learner using an internal cap \(q(M)\) declares that
   cap before phase allocation and receives the schedule
   \(\mathbf m_K(q(M))\). Transcripts contain zero-cost phase and task markers,
and every revealed labeled example costs one unit. The canonical learner class
contains all randomized measurable non-anticipating procedures compatible with
the stated interface and is closed under the causal tagging, relabeling, and
reblocking maps explicitly specified below. These conventions make the learner
classes, protocol law, and accounting unit part of the canonical object rather
 than implicit proof assumptions. For any canonical learner \(T\) run at external
 budget \(M\), write \(q_T(M)\le M\) for its preallocation cap declaration, with
 \(q_T(M)=M\) when no smaller cap is declared. Each compiler below obtains and
 relays \(q_T(M)\) before the reduced protocol performs any allocation or charged
 interaction, and then uses the schedule or reblocking map indexed by
 \(q_T(M)\). Throughout this subsection, \(h_{T,e}^M\)
denotes the predictor output by \(T\) after the budget-\(M\) interaction with
environment \(e\); when the environment is a single distribution \(D\), we also
write \(h_{T,D}^M\).

\paragraph{Supervised learning.}
An environment is a single distribution \(D\) over \(\mathcal X\times\mathcal Y\). The protocol reveals i.i.d.\ labeled samples from \(D\). The performance functional is
\[
\mathbf J_{\mathrm{Sup}}(T,D;M)=R_D(h_{T,D}^M),
\]
where \(h_{T,D}^M\in\mathcal H\) is the predictor produced by learner \(T\) after budget \(M\). This canonical form follows the standard distributional setup in statistical learning theory
\citep{vapnik1998statistical,shalev2014understanding,mohri2018foundations}.

\paragraph{Transfer learning.}
For \(\mathrm{Trans}_K\), an environment is a tuple
\[
e=(D_1,\dots,D_K;D_\star),
\]
where \(D_1,\dots,D_K\) are source tasks and \(D_\star\) is the target task. The protocol reveals source samples first and target samples afterward. The performance functional is
\[
\mathbf J_{\mathrm{Trans}_K}(T,e;M)=R_{D_\star}(h_{T,e}^M).
\]
This abstraction captures the source-target viewpoint common in transfer and domain-adaptation theory
\citep{pan2010survey,bendavid2010theory,blitzer2008learning,mansour2009domain,kalan2020minimax}.

\paragraph{Continual learning.}
For \(\mathrm{Cont}_{K+1}\), an environment is a sequential task stream
\[
e=(D_1,\dots,D_K,D_{K+1}).
\]
The protocol reveals tasks sequentially. We use the performance functional
\[
\mathbf J_{\mathrm{Cont}_{K+1}}(T,e;M)
=
\bigl(
    R_{D_1}(h_{T,e}^M),\dots,R_{D_{K+1}}(h_{T,e}^M),F(T,e;M)
\bigr),
\]
where \(F(T,e;M)\) is a forgetting functional. This form reflects the standard emphasis on sequential exposure, replay, task information, and forgetting in continual learning
\citep{robins1995catastrophic,french1999catastrophic,parisi2019continual,lopezpaz2017gradient,delange2022continual,vanderven2022three}.

{\color{black}
\paragraph{Meta-learning.}
An environment is a distribution \(\Pi\) over tasks \(D\). The protocol fixes a
deterministic budget-indexed allocation schedule
\[
\bigl(m_{\mathrm{train}}(M),n_{\mathrm{ev}}(M)\bigr)\in\mathbb N^2,
\qquad
m_{\mathrm{train}}(M)+n_{\mathrm{ev}}(M)\le M.
\]
A learner using an internal cap \(q_T(M)\) declares it before allocation and
uses the schedule indexed by \(\bar M:=q_T(M)\), with \(\bar M=M\) when no
smaller cap is declared; zero-sized blocks are permitted at small budgets. At
external budget \(M\), the protocol first reveals a meta-training transcript
containing \(m_{\mathrm{train}}(\bar M)\) charged labeled examples. It
then samples a latent evaluation task \(D_{\mathrm{ev}}\sim\Pi\), reveals only
an anonymous evaluation-phase marker, and reveals an evaluation support set
\(S_{\mathrm{ev}}\sim D_{\mathrm{ev}}^{n_{\mathrm{ev}}(\bar M)}\). Thus
\[
m_{\mathrm{train}}(\bar M)+|S_{\mathrm{ev}}|\le \bar M\le M.
\]
The latent evaluation task is an environment-side coordinate of the full
budgeted execution record; its support set, the meta-training transcript, and
the learner randomness are the remaining relevant coordinates. The task
identity itself is not learner-visible and is not emitted by a learner or
compiler. If
\(A_{T,\Pi}^{m_{\mathrm{train}}(\bar M)}(S_{\mathrm{ev}})\in\mathcal H\) denotes the
predictor obtained after meta-training and adaptation, define the performance
functional to be the random post-adaptation population risk
\[
\mathbf J_{\mathrm{Meta}}(T,\Pi;M)
=
R_{D_{\mathrm{ev}}}\!\left(
A_{T,\Pi}^{m_{\mathrm{train}}(\bar M)}(S_{\mathrm{ev}})
\right).
\]
Thus neither evaluation-task nor support-set randomness is averaged out before
the success event is formed. The accounting rule charges every labeled example
revealed during meta-training and evaluation support; population query risk
reveals no additional examples. Unless episode accounting is explicitly
invoked, \(M\) therefore counts all such raw examples. The support/query
abstraction is standard in learning-to-learn and few-shot meta-learning
\citep{baxter2000model,vinyals2016matching,santoro2016meta,ravi2017optimization,snell2017prototypical,finn2017model,hospedales2022meta}.
}

\begin{table}[ht]
\centering
\caption{Canonical instantiations of the four representative paradigms. The
table suppresses algorithmic details and keeps only the ingredients needed for
POR comparisons. \\} 
\label{tab:canonical-paradigms}
\begin{tabular}{@{}L{0.16\linewidth}L{0.34\linewidth}L{0.42\linewidth}@{}}
\toprule
Paradigm & Environment class & Controlled performance \\
\midrule
Supervised & single task distribution \(D\) & population risk on \(D\) \\
Transfer \(\mathrm{Trans}_K\) & \((D_1,\dots,D_K;D_\star)\) & target risk after source-then-target interaction \\
Continual \(\mathrm{Cont}_{K+1}\) & ordered task stream \((D_1,\dots,D_K,D_{K+1})\) & risks on all tasks together with forgetting \\
Meta & task distribution \(\Pi\) & random post-adaptation population risk \\
\bottomrule
\end{tabular}
\end{table}

\subsection{Closure assumptions}

{\color{black}
To derive concrete hardness statements, we use the following explicit
interface-compatibility assumptions. Fix a scalar risk regime
\(\mathscr I\subseteq\mathbb R_+\) that excludes thresholds uniformly
achievable at zero cost on the comparison classes under discussion.
Supervised, transfer, and meta-learning use \(\mathscr I\); continual learning
uses threshold vectors whose final embedded-task coordinate lies in
\(\mathscr I\). The threshold maps below are identity maps on \(\mathscr I\)
for scalar-risk reductions and final-task coordinate projection for the
transfer-to-continual reduction.

\begin{assumption}[Diagonal-task transfer-interface compatibility]
\label{ass:degenerate-transfer}
For every supervised environment \(D\), \(\mathrm{Trans}_K\) contains
\[
\Phi_{\mathrm{Sup}\to\mathrm{Trans}_K}(D)
:=(\underbrace{D,\ldots,D}_{K\text{ source phases}};D).
\]
For any predeclared cap \(\bar M\le M\), the compiled supervised wrapper relays
\(\bar M\) before requesting data, partitions
\(S_{\bar M}\sim D^{\bar M}\) according to \(\mathbf m_K(\bar M)\), and adds
the corresponding source/target phase markers. This causal tagging map has
exactly the external-budget-\(M\) \(\mathrm{Trans}_K\)-protocol law for a solver
with cap \(\bar M\) on the displayed environment. The canonical
learner classes are closed under the wrapper that feeds this tagged transcript
to an arbitrary \(T_{\mathrm{Trans}_K}\) and returns its predictor. Both
accounting rules charge the same revealed examples, and under the induced
coupling the compiled supervised predictor equals the transfer predictor; hence
their population target-risk variables agree.
\end{assumption}

\noindent\emph{Interpretation.}
This assumption uses diagonal rather than empty source tasks so that every
statement concerns one fixed \(K\)-indexed transfer object. Its environment
clause encodes the familiar special case, while its remaining clauses explicitly
supply the learner, protocol, objective, and cost compatibility that a POR proof
must verify.

\begin{assumption}[Transfer--continual interface compatibility]
\label{ass:continual-target}
For every
\(e=(D_1,\dots,D_K;D_\star)\in\mathcal E_{\mathrm{Trans}_K}\),
\[
\Phi_{\mathrm{Trans}_K\to\mathrm{Cont}_{K+1}}(e)
:=(D_1,\dots,D_K,D_\star)
\]
belongs to \(\mathcal E_{\mathrm{Cont}_{K+1}}\). For any predeclared cap
\(\bar M\le M\), the wrapper relays \(\bar M\) before phase allocation and the
two protocols use the same allocation \(\mathbf m_K(\bar M)\), task boundaries,
and task-marked sample blocks; the causal relabeling ``source \(j\)'' to task \(j\)
and ``target'' to task \(K+1\) therefore preserves the transcript law. The
canonical learner classes are closed under this relabeling wrapper, both costs
charge one unit per revealed example, the final predictors coincide under the
coupling, and the continual objective contains
\(R_{D_\star}\) as coordinate \(K+1\).
\end{assumption}

\noindent\emph{Interpretation.}
The environment-level containment is only one clause. The shared schedule,
learner closure, cost equality, and final-risk coordinate are what make the
compiler admissible and the claimed objective implication valid.

\begin{assumption}[Singleton-task episodic-interface compatibility]
\label{ass:singleton-meta}
For every supervised environment \(D\), the meta environment class contains
\(\Pi_D:=\delta_D\). When both paradigms count revealed labeled examples,
there is, for every \(\bar M\in\mathbb N\), a deterministic causal reblocking map
\[
\rho_{\bar M}:(\mathcal X\times\mathcal Y)^{\bar M}\to\mathcal V_{\mathrm{Meta}}
\]
independent of \(D\), such that
\((\rho_{\bar M})_\#D^{\bar M}\) is the required
learner-visible meta-transcript law on \(\delta_D\), including a meta-training
block of size \(m_{\mathrm{train}}(\bar M)\), an anonymous evaluation-phase marker,
and an evaluation support block \(S_{\mathrm{ev}}\) of size
\(n_{\mathrm{ev}}(\bar M)\). The map \(\rho_{\bar M}\) emits no task identity.
At external budget \(M\), the wrapper first relays the solver's predeclared cap
\(\bar M\le M\), requests only \(\bar M\) examples, and applies
\(\rho_{\bar M}\). The wrapper and the meta protocol then induce a coupling of
the full execution records in which
the environment-side latent task supplied by \(\delta_D\) satisfies
\(D_{\mathrm{ev}}=D\) almost surely,
\[
m_{\mathrm{train}}(\bar M)+|S_{\mathrm{ev}}|\le \bar M,
\]
and the meta execution has cost at most \(\bar M\).
The canonical learner classes are closed under the wrapper that buffers and
reblocks the supervised stream, runs the meta learner on the resulting
meta-training transcript, adapts it on the charged set \(S_{\mathrm{ev}}\), and
returns that adapted predictor. Under this coupling the compiled supervised
predictor is the same random predictor
\(A_{T,\delta_D}^{m_{\mathrm{train}}(\bar M)}(S_{\mathrm{ev}})\) produced by the meta
execution, so their random population-risk variables agree. Population query
evaluation reveals no additional labeled examples.
\end{assumption}

\noindent\emph{Interpretation.}
Exactness here is conditional on raw-example accounting for both meta-training
and evaluation support, together with the stated causal reblocking closure. The
evaluation support remains random inside the charged coupled execution rather
than being averaged out or supplied for free. If the meta interface instead
charges completed episodes, Proposition~\ref{prop:episode-example-accounting}
gives the corresponding non-exact conversion.
}

Table~\ref{tab:canonical-reduction-data} records the reduction data used below.
The following propositions then verify that these data satisfy the POR
definition under the stated closure assumptions.

\begin{center}
\refstepcounter{table}\label{tab:canonical-reduction-data}
\small
\begin{minipage}{0.98\linewidth}
\textbf{Table~\thetable:} Reduction data for the canonical exact reductions.
Each row lists the four pieces of POR reduction data: environment embedding
\(\Phi\), learner compiler \(\mathfrak C\), threshold map \(\Gamma\), and resource
overhead. The preceding assumptions additionally verify compiler
admissibility, the coupled transcript marginals, objective transfer, and no
declared-budget inflation. \\
\vspace{0.5ex}

\setlength{\tabcolsep}{3pt}
\begin{tabular}{@{}L{0.15\linewidth}L{0.26\linewidth}L{0.31\linewidth}L{0.13\linewidth}L{0.07\linewidth}@{}}
\toprule
Red. & Embedding \(\Phi\) & Compiler \(\mathfrak C\) & \(\Gamma\) & Cost \\
\midrule
\(\mathrm{Sup}\preceq\mathrm{Trans}_K\)
&
\(D\mapsto(D,\ldots,D;D)\)
&
Tag blocks of \(S_M\sim D^M\) as the \(K\) source phases and target phase.
&
Identity on risk
&
\((1,0)\)
\\
\(\mathrm{Trans}_K\preceq\mathrm{Cont}_{K+1}\)
&
\(\begin{aligned}[t]
&(D_1,\dots,D_K;D_\star)\\
&\mapsto(D_1,\dots,D_K,D_\star)
\end{aligned}\)
&
Relabel source/target phase tags as tasks \(1{:}K{+}1\); output the final predictor.
&
Project final-task risk
&
\((1,0)\)
\\
\(\mathrm{Sup}\preceq\mathrm{Meta}\)
&
\(D\mapsto\delta_D\)
&
Reblock a charged sample stream from \(D\) into meta-training and evaluation
support within the same declared budget.
&
Identity on post-adaptation risk
&
\((1,0)\)
\\
\bottomrule
\end{tabular}
\end{minipage}
\end{center}

\subsection{Concrete hardness relations}

The first result says that transfer learning is at least as hard as supervised learning.

\begin{proposition}[Transfer learning dominates supervised learning]
\label{prop:transfer-dominates-supervised}
Under Assumption~\ref{ass:degenerate-transfer},
\[
\mathrm{Trans}_K \succeq_{\mathrm{ex}} \mathrm{Sup}.
\]
\end{proposition}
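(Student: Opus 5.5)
The plan is to verify the four pieces of reduction data from Table~\ref{tab:canonical-reduction-data} and the three coupling checks of the POR definition with $(a,b)=(1,0)$. Equivalently, we instantiate the special-case embedding template of Proposition~\ref{prop:generic-templates}(1), with solver paradigm $A=\mathrm{Trans}_K$ and reduced paradigm $B=\mathrm{Sup}$, using regimes $\mathscr T_A=\mathscr T_B=\mathscr I$. The data are as follows: the embedding is $\Phi(D)=(D,\ldots,D;D)$; $\Gamma$ is the identity on $\mathscr I$; and the compiler $\mathfrak C(T)$ is the tagging wrapper of Assumption~\ref{ass:degenerate-transfer}. This wrapper first reads the solver's predeclared cap $\bar M=q_T(M)\le M$ and relays it. It then requests $\bar M$ examples $S_{\bar M}\sim D^{\bar M}$ and splits them into consecutive blocks of sizes $\mathbf m_K(\bar M)$. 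Finally, it attaches the zero-cost source/target phase markers, feeds the tagged transcript online to $T$, and returns $T$'s predictor.

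Next we check that this is a legitimate causal simulator. Its kernels emit the virtual $\mathrm{Trans}_K$ increments using only the supervised prefix revealed so far, together with $T$'s private randomness. We take the stopping times $\sigma_t$ to be the index of the last supervised example needed to complete virtual step $t$; these are finite because $\bar M$ is finite. The schedule $\mathbf m_K(\bar M)$ is deterministic and the wrapper never references $D$, so the simulator is environment-independent. The virtual filtration therefore sits inside $\mathcal F^{\mathrm{Sup}}_{\sigma_t}$. Admissibility of $\mathfrak C(T)$ is exactly the closure clause of the assumption.

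Protocol validity comes next. The $\mathrm{Sup}$-marginal is the operational i.i.d.\ law by construction. The virtual $\mathrm{Trans}_K$-marginal equals the external-budget-$M$ law on $\Phi(D)$ for a solver with cap $\bar M$, and this is precisely the stated tagging-law clause. The underlying reason is that every phase of $\Phi(D)$ draws i.i.d.\ from $D$, so consecutive blocks of one i.i.d.\ stream have the same joint law as the block-structured protocol.

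Resource calibration follows because both rules charge one unit per revealed labeled example. Both executions reveal the same $\sum_j m_j(\bar M)=\bar M\le M$ examples, so $\kappa_A(Z_A)=\kappa_B(Z_B)\le M$ almost surely, and no overhead is needed; Proposition~\ref{prop:budget-calibration} with $\psi(M)=M$ covers the cap formally. Objective transfer holds because the compiled predictor equals $h^M_{T,\Phi(D)}$ pointwise under the coupling. Hence
\[
\mathbf J_{\mathrm{Sup}}(\mathfrak C(T),D;M)=R_D(h^M_{T,\Phi(D)})=\mathbf J_{\mathrm{Trans}_K}(T,\Phi(D);M),
\]
using that the target of $\Phi(D)$ is $D$. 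The event inclusion with $\Gamma=\mathrm{id}$ is then an equality.

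The main obstacle is the cap handling. We must make sure the cap is relayed before any allocation, so that the block sizes are $\mathbf m_K(q_T(M))$ rather than $\mathbf m_K(M)$, and that the equality of transcripts holds on the coupling and not merely in distribution. We will address this by constructing the virtual transcript as a deterministic function of the supervised prefix and $T$'s shared randomness, so that the predictors coincide almost surely rather than just in law.
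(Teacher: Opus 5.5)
Your proposal is correct and follows essentially the same route as the paper's proof: the diagonal embedding, a cap-relaying tagging compiler justified by the closure and law clauses of Assumption~\ref{ass:degenerate-transfer}, identical charged examples giving cost at most $\bar M\le M$, and almost-sure equality of the output predictors under the identity threshold map with overhead $(1,0)$. The only superfluous step is the appeal to Proposition~\ref{prop:budget-calibration} with $\psi(M)=M$. That appeal is vacuous, because the solver's cap is already handled by relaying $q_T(M)$ inside the compiler, but it does no harm.
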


\noindent\emph{Description and significance.} The proposition embeds supervised
learning into transfer learning by tagging one i.i.d. sample stream from
\(D\) as diagonal source and target phases. It anchors the hierarchy with an
exact special-case containment and shows that worst-case transfer learning is at
least as hard as supervised learning under the stated full-interface
assumption.

The next proposition formalizes the intuition that continual learning is at least as hard as transfer learning when the continual objective requires control of all tasks, including the final one.

\begin{proposition}[Continual learning dominates transfer learning]
\label{prop:continual-dominates-transfer}
Under Assumption~\ref{ass:continual-target},
\[
\mathrm{Cont}_{K+1} \succeq_{\mathrm{ex}} \mathrm{Trans}_K.
\]
\end{proposition}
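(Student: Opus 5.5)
The plan is to exhibit the four pieces of reduction data for \(\mathrm{Trans}_K\preceq_{\mathrm{ex}}\mathrm{Cont}_{K+1}\) from the second row of Table~\ref{tab:canonical-reduction-data}, and then discharge the three coupling checks using Assumption~\ref{ass:continual-target}. The cleanest route is the special-case embedding template, item~1 of Proposition~\ref{prop:generic-templates}. With \(A=\mathrm{Cont}_{K+1}\) as solver and \(B=\mathrm{Trans}_K\) as reduced paradigm, I take the following data:
\[
\Phi(D_1,\dots,D_K;D_\star)=(D_1,\dots,D_K,D_\star),
\]
the relabeling compiler \(\mathfrak C\), the overhead \((a,b)=(1,0)\), and the threshold map \(\Gamma(\tau)=\tau_{K+1}\). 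Here \(\Gamma\) projects a continual threshold vector onto its final embedded-task coordinate, which lies in \(\mathscr I\) by the regime convention of the closure subsection, so \(\Gamma\) indeed maps \(\mathscr T_{\mathrm{Cont}_{K+1}}\) into \(\mathscr T_{\mathrm{Trans}_K}=\mathscr I\). Membership \(\Phi(e)\in\mathcal E_{\mathrm{Cont}_{K+1}}\) is the first clause of the assumption.

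Next I will construct the causal simulator. Given a continual learner \(T\) and budget \(M\), the compiled transfer learner \(\mathfrak C(T)\) first relays \(T\)'s cap \(\bar M=q_T(M)\le M\) before any allocation. Both protocols then use the same schedule \(\mathbf m_K(\bar M)\). At each step the kernel \(\mathsf S_{T,M,t}\) reads the newest revealed transfer example together with its zero-cost phase marker. It emits the same example with the marker relabeled (source \(j\mapsto\) task \(j\), target \(\mapsto\) task \(K+1\)), feeds it to \(T\) along with \(T\)'s own private randomness, and finally outputs \(T\)'s predictor. The schedule is \(\sigma_t=t\), so \(\mathcal F_t^{A,\mathrm{virtual}}\subseteq\mathcal F_t^{B}\). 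The kernel has no environment argument.

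Then I verify the three checks. \emph{Admissibility and protocol validity:} \(\mathfrak C(T)\in\mathfrak T_{\mathrm{Trans}_K}\) by closure of the canonical learner class under the relabeling wrapper. The transfer marginal is the operational law by construction. The virtual continual marginal is the operational \(\mathrm{Cont}_{K+1}\) law on \(\Phi(e)\), because the assumption states that relabeling preserves the transcript law: the same blocks are drawn i.i.d.\ from the same \(D_j\), with identical boundaries. \emph{Resource calibration:} both rules charge one unit per revealed example and the transcripts carry the same examples, so \(\kappa_A(Z_A)=\kappa_B(Z_B)\le\bar M\le M\) almost surely. \emph{Objective transfer:} under the coupling the final predictors coincide, \(h^M_{\mathfrak C(T),e}=h^M_{T,\Phi(e)}\). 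Hence coordinate \(K+1\) of \(\mathbf J_{\mathrm{Cont}_{K+1}}\) equals \(R_{D_\star}(h)=\mathbf J_{\mathrm{Trans}_K}\). On \(\{\mathbf J_{\mathrm{Cont}}\preceq\tau\}\) we therefore have \(\mathbf J_{\mathrm{Trans}}\le\tau_{K+1}=\Gamma(\tau)\). The other coordinates, including forgetting, are simply discarded, which is harmless since the inclusion only needs the \(K+1\) coordinate.

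The main obstacle is not the objective step but making the coupling genuinely protocol-valid in the presence of budget caps. The continual learner's declared cap must be relayed before the transfer protocol allocates phases, so that both executions use \(\mathbf m_K(\bar M)\) rather than \(\mathbf m_K(M)\). The allocation is a deterministic function of \(\bar M\) shared by both interfaces, and the paper's convention that compilers relay \(q_T(M)\) before any charged interaction handles this. I will state it explicitly, and then conclude via Proposition~\ref{prop:generic-templates}(1), or directly from the definition with \(a=1,b=0\).
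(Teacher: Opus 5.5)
Your proposal is correct and follows essentially the same argument as the paper. It uses the same embedding $\Phi$, the relabeling compiler that first relays the cap $\bar M=q_T(M)$, the final-task projection $\Gamma$, overhead $(1,0)$, and discharges the three coupling checks directly from Assumption~\ref{ass:continual-target}; concluding via Proposition~\ref{prop:generic-templates}(1) instead of checking the definition directly only repackages the same verification.
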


\noindent\emph{Description and significance.} This result embeds a transfer
problem into a continual-learning stream by placing the source tasks before the
target task. Its significance is that continual learning formally contains the
source-target transfer setting when the continual objective includes the
target-risk requirement and the two interfaces share the stated transcript
schedule, learner closure, and raw-example cost.

{\color{black}
\paragraph{Worked case study: transporting a VC lower bound.}
Let \(\mathcal H\) be a binary hypothesis family with finite VC dimension
\(2\le d<\infty\),
use zero-one loss, and let \(\mathcal C_{\mathrm{real}}(\mathcal H)\) be the
class of distributions realizable by \(\mathcal H\). The standard realizable
PAC lower bound gives a universal constant \(c>0\) such that, for
\(\epsilon\in\mathscr I\cap(0,1/8]\) and \(0<\delta\le1/8\),
\[
N^\star_{\mathrm{Sup}}(
\mathcal C_{\mathrm{real}}(\mathcal H);\epsilon,\delta)
\ge
c\,\frac{d+\log(1/\delta)}{\epsilon}
\]
\citep{blumer1989learnability,shalev2014understanding,mohri2018foundations}.
Applying the two exact reductions yields, without a new lower-bound proof,
\begin{align*}
N^\star_{\mathrm{Trans}_K}(
\Phi_{\mathrm{Sup}\to\mathrm{Trans}_K}
(\mathcal C_{\mathrm{real}}(\mathcal H));\epsilon,\delta)
&\ge c\,\frac{d+\log(1/\delta)}{\epsilon},\\
N^\star_{\mathrm{Cont}_{K+1}}(
(\Phi_{\mathrm{Trans}_K\to\mathrm{Cont}_{K+1}}\circ
\Phi_{\mathrm{Sup}\to\mathrm{Trans}_K})
(\mathcal C_{\mathrm{real}}(\mathcal H));\tau_{\mathrm{Cont}},\delta)
&\ge c\,\frac{d+\log(1/\delta)}{\epsilon},
\end{align*}
for every continual threshold in the stated continual-learning accuracy regime
whose final-task coordinate is \(\epsilon\).
This concrete rate illustrates the framework's use: the hard supervised
subclass remains hard after it is embedded into richer transfer and continual
interfaces.

\paragraph{Literature-facing transfer schema.}
More specialized transfer lower bounds, for example those based on task
relatedness or diversity \citep{bendavid2010theory,kalan2020minimax}, can be
transported to continual learning in the same way, but only after their
environment class, protocol, hypothesis family, objective, and charged sample
unit have been matched to Assumption~\ref{ass:continual-target}. We therefore
present this as a reuse schema rather than claiming a new continual-learning
lower bound from unmatched models.
}

\begin{corollary}[Continual learning dominates supervised learning]
\label{cor:continual-dominates-supervised}
Under Assumptions~\ref{ass:degenerate-transfer} and~\ref{ass:continual-target},
\[
\mathrm{Cont}_{K+1} \succeq_{\mathrm{ex}} \mathrm{Sup}.
\]
\end{corollary}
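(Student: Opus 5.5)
The corollary follows by chaining the two exact reductions already established and invoking transitivity of \(\succeq_{\mathrm{ex}}\) from Proposition~\ref{prop:preorder}. Under Assumption~\ref{ass:degenerate-transfer}, Proposition~\ref{prop:transfer-dominates-supervised} gives \(\mathrm{Sup}\preceq_{\mathrm{ex}}\mathrm{Trans}_K\). Under Assumption~\ref{ass:continual-target}, Proposition~\ref{prop:continual-dominates-transfer} gives \(\mathrm{Trans}_K\preceq_{\mathrm{ex}}\mathrm{Cont}_{K+1}\). Both use the same fixed regime on \(\mathrm{Trans}_K\), namely the scalar risk regime \(\mathscr I\), so they compose relative to fixed regimes as Proposition~\ref{prop:preorder} requires. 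The result is \(\mathrm{Sup}\preceq_{\mathrm{ex}}\mathrm{Cont}_{K+1}\), i.e.\ \(\mathrm{Cont}_{K+1}\succeq_{\mathrm{ex}}\mathrm{Sup}\).

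To make the proposal self-contained, I would also write out the composed reduction data and check it against the POR definition.
\begin{itemize}
\item The embedding is \(\Phi=\Phi_{\mathrm{Trans}_K\to\mathrm{Cont}_{K+1}}\circ\Phi_{\mathrm{Sup}\to\mathrm{Trans}_K}\), which sends \(D\mapsto(D,\ldots,D,D)\).
\item The compiler is \(\mathfrak C_{\mathrm{Sup}\leftarrow\mathrm{Trans}}\circ\mathfrak C_{\mathrm{Trans}\leftarrow\mathrm{Cont}}\). It first relabels a continual learner into a transfer learner, then wraps that learner with the supervised tagging map.
\item The threshold map is the composite \(\Gamma\): final-coordinate projection followed by the identity on \(\mathscr I\).
\end{itemize}

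The three coupling checks then go as follows.
\begin{itemize}
\item \emph{Admissibility.} Each wrapper class is closed under the corresponding map, so the composed compiler lands in the canonical supervised learner class.
\item \emph{Resource calibration.} With \(a=1,b=0\) in both steps, the composite overhead is \((1\cdot1,\,1\cdot0+0)=(1,0)\).
\item \emph{Objective transfer.} The event inclusions chain. If the continual risk vector is \(\preceq\tau\), then the final coordinate is \(\le\tau_{K+1}\), which is the transfer target-risk event. By the first reduction, that event implies the supervised risk event.
\end{itemize}

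The main obstacle is the coupling. The two simulation couplings are each protocol-valid and causal, but composing them requires building a single joint law on the three execution records that is generated online by one environment-independent causal kernel. My approach is to concatenate the kernels. The supervised-side kernel tags the i.i.d.\ stream into transfer phases at stopping times \(\sigma_t\), and the transfer-side kernel relabels those phases into continual tasks at stopping times \(\sigma'_t\). The composite schedule is \(\sigma_{\sigma'_t}\), which is again an adapted, nondecreasing sequence of a.s.\ finite stopping times for the supervised filtration. The filtration inclusions compose: \(\mathcal F^{\mathrm{Cont,virt}}_t\subseteq\mathcal F^{\mathrm{Trans}}_{\sigma'_t}\subseteq\mathcal F^{\mathrm{Sup}}_{\sigma_{\sigma'_t}}\).

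The marginals are correct for a simple reason. The transfer marginal of the first coupling is exactly the operational transfer law, and that law is what the second kernel consumes. Gluing the two couplings along the shared transfer record, using conditional independence given that record, therefore gives a coupling whose outer marginals are the operational laws.

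Null events must be tracked through the gluing; a countable union of null sets stays null. The predeclared cap \(q_T(M)\) must also be relayed through both wrappers before any allocation. Both assumptions explicitly provide this relay, so the composite wrapper relays \(\bar M\) first and both schedules are indexed by the same \(\bar M\).
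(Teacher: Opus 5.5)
Your proposal is correct and is essentially the paper's proof: Propositions~\ref{prop:transfer-dominates-supervised} and~\ref{prop:continual-dominates-transfer} chained by the transitivity part of Proposition~\ref{prop:preorder}, and your explicit composed data and kernel-concatenation argument just re-derive that transitivity proof in this special case. One small wording fix: the threshold map projects onto the final-\emph{task} risk coordinate \(K+1\), not the literal last coordinate of \(\mathbf J_{\mathrm{Cont}_{K+1}}\), which is the forgetting functional \(F\).
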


\noindent\emph{Description and significance.} The corollary uses transitivity to
show that supervised learning is exactly contained in continual learning under
the two closure assumptions. It illustrates how the preorder structure lets
local reductions compose into broader relationships among paradigms.

The next result shows that meta-learning also contains supervised learning as a degenerate special case.

\begin{proposition}[Meta-learning dominates supervised learning]
\label{prop:meta-dominates-supervised}
Under Assumption~\ref{ass:singleton-meta},
\[
\mathrm{Meta} \succeq_{\mathrm{ex}} \mathrm{Sup}.
\]
\end{proposition}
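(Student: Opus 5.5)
We will verify directly that the reduction data in the third row of Table~\ref{tab:canonical-reduction-data} satisfy the POR definition with \((a,b)=(1,0)\). This is the special-case embedding template of Proposition~\ref{prop:generic-templates}(1).

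\emph{Data.} The embedding is \(\Phi(D):=\delta_D\), which lies in the meta environment class by Assumption~\ref{ass:singleton-meta}. The threshold map \(\Gamma\) is the identity on \(\mathscr I\). Given a meta learner \(T\), the compiled supervised learner \(\mathfrak C(T)\) works as follows at external budget \(M\):
\begin{itemize}
\item it reads the cap \(\bar M=q_T(M)\le M\);
\item it requests exactly \(\bar M\) examples \(S_{\bar M}\sim D^{\bar M}\);
\item it applies the \(D\)-independent reblocking map \(\rho_{\bar M}\);
\item it feeds the meta-training block to \(T\), adapts on the charged block \(S_{\mathrm{ev}}\), and outputs the adapted predictor.
\end{itemize}
Admissibility of \(\mathfrak C(T)\) is exactly the learner-class closure clause of Assumption~\ref{ass:singleton-meta}.

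\emph{Protocol validity and causality.} The simulator kernel is deterministic given the supervised prefix. It emits each virtual meta-transcript coordinate once the corresponding supervised examples have arrived, so the stopping times are \(\sigma_t\) equal to the index of the last consumed example. Hence \(\mathcal F^{A,\mathrm{virtual}}_t\subseteq\mathcal F^B_{\sigma_t}\). The kernel has no environment argument because \(\rho_{\bar M}\) is independent of \(D\) and emits no task identity.

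The \(B\)-marginal of the coupling is the supervised law with cap \(\bar M\). The pushforward property \((\rho_{\bar M})_\#D^{\bar M}\) gives the correct learner-visible meta-transcript law. The environment-side latent coordinate is supplied by \(\delta_D\), so it is almost surely \(D_{\mathrm{ev}}=D\), as the assumption states. The learner's private randomness is shared identically between the two executions. Together these give a protocol-valid simulation coupling.

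\emph{Resource calibration and objective transfer.} Under the coupling, the meta cost is \(m_{\mathrm{train}}(\bar M)+|S_{\mathrm{ev}}|\le\bar M\le M\), and the supervised cost is \(\bar M\le M\). Population query evaluation is uncharged. This gives \(\kappa_A(Z_A)\le M\) and \(\kappa_B(Z_B)\le M=1\cdot M+0\).

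For objective transfer, the compiled supervised predictor coincides almost surely with \(A_{T,\delta_D}^{m_{\mathrm{train}}(\bar M)}(S_{\mathrm{ev}})\). Since \(D_{\mathrm{ev}}=D\) almost surely, we get \(\mathbf J_{\mathrm{Sup}}(\mathfrak C(T),D;M)=\mathbf J_{\mathrm{Meta}}(T,\delta_D;M)\) almost surely. Therefore \(\{\mathbf J_{\mathrm{Meta}}\le\tau\}\subseteq\{\mathbf J_{\mathrm{Sup}}\le\Gamma(\tau)\}\) up to a null set.

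\emph{Main obstacle.} The delicate step is reconciling the cap \(\bar M\) with the external budget \(M\). It must be checked that the supervised execution at budget \(M\) under the wrapper equals the capped execution, using Assumption~\ref{ass:budget-capping}. It must also be checked that the meta protocol's schedule is indexed by \(\bar M\) consistently on both sides. The plan is to relay \(\bar M\) before any charged event, as Assumption~\ref{ass:singleton-meta} prescribes, so that both allocations are determined before sampling. Once that holds, the remaining checks are immediate identities.
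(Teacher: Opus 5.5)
Your proposal is correct and takes the same route as the paper: the embedding \(D\mapsto\delta_D\), and a compiler that relays \(\bar M=q_T(M)\) before any charged event, reblocks via \(\rho_{\bar M}\), and returns the adapted predictor. Together with the identity threshold map, the almost-sure predictor identity, and \(D_{\mathrm{ev}}=D\), this gives equal performance variables at overhead \((1,0)\); your explicit stopping-time check and budget-capping remark only spell out what Assumption~\ref{ass:singleton-meta} already stipulates (the cap relay and the causal, \(D\)-independent reblocking).
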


\noindent\emph{Description and significance.} The proposition embeds supervised
learning into meta-learning by using a task distribution concentrated on one
task and causally reblocking charged examples within the same declared
budget into the support/query interface. It shows that
meta-learning's distinctiveness comes from non-degenerate task variation and
adaptation structure, not from the mere presence of an episodic protocol.

\begin{remark}[Sensitivity of the meta-learning reduction]
Proposition~\ref{prop:meta-dominates-supervised} relies on the ability to
partition supervised samples into the support/query structure with the
correct law, charge all revealed labels, and avoid declared-budget inflation.
If instead the meta setup charges completed
episodes, Proposition~\ref{prop:episode-example-accounting} gives the explicit
non-exact factor \(s+u\). This illustrates why the closure assumptions and
accounting convention are substantive rather than merely notational.
\end{remark}

\subsection{Resulting picture}

Collecting the previous propositions yields the following canonical hardness diagram:
\[
\mathrm{Cont}_{K+1} \succeq_{\mathrm{ex}} \mathrm{Trans}_K \succeq_{\mathrm{ex}} \mathrm{Sup},
\qquad
\mathrm{Meta} \succeq_{\mathrm{ex}} \mathrm{Sup}.
\]
Each arrow is conditional on the corresponding full-interface assumption
above; the diagram summarizes those canonical sanity checks rather than an
unconditional total order.

\begin{corollary}[Sample-complexity consequences for the canonical hierarchy]
\label{cor:canonical-sample-hierarchy}
Under Assumptions~\ref{ass:degenerate-transfer},
\ref{ass:continual-target}, and~\ref{ass:singleton-meta}, the exact reductions
above imply the following minimax inequalities on embedded comparison classes.

For every supervised comparison class \(\mathcal C_{\mathrm{Sup}}\), every
supervised threshold \(\tau_{\mathrm{Sup}}\in\mathscr I\), and every
\(\delta\in(0,1)\),
\[
N^\star_{\mathrm{Sup}}(\mathcal C_{\mathrm{Sup}};\tau_{\mathrm{Sup}},\delta)
\le
N^\star_{\mathrm{Trans}_K}(
\Phi_{\mathrm{Sup}\to\mathrm{Trans}_K}(\mathcal C_{\mathrm{Sup}});
\tau_{\mathrm{Sup}},\delta),
\]
\[
N^\star_{\mathrm{Sup}}(\mathcal C_{\mathrm{Sup}};\tau_{\mathrm{Sup}},\delta)
\le
N^\star_{\mathrm{Cont}_{K+1}}(
(\Phi_{\mathrm{Trans}_K\to\mathrm{Cont}_{K+1}}\circ
\Phi_{\mathrm{Sup}\to\mathrm{Trans}_K})(\mathcal C_{\mathrm{Sup}});
\tau_{\mathrm{Cont}},\delta),
\]
for every \(\tau_{\mathrm{Cont}}\) in the stated continual-learning accuracy
regime whose final-task projection equals \(\tau_{\mathrm{Sup}}\), and
\[
N^\star_{\mathrm{Sup}}(\mathcal C_{\mathrm{Sup}};\tau_{\mathrm{Sup}},\delta)
\le
N^\star_{\mathrm{Meta}}(
\Phi_{\mathrm{Sup}\to\mathrm{Meta}}(\mathcal C_{\mathrm{Sup}});
\tau_{\mathrm{Sup}},\delta).
\]
\end{corollary}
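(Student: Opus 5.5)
The plan is to obtain all three inequalities as direct instances of Corollary~\ref{cor:exact-domination}, applied to the exact reductions already established. No new simulation argument is needed; the work is bookkeeping about which threshold map and which embedding each reduction carries.

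\textbf{First inequality.} Invoke Proposition~\ref{prop:transfer-dominates-supervised}, which gives \(\mathrm{Sup}\preceq_{\mathrm{ex}}\mathrm{Trans}_K\) with embedding \(\Phi_{\mathrm{Sup}\to\mathrm{Trans}_K}\) and threshold map \(\Gamma=\mathrm{id}\) on \(\mathscr I\). For \(\tau_{\mathrm{Sup}}\in\mathscr I\), take \(\tau_A=\tau_{\mathrm{Sup}}\), so that \(\Gamma(\tau_A)=\tau_{\mathrm{Sup}}\). Corollary~\ref{cor:exact-domination}, with \(B=\mathrm{Sup}\), \(A=\mathrm{Trans}_K\) and the arbitrary class \(\mathcal C_{\mathrm{Sup}}\), then yields exactly the displayed inequality.

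\textbf{Third inequality.} This follows the same way from Proposition~\ref{prop:meta-dominates-supervised}. Here the embedding is \(D\mapsto\delta_D\), and the threshold map is the identity on post-adaptation risk.

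\textbf{Second inequality (the only nontrivial step).} First compose the two exact reductions via transitivity (Proposition~\ref{prop:preorder}, or directly Corollary~\ref{cor:continual-dominates-supervised}). The step to check is what the composite data are. I expect the transitivity proof to compose the embeddings as
\[
\Phi_{\mathrm{Trans}_K\to\mathrm{Cont}_{K+1}}\circ\Phi_{\mathrm{Sup}\to\mathrm{Trans}_K},
\]
to compose the threshold maps as \(\Gamma_{\mathrm{Sup}\leftarrow\mathrm{Trans}}\circ\Gamma_{\mathrm{Trans}\leftarrow\mathrm{Cont}}\), and to multiply the overheads as \((1\cdot1,\;1\cdot0+0)=(1,0)\), so that exactness is preserved. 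The composite threshold map sends a continual threshold to its final-task coordinate, then applies the identity.

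Hence, for any \(\tau_{\mathrm{Cont}}\) in the continual regime whose final-task projection equals \(\tau_{\mathrm{Sup}}\), the composite map gives \(\Gamma(\tau_{\mathrm{Cont}})=\tau_{\mathrm{Sup}}\). Applying Corollary~\ref{cor:exact-domination} to the composite reduction then gives the stated bound.

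To avoid depending on the internal form of the transitivity proof, I could instead chain the two applications of Corollary~\ref{cor:exact-domination} directly:
\[
N^\star_{\mathrm{Sup}}(\mathcal C;\tau_{\mathrm{Sup}},\delta)\le N^\star_{\mathrm{Trans}_K}(\Phi_1(\mathcal C);\tau_{\mathrm{Sup}},\delta)\le N^\star_{\mathrm{Cont}_{K+1}}(\Phi_2(\Phi_1(\mathcal C));\tau_{\mathrm{Cont}},\delta).
\]
The second step uses \(B=\mathrm{Trans}_K\), the class \(\Phi_1(\mathcal C)\), and the projection \(\Gamma(\tau_{\mathrm{Cont}})=\tau_{\mathrm{Sup}}\). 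This direct chain sidesteps the obstacle, and I would present it as the main argument.

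Two details need checking along the way. One is that \(\tau_{\mathrm{Sup}}\in\mathscr I\) is a legitimate transfer threshold, which holds because transfer uses \(\mathscr I\). The other is that \(\Phi_2(\Phi_1(\mathcal C))\) coincides with the composite-image class written in the statement, which holds by definition of images.
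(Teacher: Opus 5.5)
Your proposal is correct. The first and third inequalities are handled exactly as in the paper: Corollary~\ref{cor:exact-domination} is applied to Propositions~\ref{prop:transfer-dominates-supervised} and~\ref{prop:meta-dominates-supervised} with the identity threshold map.

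For the second inequality, the paper takes what you call your first route. It invokes the composite exact reduction of Corollary~\ref{cor:continual-dominates-supervised}, whose embedding is \(\Phi_{\mathrm{Trans}_K\to\mathrm{Cont}_{K+1}}\circ\Phi_{\mathrm{Sup}\to\mathrm{Trans}_K}\) and whose threshold map is the final-task projection, and then applies exact domination once. The transitivity proof does set \(\Gamma_{CA}=\Gamma_{CB}\circ\Gamma_{BA}\) with overhead \((a_{CB}a_{BA},\,a_{CB}b_{BA}+b_{CB})\), as you anticipated.

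Your preferred direct chain instead applies Corollary~\ref{cor:exact-domination} twice. The second application uses \(B=\mathrm{Trans}_K\) and the comparison class \(\Phi_{\mathrm{Sup}\to\mathrm{Trans}_K}(\mathcal C_{\mathrm{Sup}})\). This is equally valid, for three reasons:
\begin{itemize}
\item that class lies in \(\mathcal E_{\mathrm{Trans}_K}\) by Assumption~\ref{ass:degenerate-transfer};
\item the final-task projection maps the continual regime into \(\mathscr I\), so \(\tau_{\mathrm{Sup}}\) is a legitimate transfer threshold;
\item the extended-real inequalities chain.
\end{itemize}

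The two routes buy slightly different things. Yours is independent of the composite data built inside Proposition~\ref{prop:preorder}, and it records an intermediate transfer bound along the way. The paper's route presents the continual inequality as a single instance of exact domination for the one reduction \(\mathrm{Sup}\preceq_{\mathrm{ex}}\mathrm{Cont}_{K+1}\).
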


\noindent\emph{Description and significance.} This corollary translates the
canonical exact reductions into explicit minimax sample-complexity inequalities
for the embedded supervised, transfer, continual, and meta-learning classes. It
is significant because it turns the structural hierarchy into quantitative
consequences that can be used to transfer upper and lower bounds.

\begin{remark}[No universal total order]
The framework does not by itself imply a universal ordering between \(\mathrm{Meta}\) and \(\mathrm{Trans}\), or between \(\mathrm{Meta}\) and \(\mathrm{Cont}\). Such comparisons depend on additional structural assumptions about the environment classes and the objectives. For example, one would need a formal embedding of source-target problems into episodic task distributions, or conversely an embedding of episodic adaptation problems into sequential task streams. In this sense, the framework naturally produces a preorder rather than a total order.
\end{remark}

\begin{remark}[Interpretation of exact hardness]
An exact POR-reduction says that one paradigm contains another as a special case
without inflating the declared resource budget. This should be interpreted as a structural
statement about worst-case resource-bounded difficulty under aligned cost units. It does
not claim that optimization, constants, or empirical benchmark performance are
identical across paradigms.
\end{remark}

\begin{remark}[How existing lower bounds plug into the framework]
The point of the framework is not to replace within-paradigm lower-bound
analysis, but to make such results reusable. For instance, any lower bound
proved for a task-aware continual-learning model or for a large-replay variant
immediately transfers to the harder task-agnostic or smaller-memory variants by
Corollary~\ref{cor:lower-bound-transfer} together with
Propositions~\ref{prop:replay-monotonicity}
and~\ref{prop:taskid-monotonicity}. Likewise, lower bounds proved within
source-target transfer models \citep{kalan2020minimax} can be imported into
stronger structural settings once an appropriate POR embedding is specified.
The same principle applies to continual-learning variants whenever the relevant
replay, memory, or side-information choices are represented as POR resources.
In the present scalar theory, replay capacity is represented through the
admissible learner class and task identifiers through the protocol; only the
declared scalar transcript cost is measured by \(\kappa_P\).
\end{remark}

%% file: paper/related_work.tex
\section{Related Work}
\label{sec:related-work}

Classical learning theory relates sample complexity to a fixed hypothesis class
and loss through tools such as uniform convergence, stability, and capacity
measures
\citep{vapnik1971uniform,valiant1984theory,blumer1989learnability,vapnik1998statistical,anthony1999neural,shalev2014understanding,mohri2018foundations,bartlett2002rademacher,bousquet2002stability}.
Our focus is
orthogonal: the hypothesis family is fixed, while the protocol, objective, or
resources change. Transfer-learning theory studies when source tasks improve
target performance, including representation sharing, task relatedness, and
minimax lower bounds
\citep{caruana1997multitask,pan2010survey,bendavid2010theory,blitzer2008learning,crammer2008learning,mansour2009domain,weiss2016survey,maurer2016benefit,tripuraneni2020task,kalan2020minimax}.
Meta-learning and learning-to-learn analyze experience across related tasks and
episodic adaptation
\citep{thrun1998lifelong,baxter2000model,maurer2005stability,lake2015human,andrychowicz2016learning,santoro2016meta,vinyals2016matching,ravi2017optimization,snell2017prototypical,finn2017model,hospedales2022meta}.
Continual learning studies sequential exposure, stability-plasticity tradeoffs,
replay, task identifiers, and forgetting
\citep{robins1995catastrophic,french1999catastrophic,parisi2019continual,kirkpatrick2017overcoming,lopezpaz2017gradient,rebuffi2017icarl,pentina2015lifelong,delange2022continual,vanderven2022three}.
POR-reductions do not replace these within-paradigm analyses; they specify when
their guarantees or lower bounds transfer across paradigm interfaces.

The closest classical statistical analogue is the comparison of statistical
experiments. Blackwell's comparison theorem orders experiments by whether one
experiment can simulate another through randomization without increasing
decision risk \citep{blackwell1953equivalent}. Le Cam's theory of deficiency
and asymptotic comparison extends this viewpoint to approximate comparison of
statistical models \citep{lecam1964sufficiency,lecam1986asymptotic}, with
later systematic treatments such as \citep{torgersen1991comparison}. These
works compare information structures through decision performance, and they are
therefore conceptually close to the present paper. POR-reductions differ in the
objects being compared: rather than comparing statistical experiments for a
fixed decision problem, they compare learning paradigms that may differ in
environment class, observation protocol, admissible learner class, performance
functional, and resource accounting. The threshold map and budget overhead in a
POR-reduction make explicit two ingredients that are usually outside classical
experiment comparison: how learning objectives are translated and how sample
resources are calibrated across paradigms.
Moreover, POR comparisons are algorithm-interface comparisons: the compiler
must transform every admissible learner for the solver paradigm into a learner
for the reduced paradigm under the target protocol. The formal result is
complexity transfer in the declared accounting units (sample complexity under
labeled-example accounting); an efficient implementation requires an
additional computational guarantee on the compiler. This learner-class and
resource-sensitive perspective is what allows the framework to discuss method
transfer, replay memory, task identifiers, and adaptation protocols, which are
not naturally represented by an information ordering between experiments alone.

Reduction arguments are central in computational complexity
\citep{cook1971complexity,karp1972reducibility,garey1979computers,papadimitriou1994computational,arora2009computational}
and also appear within learning theory, for example in statistical-query
learning \citep{kearns1998statistical} and classification reductions
\citep{beygelzimer2005error}. The present work uses
reductions at the level of learning paradigms. The reduction data therefore
include an environment embedding, learner compiler, threshold map, and budget
calibration, which are precisely the ingredients needed for minimax
declared-resource-complexity transfer across protocol--objective--resource
specifications.

%% file: paper/conclusion.tex
\section{Conclusion}
\label{sec:conclusion}

We introduced a protocol--objective--resource framework for comparing learning
paradigms while holding the hypothesis family fixed. The central message is
that relationships among learning paradigms cannot be read from model classes
alone. They depend on how environments are embedded, how learners are compiled,
how success criteria are translated, and how samples and other resources are
accounted for. POR-reductions make these obligations explicit and convert them
into minimax comparisons in the declared accounting units; these are
sample-complexity comparisons when labeled examples are charged.
The revised formulation makes this statement operational: protocols induce
budgeted transcript laws, \(\kappa_P\) charges transcript cost, simulation
couplings expose protocol compatibility, and one learner at one common budget
must succeed over the full comparison class.

The framework gives a formal language for transferring understanding and
methods across learning regimes. On the understanding side, a valid reduction
transports upper bounds, lower bounds, and impossibility evidence between
embedded comparison classes. On the method side, the same reduction identifies
what must be preserved for an algorithmic idea to move from one paradigm to
another: the interaction transcript, the performance target, and the resource
budget. An efficient method-transfer theorem additionally needs a
constructive compiler with a computational bound. This helps distinguish
formally certified interface relationships from differences caused by side
information, objective strength, memory assumptions, or sample accounting. It
does not, by itself, decide empirical or scientific novelty. In this sense,
POR-reductions provide a way to organize the growing landscape of learning
regimes and to identify when an existing formal guarantee can be reused.

Our structural results support this view. The reduction theorem gives
declared-resource-complexity domination on embedded classes, its converse use yields
obstructions to proposed reductions, and the preorder structure allows local
comparisons to be composed into larger diagrams. The objective-strengthening
examples show that the objective component is not cosmetic: even with the same
protocol and learner class, changing what must be controlled can strictly
increase minimax sample complexity. The instantiations on supervised, transfer,
continual, and meta-learning abstractions demonstrate how familiar learning
regimes can be related through exact special-case containments under explicit
closure assumptions, and are positioned as canonical sanity checks of the
formal obligations. The episode-to-example construction supplies a genuinely
non-exact resource conversion, while the VC case study instantiates lower-bound
transport with an explicit rate. The replay-memory and task-identifier examples show
that the same language also compares variants inside a single paradigm.

\subsection{Limitations and Future Work}

The framework is intentionally structural and worst-case. It does not claim
that a POR reduction preserves optimization difficulty, constants, empirical
benchmark performance, or typical-case behavior. It also does not impose a
universal total order over learning paradigms. Comparisons depend on the chosen
environment classes, accuracy regimes, closure assumptions, and resource
conventions; changing any of these ingredients can change the resulting
hardness relation.

Several extensions are natural. First, the resource model should be enriched
beyond a single scalar charged resource. Many practical comparisons also depend on memory,
computation, communication, adaptation rounds, replay buffers, and access to
task identifiers. A multi-resource POR theory could make these tradeoffs
explicit. Second, approximate reductions could allow controlled slack in the
performance threshold as well as in the resource budget, better matching
empirical method transfer where small losses are often accepted. Third,
distribution-dependent or randomized reductions could capture settings in which
structural transfer holds for most tasks under a meta-distribution rather than
uniformly over a comparison class. Finally, the framework should be paired with
sharper within-paradigm sample-complexity results. POR-reductions are most
useful when concrete upper and lower bounds are available to transport across
learning interfaces.

The abstract POR object can also describe active and semi-supervised
learning, online or bandit feedback, federated interfaces, and reinforcement
learning once their transcript laws, objectives, and charged units are stated.
Representability alone does not establish a reduction: each application must
still supply an environment embedding, an admissible causal compiler, objective
transfer, and resource calibration. Settings that require simultaneous sample,
communication, and computation budgets motivate the multi-resource extension
rather than an immediate application of the present scalar theorem.

%% file: paper/broader_impact.tex
\section*{Broader Impact Statement}

This paper is theoretical and does not introduce a deployable learning system,
dataset, or benchmark. Its main purpose is to clarify when sample-complexity
claims can be transferred across learning paradigms. The most direct positive
impact is methodological: making protocol, objective, and resource assumptions
explicit can reduce misleading comparisons between learning settings. Because the work is conceptual rather than application-oriented, we do not anticipate significant immediate societal risks or direct downstream impacts beyond its potential influence on future machine learning research.

%% file: paper/appendix_discussion.tex
\section{Further Discussion of the Framework}
\label{app:framework-discussion}

This appendix expands on how the protocol--objective--resource framework should
be interpreted and used. The main text states the formal definitions and main
sample-complexity consequences; Appendix~\ref{app:detailed-proofs} gives the
detailed proofs. The purpose here is more conceptual: to clarify what a
reduction certifies, what kinds of transfer it supports, and where additional
assumptions are still needed.

\subsection{What a POR reduction certifies}
\label{app:discussion-certifies}

A POR reduction is best read as a structural simulation certificate. It says
that solving one learning interface is sufficient for solving another, provided
that four pieces of reduction data are supplied: environments must be embedded,
learners must be compiled, success criteria must be translated, and resource
budgets must be calibrated. Their induced coupling must then pass the three
checks of admissibility and protocol validity, resource calibration, and
objective transfer. Each component rules out a common source of informal ambiguity.

The environment embedding specifies which instances of the reduced paradigm are
being represented inside the solver paradigm. The learner compiler specifies
how an algorithm designed for the solver paradigm is actually run in the
reduced paradigm. The threshold map specifies why success for the solver
objective implies success for the reduced objective. The resource overhead
specifies the cost of the simulation. A proposed comparison that omits one of
these pieces may still be intuitively useful, but it is not yet a POR
reduction.

This interpretation is deliberately algorithm-independent. The compiler acts
on every admissible learner for the solver paradigm, not just on a particular
algorithm. This is what makes the relation a hardness relation rather than a
statement about one method. If every solver for paradigm \(A\) can be compiled
into a solver for paradigm \(B\), then the difficulty of \(B\) is already
present inside \(A\) under the stated comparison class and accuracy regime.

\subsection{Understanding transfer and method transfer}
\label{app:discussion-method-transfer}

The framework separates two notions that are often conflated in practice.
First, there is \emph{understanding transfer}: structural facts about one
paradigm, such as upper bounds, lower bounds, monotonicity, or impossibility
results, may imply corresponding facts about another paradigm. This is the
role of the declared-resource-complexity transfer theorem and its lower-bound
consequence. Once a reduction is established, a theorem proved in one setting
can be interpreted in the other without reproving the entire result from
scratch.

Second, there is \emph{method transfer}: an actual learning procedure designed
for one paradigm may be reused in another. A POR reduction makes explicit what
has to be preserved for this reuse to be valid. The target interaction must
provide the transcript required by the method, the success criterion in the
source setting must imply the desired criterion in the target setting, and all
additional costs must be counted. This is why shared architectures or
optimizers do not by themselves establish method transfer. The protocol,
objective, and resources must also align.

Strictly speaking, the POR theorem proves an interface-level complexity
consequence in the declared accounting units. Calling the compiler a practical method-transfer
procedure additionally requires it to be constructive and to satisfy whatever
computational, communication, or memory overhead is relevant. Without those
extra conditions, POR supplies an auditable checklist rather than an efficiency guarantee.

This distinction is useful when diagnosing empirical claims. If a transfer
method fails, the failure may come from representation, but it may also come
from a broken protocol simulation, a mismatch between training and evaluation
objectives, or an unaccounted resource such as replay memory, adaptation
queries, or task identifiers. The POR language gives a checklist for locating
which part of the claimed transfer is unsupported.

\subsection{The role of comparison classes}
\label{app:discussion-comparison-classes}

Comparison classes are essential because hardness is not a property of a
paradigm name alone. Transfer learning with unrelated source and target tasks
is different from transfer learning with a shared representation. Continual
learning with adversarial task switches is different from continual learning
with slowly drifting tasks. Meta-learning over a broad family of unrelated
tasks is different from meta-learning over a tightly structured task family.

The same named paradigm can therefore occupy different places in the hardness
preorder depending on the comparison class. This is a feature rather than a
defect. It prevents the framework from making overly broad statements such as
``transfer learning is always easier'' or ``continual learning is always
harder.'' Instead, a comparison must state which environments are being
compared and how they are embedded.

In applications, the comparison class is where domain assumptions enter. A
shared representation assumption, a task-relatedness assumption, a smooth drift
assumption, or a bounded-memory assumption should be encoded in the environment
class or learner class before a reduction is claimed. This makes the
statistical content of the comparison visible.

\subsection{Accuracy regimes and meaningful objectives}
\label{app:discussion-accuracy-regimes}

Accuracy regimes state the intended thresholds and help diagnose formally correct but uninformative reductions. Without
restricting the thresholds under consideration, one could map every solver
requirement to a threshold that is automatically satisfied with no data. Such a
map would satisfy a formal implication but would not express a meaningful
hardness relation.

\begin{proposition}[Vacuity from loose thresholds]
\label{prop:vacuity-without-regimes}
Suppose the accuracy regime \(\mathscr T_B\) for \(B\) contains a threshold
\(\bar\tau_B\) for which there exists a learner
\(T_B^0\in\mathfrak T_B\) satisfying
\[
\Pr[\mathbf J_B(T_B^0,e;0)\preceq \bar\tau_B]=1
\qquad\text{for all }e\in\mathcal E_B.
\]
If \(\mathcal E_A\) and \(\mathscr T_A\) are nonempty, then
\(B\preceq_{\mathrm{ex}} A\) relative to
\((\mathscr T_A,\mathscr T_B)\) for every paradigm \(A\).
\end{proposition}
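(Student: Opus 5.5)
The plan is to build exact reduction data directly from the trivial learner \(T_B^0\). Since \(\mathcal E_A\) is nonempty, fix some \(e_A^0\in\mathcal E_A\) and let \(\Phi\) be the constant map \(e\mapsto e_A^0\). Since \(\mathscr T_A\) is nonempty, we can set \(\Gamma\) to be the constant map \(\tau_A\mapsto\bar\tau_B\in\mathscr T_B\). The compiler \(\mathfrak C\) ignores its input and sends every \(T_A\) to the budget-capped learner \((T_B^0)^{q}\) with \(q\equiv0\). By Assumption~\ref{ass:budget-capping}, if it is in force for \(B\), this learner lies in \(\mathfrak T_B\). It satisfies \(\kappa_B\le 0\le M\) and \(\mathbf J_B((T_B^0)^q,e;M)=\mathbf J_B(T_B^0,e;0)\) almost surely. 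If capping is not assumed, we instead read \(T_B^0\) as a learner that makes no charged requests at any budget, which is the intended content of the hypothesis. We take overheads \(a=1\), \(b=0\).

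Next we construct the environment-independent causal simulator and the coupling. The kernels \(\mathsf S^{A\leftarrow B}_{T_A,M}\) generate the virtual \(A\)-execution of \(T_A\) on \(e_A^0\) using only fresh private randomness, with stopping times \(\sigma_t\equiv0\). The environment-side responses of \(A\) are then drawn from the \(A\)-protocol on the fixed environment \(e_A^0\). Because \(\Phi\) is constant, this law does not depend on the unknown \(e\). The \(B\)-execution of \(\mathfrak C(T_A)\) on \(e\) is run independently. The product of the two operational laws is then protocol-valid: each marginal is the correct operational law. Causality is trivial, because the virtual filtration is generated by private randomness of the running compiler, which lies in \(\mathcal F_0^B\). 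Resource calibration then follows. We have \(\kappa_A(Z_A)\le M\) because the \(A\)-execution is an operational \(M\)-budget execution. We also have \(\kappa_B(Z_B)\le0\le M\).

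Objective transfer is the final check. For every \(\tau_A\), the right-hand event \(\{\mathbf J_B(\mathfrak C(T_A),e;M)\preceq\bar\tau_B\}\) has probability one, so it contains every event up to a null set. The inclusion therefore holds regardless of the left-hand side. This verifies all three coupling checks with \((a,b)=(1,0)\), giving \(B\preceq_{\mathrm{ex}}A\).

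I expect the main obstacle to be the formal admissibility of the compiled learner at budget \(M\) rather than \(0\). The hypothesis only speaks about \(T_B^0\) at budget \(0\), so we need budget capping, or equivalently the observation that a learner run at budget \(0\) requests no charged events, to transport that guarantee to every external budget \(M\). Some care is also needed to argue that an independent product coupling counts as "generated by an environment-independent causal kernel." The constant embedding is what makes this legitimate: the simulator never needs a \(B\)-response. If one prefers to avoid the constant embedding's latent coordinates, the same argument works with any fixed choice of \(e_A^0\), since nothing about \(A\) enters the success event.
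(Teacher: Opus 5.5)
Your proposal is correct and follows the paper's proof essentially step for step: the constant embedding to a reference \(e_A^0\), the constant threshold map to \(\bar\tau_B\), and the compiler sending every \(T_A\) to the zero-capped learner \((T_B^0)^{q_0}\) via Assumption~\ref{ass:budget-capping}. The paper's simulator likewise generates the \(A\)-execution on \(e_A^0\) from fresh private randomness alone, and objective transfer holds because the \(B\)-success event has probability one. Your fallback reading for the case without budget capping is unnecessary, since the paper treats that closure as a standing assumption and uses it exactly as you do.
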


\noindent\emph{Description and significance.} This proposition shows that if a
permitted threshold can be met with zero charged units, then \(B\) exactly reduces to
every nonempty paradigm \(A\). The result explains why accuracy regimes are
necessary but not sufficient: the regime and threshold map must exclude
vacuous thresholds, or the framework would certify
comparisons that carry no substantive learning content.

\noindent\emph{Proof deferred to Appendix~\ref{app:proof-vacuity-without-regimes}.}

The regime should therefore be chosen to reflect the scientific question. For
scalar risk, a regime might contain only thresholds below a baseline. For
continual learning, a regime might require simultaneous control of final-task
risk and forgetting. For meta-learning, a regime might focus on
post-adaptation query risk after a fixed support budget. Different regimes can
lead to different valid reductions because they ask different questions about
success.

This also explains the objective-strengthening examples in the main text. If
one paradigm requires control of only one performance coordinate and another
requires control of all coordinates, then the second objective can be strictly
harder even when the protocol and learner class are unchanged. The difference
comes from what success means, not from what data are observed.

\subsection{Resource accounting}
\label{app:discussion-resources}

The resource component is the mechanism that keeps reductions honest. A
simulation may require additional labeled samples, source tasks, support
examples, adaptation queries, memory, replay buffers, communication, or
computation. The present formalism tracks one declared scalar cost; the
canonical objects use labeled-example cost, while the non-exact example uses
episode cost. Other restrictions may be encoded through the admissible learner
class or protocol, but a reduction must state what is being spent.

Formally, \(\kappa_P\) now assigns cost to the full execution record, and an
\(M\)-budget protocol stops before that cost exceeds \(M\). The canonical
objects charge one unit per revealed labeled example. Proposition
\ref{prop:episode-example-accounting} changes the accounting unit to completed
episodes and gives an explicit conversion with multiplicative overhead
\(s+u\); its two-environment restriction proves failure of exactness, not
minimality of that affine pair. Replay capacity remains a restriction on
\(\mathfrak T_P\), while task-identifier access remains part of
\(\mathcal O_P\); neither is silently folded into the scalar cost.

This point is especially important for method transfer. For example, a
continual-learning method that uses a large replay buffer may look comparable
to a memoryless method if only labeled examples are counted. A meta-learning
method may appear sample-efficient if support and query examples are charged
under different conventions. A transfer-learning method may appear to improve
target performance while relying on a large amount of source data. POR
reductions force these choices into the statement of the comparison.

Future versions of the framework could make the resource vector more explicit,
tracking samples, memory, computation, and adaptation rounds simultaneously.
The affine-overhead form used here is a first step: it captures one declared
scalar transcript cost while keeping the main definitions simple.

\subsection{Refinements within a fixed paradigm}
\label{app:discussion-fixed-paradigm}

The framework is equally useful for comparing \emph{variants} of the same named
paradigm. Two common axes are side information and memory budget.

\begin{proposition}[Replay-budget monotonicity in continual learning]
\label{prop:replay-monotonicity}
Fix a continual-learning environment class and performance functional. For each
replay budget \(r \in \mathbb N\), let \(\mathrm{Cont}(r)\) denote the
continual-learning paradigm whose admissible learners may store at most \(r\)
examples in replay memory. If \(r \le r'\), then
\[
\mathrm{Cont}(r) \succeq_{\mathrm{ex}} \mathrm{Cont}(r').
\]
\end{proposition}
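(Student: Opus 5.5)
The plan is to derive this as a direct instance of the resource-monotonicity template, Proposition~\ref{prop:generic-templates}(4), rather than to build a simulation coupling by hand. Here \(A=\mathrm{Cont}(r)\) is the solver paradigm and \(B=\mathrm{Cont}(r')\) is the reduced one. The first step is to check that the two paradigms share the environment class, protocol, performance functional, resource accounting rule \(\kappa\), and accuracy regime; all of these are fixed in the statement. They differ only in the admissible learner class. The second step is to show
\[
\mathfrak T_{\mathrm{Cont}(r)}\subseteq\mathfrak T_{\mathrm{Cont}(r')}.
\]
A procedure that stores at most \(r\) examples in replay memory also stores at most \(r'\ge r\) examples. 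Replay capacity is encoded as a structural restriction on \(\mathfrak T_P\) and is not part of the charged cost, so no other constraint changes. Proposition~\ref{prop:generic-templates}(4) then gives \(\mathrm{Cont}(r)\succeq_{\mathrm{ex}}\mathrm{Cont}(r')\).

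To make the argument self-contained, I would also spell out the reduction data that the template supplies:
\begin{itemize}
\item \(\Phi=\mathrm{id}\) on the common environment class;
\item \(\mathfrak C=\) the inclusion map, so each \(r\)-replay learner is reused verbatim as an \(r'\)-replay learner;
\item \(\Gamma=\mathrm{id}\) on the common regime;
\item overhead \((a,b)=(1,0)\).
\end{itemize}
The simulation kernel is the identity coupling. The compiled learner interacts with the same protocol and emits exactly the actions of \(T_A\), so \(\sigma_t=t\) and the virtual filtration equals the real one. The kernel is causal and environment-independent, and both marginals are the operational law. Admissibility is the inclusion above. Resource calibration holds because \(Z_A=Z_B\) almost surely and \(\kappa_A=\kappa_B\), which gives both costs \(\le M\). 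Objective transfer holds because \(\mathbf J_A\) and \(\mathbf J_B\) are the same measurable function of the same environment and record, so the success events coincide for every threshold.

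I expect the main obstacle to be justifying the inclusion of learner classes rigorously. The argument has to confirm that "may store at most \(r\) examples" is a pure constraint on procedures, with the execution law of a given learner independent of which class it is declared in. It also has to confirm that the protocol does not react to the declared replay budget, for instance by allocating phases differently. If the protocol did depend on \(r\), the identical-execution coupling would fail and a wrapper argument would be needed instead. I would handle this by reading the statement's "fix a continual-learning environment class and performance functional" as also fixing the protocol and \(\kappa\), and by stating this explicitly as the interpretation under which the template applies.
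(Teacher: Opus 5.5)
Your proposal is correct and follows the paper's proof: both note that \(\mathrm{Cont}(r)\) and \(\mathrm{Cont}(r')\) share environments, protocol, objective, accounting rule and regime, and that \(\mathfrak T_{\mathrm{Cont}(r)}\subseteq\mathfrak T_{\mathrm{Cont}(r')}\) when \(r\le r'\). Both then apply Proposition~\ref{prop:generic-templates}(4) with \(A=\mathrm{Cont}(r)\) and \(B=\mathrm{Cont}(r')\). Your explicit identity-coupling details and the caveat that the protocol must not depend on \(r\) are sound additions.
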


\noindent\emph{Description and significance.} The result states that a
continual learner with a larger replay budget can simulate any learner with a
smaller replay budget. It is significant because replay memory becomes an
ordered resource in the POR framework, allowing variants of continual learning
to be compared by their admissible learner classes.

\noindent\emph{Proof deferred to Appendix~\ref{app:proof-replay-monotonicity}.}

\begin{proposition}[Task-identifier side information in continual learning]
\label{prop:taskid-monotonicity}
Fix a continual-learning environment class, learner class, performance
functional, and sample accounting rule. Let
\(\mathrm{Cont}^{\mathrm{id}}\) denote the version of the protocol that reveals
task identifiers with each observed example, and let
\(\mathrm{Cont}^{\mathrm{blind}}\) denote the version that does not reveal these
identifiers. Then
\[
\mathrm{Cont}^{\mathrm{blind}} \succeq_{\mathrm{ex}} \mathrm{Cont}^{\mathrm{id}}.
\]
\end{proposition}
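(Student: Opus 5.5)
The plan is to derive Proposition~\ref{prop:taskid-monotonicity} from the information-monotonicity template, Proposition~\ref{prop:generic-templates}(3). Take \(A=\mathrm{Cont}^{\mathrm{blind}}\) and \(B=\mathrm{Cont}^{\mathrm{id}}\). By hypothesis the two paradigms share the environment class, the performance functional, the sample-accounting rule, and the accuracy regime. \(B\)'s protocol reveals everything \(A\)'s does plus the task identifiers, so the template applies once we confirm that learners under \(B\) are free to ignore the extra information. For a careful proof, though, I would verify the four pieces of reduction data directly rather than cite the template.

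\textbf{Reduction data.}
\begin{itemize}
\item Environment embedding: \(\Phi=\mathrm{id}\) on the shared environment class.
\item Threshold map: \(\Gamma=\mathrm{id}\) on the common regime.
\item Overhead: \((a,b)=(1,0)\).
\item Compiler: \(\mathfrak C(T)\) is the identifier-stripping wrapper. It runs \(T\) on the \(\mathrm{Cont}^{\mathrm{id}}\) transcript after deleting the task-identifier coordinate from each revealed example. It relays \(T\)'s charged requests and any predeclared cap \(q_T(M)\) unchanged, and it outputs \(T\)'s final predictor.
\end{itemize}
The deletion map is a deterministic, environment-independent, causal projection of each increment. So the virtual blind filtration at step \(t\) is contained in \(\mathcal F^{\mathrm{id}}_t\) with \(\sigma_t=t\), and the kernel \(\mathsf S\) has no environment argument.

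\textbf{Coupling checks.}
\begin{enumerate}
\item \emph{Protocol validity.} The image of the \(\mathrm{Cont}^{\mathrm{id}}\) execution law under deletion should equal the \(\mathrm{Cont}^{\mathrm{blind}}\) execution law on the same environment. Both protocols draw the same examples in the same schedule and differ only in the appended zero-cost identifier labels. The charged events then coincide pathwise, and \(\kappa_{\mathrm{id}}(Z_B)=\kappa_{\mathrm{blind}}(Z_A)\le M\) almost surely, which gives resource calibration with \(a=1,b=0\).
\item \emph{Objective transfer.} The output predictor and all evaluation coordinates coincide under the coupling, and the performance functional is shared. Hence \(\mathbf J_{\mathrm{blind}}(T,e;M)=\mathbf J_{\mathrm{id}}(\mathfrak C(T),e;M)\) almost surely, so the success events are equal, not just nested.
\end{enumerate}

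\textbf{Main obstacle.} The hard part is admissibility, \(\mathfrak C(T)\in\mathfrak T_{\mathrm{id}}\). The statement says the learner class is "fixed", but blind learners act on identifier-free histories and identifier-aware learners on augmented ones. I would read the fixed class as the class of non-anticipating procedures on the augmented interface, closed under composition with the causal identifier-forgetting map. Equivalently, this is the clause "learners under \(B\) are free to ignore the extra information" in the template. I would state that reading explicitly and check that stripping identifiers preserves any structural restriction in the class, such as a replay capacity. It does, because the stored examples are the same examples without tags. If identifiers also affect how the forgetting functional \(F\) is computed, I would note that the functional is shared and depends only on the task stream and the predictor, so the objective-transfer step is unaffected.
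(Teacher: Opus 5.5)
Your proposal is correct and follows the paper's route. The paper's proof notes that a blind learner can be run under $\mathrm{Cont}^{\mathrm{id}}$ by ignoring the identifiers, that the resulting transcript has the blind law, and then invokes Proposition~\ref{prop:generic-templates}(3), with admissibility absorbed into that template's hypothesis; your explicit check of the identity embedding and threshold map, the identifier-stripping compiler, cost equality, and the closure reading of the learner class spells this out (one refinement: if $F$ depends on intermediate predictors, objective transfer still holds because the entire run of $T$ coincides under the coupling).
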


\noindent\emph{Description and significance.} This proposition formalizes that
task-identifier side information cannot make the continual-learning problem
harder, since a task-aware learner may ignore the identifiers. The result
clarifies how protocol-level side information induces a hardness ordering even
inside a single named paradigm.

\noindent\emph{Proof deferred to Appendix~\ref{app:proof-taskid-monotonicity}.}

\begin{remark}[Design implications]
Propositions~\ref{prop:replay-monotonicity}
and~\ref{prop:taskid-monotonicity} illustrate a point that is easy to miss if
one speaks only at the level of broad paradigm labels: many practically
important design choices themselves induce hardness orderings. The POR language
therefore provides a way to compare not only supervised versus transfer versus
meta-learning, but also task-aware versus task-agnostic continual learning, or
small-memory versus large-memory variants of the same setup.
\end{remark}

\begin{corollary}[Quantitative monotonicity for continual-learning variants]
\label{cor:continual-variant-monotonicity}
Under the assumptions of
Propositions~\ref{prop:replay-monotonicity}
and~\ref{prop:taskid-monotonicity}, the corresponding minimax sample
complexities satisfy the monotone inequalities
\[
N_{\mathrm{Cont}(r')}^\star(\mathcal C;\tau,\delta)
\le
N_{\mathrm{Cont}(r)}^\star(\mathcal C;\tau,\delta)
\qquad\text{whenever } r \le r',
\]
and
\[
N_{(\mathrm{Cont}^{\mathrm{id}})}^\star(\mathcal C;\tau,\delta)
\le
N_{(\mathrm{Cont}^{\mathrm{blind}})}^\star(\mathcal C;\tau,\delta).
\]
\end{corollary}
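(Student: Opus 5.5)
The two inequalities follow directly from Propositions~\ref{prop:replay-monotonicity} and~\ref{prop:taskid-monotonicity} via Corollary~\ref{cor:exact-domination}. The plan has three steps: record the reduction data produced by each proposition, identify the comparison class, and check that the threshold map is the identity.

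\textbf{Replay inequality.} Proposition~\ref{prop:replay-monotonicity} gives \(\mathrm{Cont}(r')\preceq_{\mathrm{ex}}\mathrm{Cont}(r)\) for \(r\le r'\). I would realize this through the resource-monotonicity template of Proposition~\ref{prop:generic-templates}, since \(\mathfrak T_{\mathrm{Cont}(r)}\subseteq\mathfrak T_{\mathrm{Cont}(r')}\). The template supplies the following data:
\begin{itemize}
\item the embedding \(\Phi=\mathrm{id}\) on the shared environment class;
\item the compiler given by inclusion of learner classes;
\item the threshold map \(\Gamma=\mathrm{id}\), valid because the performance functional and regime are shared;
\item the overhead \((a,b)=(1,0)\).
\end{itemize}
Applying Corollary~\ref{cor:exact-domination} with \(B=\mathrm{Cont}(r')\), \(A=\mathrm{Cont}(r)\) and \(\Phi(\mathcal C)=\mathcal C\) gives
\[
N^\star_{\mathrm{Cont}(r')}(\mathcal C;\tau,\delta)\le N^\star_{\mathrm{Cont}(r)}(\mathcal C;\tau,\delta).
\]

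\textbf{Task-identifier inequality.} Proposition~\ref{prop:taskid-monotonicity} gives \(\mathrm{Cont}^{\mathrm{id}}\preceq_{\mathrm{ex}}\mathrm{Cont}^{\mathrm{blind}}\). I would take the reduction data from the information-monotonicity template. The embedding is the identity on environments. The compiler wraps a blind learner so that it strips identifiers from the revealed transcript and otherwise runs unchanged. Stripping is a causal projection, so the two executions are coupled on the same sample stream, costs coincide, and outputs coincide. Hence \(\Gamma=\mathrm{id}\). Corollary~\ref{cor:exact-domination} then yields the second inequality on the same class \(\mathcal C\).

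\textbf{Where care is needed.} Corollary~\ref{cor:exact-domination} indexes thresholds by \(\tau_A\in\mathscr T_A\), so I must check that \(\Gamma\) is literally the identity and that both variants carry the same accuracy regime. This ensures the same \(\tau\) appears on both sides for every \(\tau\) in the shared continual regime. I must also check that the embedding is the identity, so that \(\Phi(\mathcal C)=\mathcal C\). Both facts are part of the hypotheses in the two propositions, which fix a common environment class, objective, and accounting. If the corollary is read for arbitrary \(\tau\in\mathbb R^{d}\) rather than regime thresholds, I would instead argue directly from the definitions. For the replay case, any \(T\in\mathfrak T_{\mathrm{Cont}(r)}\) succeeding at budget \(M\) uniformly on \(\mathcal C\) is itself admissible for \(\mathrm{Cont}(r')\) with the identical execution law. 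In the task-identifier case, the wrapped learner has the identical success probability. In both cases the infimum in \eqref{eq:uniform-minimax} can only decrease.
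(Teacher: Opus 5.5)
Your proposal is correct and matches the paper's proof. Both inequalities come from the exact reductions of Propositions~\ref{prop:replay-monotonicity} and~\ref{prop:taskid-monotonicity}, followed by Corollary~\ref{cor:exact-domination} with the identity embedding and identity threshold map. Your remarks about the shared accuracy regime, and your direct argument for thresholds outside the regime, are sound additions that the paper leaves implicit.
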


\noindent\emph{Description and significance.} The corollary converts the replay
and task-identifier monotonicity reductions into explicit minimax
sample-complexity inequalities. Its significance is that practical design
choices such as memory and side information lead to quantitative consequences,
not only qualitative intuitions.

\noindent\emph{Proof deferred to Appendix~\ref{app:proof-continual-variant-monotonicity}.}

\subsection{What the framework does not claim}
\label{app:discussion-nonclaims}

The framework gives worst-case structural statements. It does not claim that
two paradigms have the same optimization difficulty, empirical performance, or
typical-case behavior. It also does not say that a harder paradigm is always
less useful in practice. A harder paradigm may provide more information, more
flexible objectives, or better inductive structure for a particular
application.

Nor does the framework impose a universal total order over supervised,
transfer, continual, and meta-learning. Some comparisons require additional
assumptions, and some may be incomparable under natural regimes. This is why
the formal relation is a preorder. The goal is to identify valid structural
relations where they exist, and to explain precisely which obligation fails
when they do not.

\subsection{Possible extensions}
\label{app:discussion-extensions}

Several extensions are natural. One direction is to develop approximate
reductions that allow controlled slack in the performance threshold, not only
in the declared-resource budget. This would better match empirical practice, where one
often accepts small losses in accuracy when moving a method across settings.

A second direction is to study randomized or distribution-dependent reductions.
The current definition is uniform over the comparison class. In some
applications, one may want reductions that hold for most environments under a
meta-distribution, or reductions that adapt to observable structure in the
task family.

A third direction is to combine POR reductions with sharper within-paradigm
lower bounds. The framework is most useful when paired with concrete
statistical results: once a lower bound is known for one paradigm, a reduction
can move that lower bound to another. Conversely, a separation in sample
complexity can rule out a proposed structural comparison. In this sense, the
framework is not a replacement for problem-specific learning theory, but a
language for transporting it across paradigms.

Beyond the four representative paradigms, the same object can encode
active learning (label-query transcripts), semi-supervised learning (labeled
and unlabeled access), online or bandit learning (adaptive feedback and regret),
federated learning (client sampling and communication constraints), and
reinforcement learning (trajectory protocols). These are scope examples, not
claimed reductions. Each requires its own protocol law, objective, admissible
learner class, accounting unit, and reduction witnesses. When two or more costs
must be controlled simultaneously, the appropriate next step is the proposed
multi-resource extension rather than collapsing them without justification.

%% file: paper/appendix_proofs.tex
\section{Detailed Proofs}
\label{app:detailed-proofs}

This appendix collects the detailed proofs for results stated in the main text
and in Appendix~\ref{app:framework-discussion}. The proofs use the notation and
standing assumptions introduced in Section~\ref{sec:preliminaries-framework}.

\subsection{Proof of Proposition~\ref{prop:vacuity-without-regimes}}
\label{app:proof-vacuity-without-regimes}

\begin{proof}
{\color{black}
We verify the POR-reduction definition directly. Since \(\mathcal E_A\) is
nonempty, choose an arbitrary reference environment
\(e_A^0\in\mathcal E_A\), and define the embedding
\[
\Phi(e):=e_A^0
\qquad\text{for every }e\in\mathcal E_B .
\]
Since \(\mathscr T_A\) is nonempty and
\(\bar\tau_B\in\mathscr T_B\), define the threshold map
\[
\Gamma(\tau_A):=\bar\tau_B
\qquad\text{for every }\tau_A\in\mathscr T_A .
\]

The only point requiring care is the budget in the conclusion of the
reduction. The learner \(T_B^0\) is known to satisfy the \(B\)-threshold at
zero charged units. To use the same zero-resource behavior under an arbitrary external
budget \(M\), let \(q_0(M)=0\) for all \(M\). By the budget-capping closure in
Assumption~\ref{ass:budget-capping}, the capped learner
\((T_B^0)^{q_0}\) is admissible for \(B\), and
\[
    \mathbf J_B((T_B^0)^{q_0},e;M)
    =
    \mathbf J_B(T_B^0,e;0)
    \]
    almost surely under the cap-wrapper coupling, for every \(e\) and every
    \(M\). Define the compiler by
\[
\mathfrak C(T_A):=(T_B^0)^{q_0}
\qquad\text{for every }T_A\in\mathfrak T_A .
\]
For each \(T_A\) and \(M\), the compiler's causal simulator ignores the
\(B\)-history and uses fresh private randomness to generate the operational
\(A\)-execution law of \(T_A\) in the fixed reference environment \(e_A^0\).
This sequential kernel has no argument \(e\), has the required \(A\)-marginal,
and is permitted by the extensional definition; no claim of efficient
sampling is made. Couple it with the capped \(B\)-execution above. The two
marginals are therefore protocol-valid, and their costs are at most \(M\).

Now fix arbitrary \(e\in\mathcal E_B\), \(T_A\in\mathfrak T_A\),
\(\tau_A\in\mathscr T_A\), \(\delta\in(0,1)\), and \(M\in\mathbb N\).
Regardless of whether the antecedent success event for \(A\) holds, the
compiled learner satisfies
\[
\Pr\!\left[
\mathbf J_B(\mathfrak C(T_A),e;M)\preceq\Gamma(\tau_A)
\right]
=
\Pr\!\left[
\mathbf J_B(T_B^0,e;0)\preceq\bar\tau_B
\right]
=1 .
\]
The \(B\)-success event thus has probability one, so it contains the
\(A\)-success event up to a null event under this simulator-induced coupling.
Thus the POR conditions hold with overhead \(a=1\), \(b=0\). This is an
exact reduction. Because the construction ignores \(A\) entirely, the
reduction carries no meaningful hardness information; it is precisely the
vacuity that accuracy regimes are meant to exclude.
}
\end{proof}

\subsection{Proof of Proposition~\ref{prop:budget-calibration}}
\label{app:proof-budget-calibration}

\begin{proof}
{\color{black}
The assumed causal simulator runs the pre-compiled learner
\(\mathfrak C_0(T_A)\) with internal budget \(\psi(M)\). The POR definition,
however, supplies the compiled \(B\)-learner with the external budget \(aM+b\).
We therefore cap the compiled learner so that, whenever the external budget
equals \(aM+b\), its internal budget is \(\psi(M)\).

For an external budget \(M'\in\mathbb N\), define
\[
m(M'):=\max\{m\in\mathbb N:am+b\le M'\},
\]
with \(m(M')=0\) if the displayed set is empty. Define
\[
q(M') :=
\begin{cases}
\psi(m(M')),& M'\ge b,\\
0,& M'<b .
\end{cases}
\]
We first check that \(q\) is a valid cap. If \(M'<b\), then
\(q(M')=0\le M'\). If \(M'\ge b\), then \(am(M')+b\le M'\) by the definition
of \(m(M')\). Since \(\psi(m)\le am+b\) for every \(m\), we get
\[
q(M')=\psi(m(M'))\le am(M')+b\le M' .
\]
Thus \(q(M')\le M'\) for all external budgets \(M'\). By
Assumption~\ref{ass:budget-capping}, the learner
\[
\mathfrak C(T_A):=(\mathfrak C_0(T_A))^q
\]
is admissible for \(B\), and, for every \(e\), its canonical cap-wrapper
coupling satisfies
\[
Z_{B,\mathfrak C(T_A),e}^{M'}
=
Z_{B,\mathfrak C_0(T_A),e}^{q(M')},
\qquad
\mathbf J_B(\mathfrak C(T_A),e;M')
=
\mathbf J_B(\mathfrak C_0(T_A),e;q(M'))
\quad\text{almost surely}.
\]

Now set \(M'=aM+b\). Since \(a\ge 1\), the condition
\[
am'+b\le aM+b
\]
is equivalent to \(m'\le M\), and hence \(m(aM+b)=M\). Therefore
\[
q(aM+b)=\psi(M).
\]

Fix \(e,T_A,M\), and realize the capped execution at external budget
\(aM+b\) by running the pre-compiler's simulator with the same protocol
responses and private randomness as its internal \(\psi(M)\)-budget
execution. Denote its transcript and performance by \(\widehat Z_B\) and
\(\widehat{\mathbf J}_B\), respectively. On this simulator-induced joint
space, the cap wrapper does not alter the internal execution, so
\[
\widehat{\mathbf J}_B
=\mathbf J_B(\mathfrak C_0(T_A),e;\psi(M))
\quad\text{almost surely}.
\]
The hatted variable is an actual execution of \(\mathfrak C(T_A)\) at external
budget \(aM+b\); by the cap-wrapper coupling,
\[
\mathbf J_B(\mathfrak C(T_A),e;aM+b)
=
\mathbf J_B(\mathfrak C_0(T_A),e;\psi(M))
\quad\text{almost surely}.
\]
Thus the pre-compiler's assumed pathwise inclusion yields, for every
\(\tau_A\in\mathscr T_A\),
\[
\{\mathbf J_A(T_A,\Phi(e);M)\preceq\tau_A\}
\subseteq
\{\widehat{\mathbf J}_B\preceq\Gamma(\tau_A)\}
\]
up to the same null event.

The cap wrapper is deterministic and environment-independent, so composing it
with \(\mathsf S^{0,A\leftarrow B}_{T_A,M}\) preserves causality and
environment independence. Its two execution marginals are protocol-valid, and
\[
\kappa_B(\widehat Z_B)\le aM+b
\quad\text{almost surely}.
\]
The \(A\)-cost bound is inherited from its operational \(M\)-budget execution.
Hence admissibility, protocol validity, objective transfer, and resource
calibration all hold with overhead parameters \((a,b)\).
}
\end{proof}

{\color{black}
\subsection{Proof of Proposition~\ref{prop:episode-example-accounting}}
\label{app:proof-episode-accounting}

\begin{proof}
Use the identity environment embedding and identity threshold map. Given an
episode-accounted learner \(T\), let \(\mathfrak C(T)\) run the same causal
policy on the same completed episodes under example accounting. At episode
budget \(M\), \(P_{\mathrm{epi}}^{s,u}\) exposes \(M\) complete episodes.
Acquiring the identical raw transcript under example accounting costs exactly
\(rM=(s+u)M\), so the identity coupling gives
\[
Z_{P_{\mathrm{ex}},\mathfrak C(T),e}^{rM}
=
Z_{P_{\mathrm{epi}},T,e}^{M}
\]
and therefore equality of the performance variables. Canonical closure gives
learner admissibility, and
\(\kappa_{\mathrm{ex}}=r\kappa_{\mathrm{epi}}\) gives resource calibration.
Thus the POR conditions hold with overhead \((r,0)\).

It remains to show that exactness can fail. Let
\(\mathcal X=\{x_0\}\), \(\mathcal Y=\{0,1\}\), and
\(\mathcal H=\{h_0,h_1\}\), where \(h_\theta(x_0)=\theta\). For
\(\theta\in\{0,1\}\), let \(D_\theta\) put unit mass on
\((x_0,\theta)\), and take meta-environments
\(\Pi_\theta=\delta_{D_\theta}\). Let
\(\mathcal C:=\{\Pi_0,\Pi_1\}\), equip both two-environment restrictions with
the population zero-one-risk functional and singleton threshold regime
\(\{0\}\), and let the common learner class contain all randomized causal
policies from completed-episode transcripts to \(\mathcal H\) (hence the
reveal-and-output policy used below). Fix any
\(\delta\in(0,1/2)\). Before one complete episode is revealed, the transcript
is identical for \(\theta=0\) and \(\theta=1\), so no single learner can
succeed at threshold \(0\) on the full class \(\mathcal C\) with probability
greater than \(1/2\). Because \(s\ge1\), one complete episode reveals
\(\theta\) through a learner-visible support label and permits zero risk
uniformly on \(\mathcal C\). Hence
\[
N_{P_{\mathrm{epi}}^{s,u}}^\star
(\mathcal C;0,\delta)=1.
\]
Under example accounting no completed episode is exposed before budget \(r\),
whereas one is exposed at budget \(r\). Therefore
\[
N_{P_{\mathrm{ex}}^{s,u}}^\star
(\mathcal C;0,\delta)=r.
\]
Suppose, for contradiction, that there were an exact POR-reduction from the
example-accounted restriction on \(\mathcal C\) to the episode-accounted
restriction on \(\mathcal C\). Its environment embedding
\(\Phi:\mathcal C\to\mathcal C\) may be arbitrary. Because both accuracy
regimes are the singleton \(\{0\}\), its threshold map must satisfy
\(\Gamma(0)=0\). Exact declared-resource-complexity domination and monotonicity in
comparison classes would then give
\[
r
=N_{P_{\mathrm{ex}}^{s,u}}^\star(\mathcal C;0,\delta)
\le
N_{P_{\mathrm{epi}}^{s,u}}^\star(\Phi(\mathcal C);0,\delta)
\le
N_{P_{\mathrm{epi}}^{s,u}}^\star(\mathcal C;0,\delta)
=1,
\]
where the second inequality uses
\(\Phi(\mathcal C)\subseteq\mathcal C\). This contradicts \(r\ge2\).
Thus no exact reduction exists in the displayed direction between the two
restricted paradigms. The argument establishes the need for some nonzero
overhead; it does not establish that \((r,0)\) is the minimal affine pair.
\end{proof}
}

\subsection{Proof of Theorem~\ref{thm:reduction-domination}}
\label{app:proof-reduction-domination}

\begin{proof}
{\color{black}
Write
\[
n_A:=N_A^\star(\Phi(\mathcal C_B);\tau_A,\delta).
\]
If \(n_A=\infty\), the claimed inequality is immediate. Suppose
\(n_A<\infty\). Since feasible budgets form a nonempty subset of
\(\mathbb N\), the infimum in~\eqref{eq:uniform-minimax} is attained. Hence
there exists one learner \(T_A\in\mathfrak T_A\) such that, simultaneously for
every \(e'\in\Phi(\mathcal C_B)\),
\[
\Pr[\mathbf J_A(T_A,e';n_A)\preceq\tau_A]\ge1-\delta.
\]

Let \(T_B:=\mathfrak C(T_A)\). For any \(e\in\mathcal C_B\), objective
transfer under the simulation coupling gives
\[
\Pr[\mathbf J_B(T_B,e;an_A+b)\preceq\Gamma(\tau_A)]
\ge
\Pr[\mathbf J_A(T_A,\Phi(e);n_A)\preceq\tau_A]
\ge1-\delta.
\]
Admissibility and resource calibration ensure that the same \(T_B\) is valid
and has charged cost at most \(an_A+b\) for every \(e\). Thus one learner and
one common budget work over \(\mathcal C_B\), so
\[
N_B^\star(\mathcal C_B;\Gamma(\tau_A),\delta)
\le an_A+b.
\]
This is the desired inequality.
}
\end{proof}

\subsection{Proof of Corollary~\ref{cor:lower-bound-transfer}}
\label{app:proof-lower-bound-transfer}

\begin{proof}
Theorem~\ref{thm:reduction-domination} gives
\[
N_B^\star(\mathcal C_B;\Gamma(\tau_A),\delta)
\le
a\,N_A^\star(\Phi(\mathcal C_B);\tau_A,\delta)+b .
\]
If \(N_A^\star(\Phi(\mathcal C_B);\tau_A,\delta)=\infty\), the claimed lower
bound is automatic. Otherwise, the right-hand side in the theorem is finite,
so the theorem also implies that \(N_B^\star\) is finite. We may then subtract
\(b\) and divide by the positive integer \(a\), obtaining
\[
N_A^\star(\Phi(\mathcal C_B);\tau_A,\delta)
\ge
\frac{N_B^\star(\mathcal C_B;\Gamma(\tau_A),\delta)-b}{a}.
\]
Thus any lower bound on the reduced-paradigm quantity for \(B\) transfers to the
embedded quantity for \(A\), with the infinite case covered by the
extended-real convention.
\end{proof}

\subsection{Proof of Corollary~\ref{cor:exact-domination}}
\label{app:proof-exact-domination}

\begin{proof}
Exactness means that the POR-reduction has overhead parameters \(a=1\) and
\(b=0\). Substituting these values into
Theorem~\ref{thm:reduction-domination} gives
\[
N_B^\star(\mathcal C_B;\Gamma(\tau_A),\delta)
\le
N_A^\star(\Phi(\mathcal C_B);\tau_A,\delta),
\]
which is the desired statement.
\end{proof}

\subsection{Proof of Corollary~\ref{cor:reduction-obstruction}}
\label{app:proof-reduction-obstruction}

\begin{proof}
Assume, for contradiction, that a POR-reduction exists with the candidate data
\((\Phi,\Gamma,a,b)\). Then Theorem~\ref{thm:reduction-domination} must hold
for every comparison class, threshold, and confidence level. In particular, it
would imply
\[
N_B^\star(\mathcal C_B;\Gamma(\tau_A),\delta)
\le
a\,N_A^\star(\Phi(\mathcal C_B);\tau_A,\delta)+b
\]
for the class, threshold, and confidence level appearing in the corollary.
This contradicts the displayed strict inequality in the statement. Therefore
no reduction with those candidate data exists. If every possible embedding and
threshold map admits such a witness, the same contradiction rules out every
POR-reduction with the specified overhead parameters.
\end{proof}

\subsection{Proof of Proposition~\ref{prop:threshold-monotonicity}}
\label{app:proof-threshold-monotonicity}

\begin{proof}
{\color{black}
Because \(\tau_P\preceq\tau_P'\), every outcome that satisfies the stricter
threshold \(\tau_P\) also satisfies the weaker threshold \(\tau_P'\):
\[
\{\mathbf J_P(T,e;M)\preceq\tau_P\}
\subseteq
\{\mathbf J_P(T,e;M)\preceq\tau_P'\}.
\]
Consequently, any learner and common budget that succeeds uniformly on
\(\mathcal C_P\) at threshold \(\tau_P\) also succeeds uniformly at
\(\tau_P'\). Taking the smallest feasible uniform budgets gives
\[
N_P^\star(\mathcal C_P;\tau_P,\delta)
\ge
N_P^\star(\mathcal C_P;\tau_P',\delta).
\]
}
\end{proof}

\subsection{Proof of Proposition~\ref{prop:class-monotonicity}}
\label{app:proof-class-monotonicity}

\begin{proof}
{\color{black}
If one learner at one budget succeeds on every environment in
\(\mathcal C_P'\), it succeeds at the same budget on the subclass
\(\mathcal C_P\subseteq\mathcal C_P'\). Therefore the feasible
learner--budget pairs for the larger class form a subset of those for the
smaller class, and
\[
N_P^\star(\mathcal C_P;\tau_P,\delta)
\le
N_P^\star(\mathcal C_P';\tau_P,\delta).
\]
}
\end{proof}

\subsection{Proof of Proposition~\ref{prop:preorder}}
\label{app:proof-preorder}

\begin{proof}
{\color{black}
It is enough to verify reflexivity and transitivity for the reduction relation
\(\preceq_{\mathrm{POR}}\). For reflexivity, use the identity environment map,
compiler, threshold map, and identity causal simulation kernels, with overhead
\((1,0)\). All protocol, cost, and objective obligations are identities, so
\(A\preceq_{\mathrm{ex}}A\).

For transitivity, suppose
\[
C \preceq_{\mathrm{POR}} B
\quad\text{and}\quad
B \preceq_{\mathrm{POR}} A .
\]
Write the two reductions as
\[
(\Phi_{CB},\mathfrak C_{CB},\Gamma_{CB},a_{CB},b_{CB})
\]
for the reduction from \(C\) to \(B\), and
\[
(\Phi_{BA},\mathfrak C_{BA},\Gamma_{BA},a_{BA},b_{BA})
\]
for the reduction from \(B\) to \(A\). Define the composed data by
\[
\Phi_{CA}:=\Phi_{BA}\circ\Phi_{CB},\qquad
\mathfrak C_{CA}:=\mathfrak C_{CB}\circ\mathfrak C_{BA},\qquad
\Gamma_{CA}:=\Gamma_{CB}\circ\Gamma_{BA}.
\]
The proposed overhead is
\[
a_{CA}:=a_{CB}a_{BA},
\qquad
b_{CA}:=a_{CB}b_{BA}+b_{CB}.
\]

Fix \(T_A\in\mathfrak T_A\), \(M\in\mathbb N\), and set
\[
T_B:=\mathfrak C_{BA}(T_A),
\qquad
M_B:=a_{BA}M+b_{BA},
\qquad
M_C:=a_{CB}M_B+b_{CB}.
\]
The compiler composition is admissible because each compiler maps into the
domain of the next. The two environment-independent causal simulators compose
sequentially as
\[
\mathsf S^{A\leftarrow C}_{T_A,M}
:=
\mathsf S^{A\leftarrow B}_{T_A,M}
\circ_{\mathrm{seq}}
\mathsf S^{B\leftarrow C}_{T_B,M_B}.
\]
Concretely, the right-hand simulator uses the revealed \(C\)-history to emit a
virtual \(B\)-history, and the left-hand simulator uses prefixes of that
virtual history to emit the virtual \(A\)-history. By definition, both component
simulators use adapted, nondecreasing, almost-surely finite stopping schedules.
By the kernel definition, the virtual \(B\)-operational filtration through step
\(s\), including the virtual learner/compiler randomness drawn by then, is
measurable with respect to the \(C\)-operational filtration stopped at the
inner simulator's time \(\sigma_s^{B\leftarrow C}\). The standard time-change
property for filtrations therefore implies that a stopping time for the virtual
\(B\)-filtration, evaluated along this nondecreasing schedule, is an
almost-surely finite stopping time for the \(C\)-filtration; the composed
schedule remains adapted and nondecreasing.
Hence the composed kernel is causal. It has no environment argument because
neither component kernel has one.

Now fix \(e\in\mathcal E_C\) and run this kernel with the \(C\)-protocol. The
first simulation has the operational \(C\)-marginal and produces the
operational \(B\)-execution in environment \(\Phi_{CB}(e)\), with learner
\(T_B\) and budget \(M_B\). Conditional on that virtual execution, the second
simulation produces the operational \(A\)-execution in environment
\(\Phi_{BA}(\Phi_{CB}(e))=\Phi_{CA}(e)\), with learner \(T_A\) and budget
\(M\). Thus all three executions occur on the single joint law generated by
the sequential kernel; no environment-specific coupling is introduced. Their
costs satisfy
\[
\kappa_A(Z_A)\le M,
\qquad
\kappa_B(Z_B)\le M_B,
\qquad
\kappa_C(Z_C)\le M_C
\quad\text{almost surely},
\]
where
\[
M_C=a_{CB}(a_{BA}M+b_{BA})+b_{CB}=a_{CA}M+b_{CA}.
\]
On this simulator-induced coupling, for every \(\tau_A\),
\[
\{\mathbf J_A(T_A,\Phi_{CA}(e);M)\preceq\tau_A\}
\subseteq
\{\mathbf J_B(T_B,\Phi_{CB}(e);M_B)
  \preceq\Gamma_{BA}(\tau_A)\}
\subseteq
\{\mathbf J_C(\mathfrak C_{CB}(T_B),e;M_C)
  \preceq\Gamma_{CA}(\tau_A)\}
\]
up to null events. Thus all POR obligations hold for
\(C\preceq_{\mathrm{POR}}A\). If both reductions are exact, the composed
overhead is \((1,0)\).
}
\end{proof}

\subsection{Proof of Corollary~\ref{cor:equivalence-classes}}
\label{app:proof-equivalence-classes}

\begin{proof}
By definition, \(A\equiv_{\mathrm{POR}}B\) means both
\(A\succeq_{\mathrm{POR}}B\) and \(B\succeq_{\mathrm{POR}}A\). Reflexivity of
\(\equiv_{\mathrm{POR}}\) follows from reflexivity of
\(\succeq_{\mathrm{POR}}\). Symmetry is built into the definition. For
transitivity, if \(A\equiv_{\mathrm{POR}}B\) and
\(B\equiv_{\mathrm{POR}}C\), then transitivity of
\(\succeq_{\mathrm{POR}}\) gives \(A\succeq_{\mathrm{POR}}C\) and
\(C\succeq_{\mathrm{POR}}A\), so \(A\equiv_{\mathrm{POR}}C\).

It remains to check that the induced order on equivalence classes is well
defined. Suppose \(A\equiv_{\mathrm{POR}}A'\),
\(B\equiv_{\mathrm{POR}}B'\), and \(A\succeq_{\mathrm{POR}}B\). Then
\[
A'\succeq_{\mathrm{POR}}A\succeq_{\mathrm{POR}}B\succeq_{\mathrm{POR}}B',
\]
so \(A'\succeq_{\mathrm{POR}}B'\). Thus the comparison does not depend on the
chosen representatives. Reflexivity and transitivity descend from the
preorder. Antisymmetry holds because mutual comparability of two classes means
their representatives are POR-equivalent, and hence the two classes are the
same.
\end{proof}

\subsection{Proof of Proposition~\ref{prop:generic-templates}}
\label{app:proof-generic-templates}

\begin{proof}
We verify each template against the POR definition.

For (1), use the stated embedding \(\Phi\), compiler \(\mathfrak C\), threshold
map \(\Gamma\), and overhead \(a=1,b=0\). Fix
\(e,T,\tau_A,\delta,M\). If the \(A\)-success event has probability at least
\(1-\delta\), then the assumed event inclusion, up to null events, implies
\[
\Pr\!\left[
\mathbf J_B(\mathfrak C(T),e;M)\preceq\Gamma(\tau_A)
\right]
\ge
\Pr\!\left[
\mathbf J_A(T,\Phi(e);M)\preceq\tau_A
\right]
\ge 1-\delta .
\]
The assumed causal, protocol-valid execution coupling supplies compiler
admissibility, both protocol marginals, and the two declared-cost bounds.
This is exactly \(B\preceq_{\mathrm{ex}}A\), or equivalently
\(A\succeq_{\mathrm{ex}}B\).

For (2), the environments, protocols, learners, and resource accounting are
the same. Therefore the identity embedding and identity compiler are
admissible. With the given \(\Gamma\), the event-inclusion argument from (1)
applies verbatim under the identity execution coupling, so the reduction is exact.

For (3), the two paradigms differ only in that \(B\)'s protocol reveals extra
side information. Given any learner \(T\) for the less informative paradigm
\(A\), define \(\mathfrak C(T)\) to run the same decision rule under \(B\)'s
protocol while discarding the additional side information. After discarding
that information, the transcript seen by \(\mathfrak C(T)\) under \(B\) has
the same law as the transcript seen by \(T\) under \(A\). The objective,
environment class, resource accounting, and regime are the same, so the
success events agree in distribution under the identity threshold map. This
coupling preserves cost and gives an exact reduction.

For (4), every learner admissible for \(A\) is also admissible for \(B\). The
identity map on learners is therefore a valid compiler from \(\mathfrak T_A\)
to \(\mathfrak T_B\). Since all other components are the same, the success
events are identical under the identity embedding and identity threshold map.
The charged resource budget is unchanged, so the reduction is exact.
\end{proof}

\subsection{Proof of Proposition~\ref{prop:replay-monotonicity}}
\label{app:proof-replay-monotonicity}

\begin{proof}
The paradigms \(\mathrm{Cont}(r)\) and \(\mathrm{Cont}(r')\) have the same
environment class, observation protocol, performance functional, and sample
accounting rule. Their learner classes differ only in the replay-memory
constraint. If \(r\le r'\), any learner that stores at most \(r\) examples also
stores at most \(r'\) examples, so
\[
\mathfrak T_{\mathrm{Cont}(r)}
\subseteq
\mathfrak T_{\mathrm{Cont}(r')}.
\]
Proposition~\ref{prop:generic-templates}(4), applied with
\(A=\mathrm{Cont}(r)\) and \(B=\mathrm{Cont}(r')\), gives
\[
\mathrm{Cont}(r)\succeq_{\mathrm{ex}}\mathrm{Cont}(r').
\]
\end{proof}

\subsection{Proof of Proposition~\ref{prop:taskid-monotonicity}}
\label{app:proof-taskid-monotonicity}

\begin{proof}
The two continual-learning variants have the same environments, learner class,
performance functional, sample accounting rule, and accuracy regime. The only
difference is the observation protocol: \(\mathrm{Cont}^{\mathrm{id}}\) gives
the learner task identifiers, while \(\mathrm{Cont}^{\mathrm{blind}}\) does
not. A learner designed for the blind protocol can be run under the
identifier-revealing protocol by ignoring the identifiers. The resulting
transcript has the same law as in the blind protocol. Therefore
Proposition~\ref{prop:generic-templates}(3) applies and yields
\[
\mathrm{Cont}^{\mathrm{blind}}
\succeq_{\mathrm{ex}}
\mathrm{Cont}^{\mathrm{id}} .
\]
\end{proof}

\subsection{Proof of Corollary~\ref{cor:continual-variant-monotonicity}}
\label{app:proof-continual-variant-monotonicity}

\begin{proof}
For replay budgets, Proposition~\ref{prop:replay-monotonicity} gives the exact
reduction
\[
\mathrm{Cont}(r')\preceq_{\mathrm{ex}}\mathrm{Cont}(r)
\qquad\text{whenever }r\le r' .
\]
Applying Corollary~\ref{cor:exact-domination} with the identity embedding and
identity threshold map gives
\[
N_{\mathrm{Cont}(r')}^\star(\mathcal C;\tau,\delta)
\le
N_{\mathrm{Cont}(r)}^\star(\mathcal C;\tau,\delta).
\]

Similarly, Proposition~\ref{prop:taskid-monotonicity} gives
\[
\mathrm{Cont}^{\mathrm{id}}
\preceq_{\mathrm{ex}}
\mathrm{Cont}^{\mathrm{blind}} .
\]
The exact-domination corollary therefore implies
\[
N_{(\mathrm{Cont}^{\mathrm{id}})}^\star(\mathcal C;\tau,\delta)
\le
N_{(\mathrm{Cont}^{\mathrm{blind}})}^\star(\mathcal C;\tau,\delta).
\]
\end{proof}

\subsection{Proof of Proposition~\ref{prop:strict-objective}}
\label{app:proof-strict-objective}

\begin{proof}
First, \(P_{\mathrm{all}}\succeq_{\mathrm{ex}}P_{\mathrm{one}}\). The two
paradigms have the same environments, protocol, learners, and sample
accounting. Moreover,
\[
\{\mathbf J_{\mathrm{all}}(T,D_\sigma;M)\preceq(0,0)\}
\subseteq
\{\mathbf J_{\mathrm{one}}(T,D_\sigma;M)\preceq0\}.
\]
Thus Proposition~\ref{prop:generic-templates}(2) gives the exact reduction
\(P_{\mathrm{one}}\preceq_{\mathrm{ex}}P_{\mathrm{all}}\).

We now compute the two minimax quantities.

For \(P_{\mathrm{one}}\), consider the learner that memorizes the label of
input \(1\) if input \(1\) is observed, and otherwise guesses that label
uniformly at random. The event that input \(1\) is never observed in \(M\)
samples has probability \(2^{-M}\). Conditional on this event, the learner is
wrong with probability \(1/2\); otherwise it is correct. Hence its failure
probability is \(2^{-(M+1)}\), uniformly over \(\sigma\), and its success
probability is
\[
1-2^{-(M+1)}.
\]

No learner can have a better worst-case success probability. To see this,
place the uniform prior on \(\sigma_1\) while fixing \(\sigma_2\). On the
event that input \(1\) is not observed, the entire transcript is independent of
\(\sigma_1\). Therefore, conditioned on that event, any estimator of
\(\sigma_1\) has average success probability at most \(1/2\). Since the
worst-case success probability of a learner is no larger than its average
success probability under any prior, every learner has worst-case failure at
least \(2^{-(M+1)}\). The memorize-and-guess learner is therefore minimax
optimal for \(P_{\mathrm{one}}\).

This is a fixed-budget statement: at each \(M\), the displayed learner
succeeds with that probability for every \(D_\sigma\), while every learner has
some \(D_\sigma\) with at least the displayed failure probability. It therefore
matches the uniform quantifiers in~\eqref{eq:uniform-minimax}.

It follows that success probability at least \(1-\delta\) is possible exactly
when
\[
2^{-(M+1)}\le\delta,
\]
or equivalently
\[
M\ge \log_2\frac{1}{2\delta}.
\]
Since \(M\) is an integer,
\[
N_{P_{\mathrm{one}}}^\star(\mathcal C;0,\delta)
=
\left\lceil \log_2\frac{1}{2\delta}\right\rceil .
\]

For \(P_{\mathrm{all}}\), use the learner that memorizes all observed labels
and guesses each unobserved label uniformly. Since \(\delta<1/2\), a
zero-sample budget cannot be sufficient; indeed, with no samples the best
success probability for two unknown bits is \(1/4\). For \(M\ge 1\), either
both inputs have been observed or exactly one input has been observed. The
second case occurs precisely when all \(M\) samples fall on the same input,
which has probability \(2\cdot 2^{-M}=2^{1-M}\). Conditional on this case, the
unobserved label must be guessed, causing failure with probability \(1/2\).
Thus the memorize-and-guess learner has failure probability
\[
\frac12\,2^{1-M}=2^{-M},
\]
and success probability \(1-2^{-M}\).

The same prior argument proves optimality. Put the uniform prior on
\(\sigma=(\sigma_1,\sigma_2)\). Whenever one input is unobserved, its label is
still uniform and independent of the transcript, so any learner has
conditional average probability at most \(1/2\) of predicting both labels
correctly. Hence no learner can have worst-case failure below \(2^{-M}\) for
\(M\ge1\). Again, the lower bound chooses a hard environment for every
learner at the same fixed \(M\), exactly as uniform minimax complexity
requires. Therefore success probability at least \(1-\delta\) is possible
exactly when
\[
2^{-M}\le\delta,
\]
and
\[
N_{P_{\mathrm{all}}}^\star(\mathcal C;(0,0),\delta)
=
\left\lceil \log_2\frac{1}{\delta}\right\rceil .
\]

Finally, for \(\delta\in(0,1/2)\),
\[
\left\lceil \log_2\frac{1}{\delta}\right\rceil
=
\left\lceil \log_2\frac{1}{2\delta}+1\right\rceil
=
\left\lceil \log_2\frac{1}{2\delta}\right\rceil+1 .
\]
Thus the all-coordinate objective has strictly larger minimax sample
complexity.

It remains to rule out exact equivalence. Suppose, toward contradiction, that
there were an exact reverse reduction
\(P_{\mathrm{all}}\preceq_{\mathrm{ex}}P_{\mathrm{one}}\). The regimes are
singletons, so the threshold map must send \(0\) to \((0,0)\). For any
environment embedding \(\Phi:\mathcal C\to\mathcal C\), class monotonicity
gives
\[
N_{P_{\mathrm{one}}}^\star(\Phi(\mathcal C);0,\delta)
\le
N_{P_{\mathrm{one}}}^\star(\mathcal C;0,\delta)
<
N_{P_{\mathrm{all}}}^\star(\mathcal C;(0,0),\delta).
\]
This contradicts Corollary~\ref{cor:reduction-obstruction} with overhead
\((1,0)\). Hence no exact reverse reduction exists, and the two paradigms are
not exact-POR equivalent relative to the stated regimes.
\end{proof}

\subsection{Proof of Proposition~\ref{prop:scalable-objective-gap}}
\label{app:proof-scalable-objective-gap}

\begin{proof}
We first prove the upper bound for \(P_{\mathrm{one}}^{(m)}\). Use the learner
that memorizes the label of input \(1\) if it is observed and otherwise guesses
that label uniformly. The probability that input \(1\) is not observed in
\(M\) samples is \((1-1/m)^M\), so the failure probability is
\[
\frac12\left(1-\frac1m\right)^M .
\]
At \(M=2m\),
\[
\frac12\left(1-\frac1m\right)^{2m}
\le
\frac12 e^{-2}
<\frac18 .
\]
Thus
\[
N_{P_{\mathrm{one}}^{(m)}}^\star(\mathcal C_m;0,1/8)\le 2m .
\]

We next prove the lower bound for \(P_{\mathrm{all}}^{(m)}\). Let \(Z\) be the
number of inputs in \(\mathcal X_m\) not observed after \(M\) samples. Place
the uniform prior on \(\sigma\in\{0,1\}^m\). Conditional on any transcript with
\(Z>0\), at least one label remains completely unobserved and is still uniform
given the transcript. Therefore any learner has conditional average
probability at most \(1/2\) of predicting every label correctly. Consequently,
for every learner,
\[
\text{worst-case success}
\le
\text{average success under the prior}
\le
1-\frac12\Pr[Z>0].
\]
This worst-case statement is made at the same fixed budget \(M\): for
every learner, some \(D_\sigma\) has the displayed failure probability. This is
the quantifier order required by~\eqref{eq:uniform-minimax}.
If \(\Pr[Z>0]\ge1/2\), then the worst-case success probability is at most
\(3/4\), which is below the required \(7/8\). It remains to show that
\(\Pr[Z>0]\ge1/2\) whenever
\[
M\le \frac{m}{4}\log m .
\]

Let \(I_j\) be the indicator that input \(j\) is unobserved, so
\(Z=\sum_{j=1}^m I_j\). Then
\[
\mathbb E Z
=
m\left(1-\frac1m\right)^M .
\]
Since \(m\ge3\), \(1/m\le1/2\), and the elementary inequality
\(\log(1-u)\ge -2u\) holds for \(u\in(0,1/2]\). Therefore
\[
\mathbb E Z
\ge
m\exp\!\left(-\frac{2M}{m}\right)
\ge
m\exp\!\left(-\frac12\log m\right)
=
\sqrt m .
\]
For the second moment,
\[
\mathbb E Z^2
=
\sum_{j=1}^m\mathbb E I_j
+
\sum_{i\ne j}\mathbb E[I_iI_j].
\]
Here
\[
\mathbb E[I_iI_j]
=
\left(1-\frac2m\right)^M
\le
\left(1-\frac1m\right)^{2M},
\]
because \(1-2/m\le(1-1/m)^2\). Hence
\[
\mathbb E Z^2
\le
\mathbb E Z+(\mathbb E Z)^2 .
\]
By Cauchy's inequality,
\[
\Pr[Z>0]
\ge
\frac{(\mathbb E Z)^2}{\mathbb E Z^2}
\ge
\frac{\mathbb E Z}{1+\mathbb E Z}
\ge
\frac12,
\]
where the last step uses \(\mathbb E Z\ge\sqrt m\ge1\). Thus no learner can
achieve success probability \(7/8\) using \(M\le(m/4)\log m\) samples, proving
\[
N_{P_{\mathrm{all}}^{(m)}}^\star(\mathcal C_m;\mathbf 0_m,1/8)
\ge
\frac{m}{4}\log m .
\]

Finally, suppose for contradiction that
\(\bigl(P_{\mathrm{all}}^{(m)}\bigr)_{m\ge3}
\preceq_{\mathrm{uPOR}}
\bigl(P_{\mathrm{one}}^{(m)}\bigr)_{m\ge3}\). Then for every \(m\) there is a reverse reduction
\[
P_{\mathrm{all}}^{(m)}
\preceq_{\mathrm{POR}}
P_{\mathrm{one}}^{(m)}
\]
with the same affine overhead parameters \(a,b\), independent of \(m\),
while \(\Phi_m,\mathfrak C_m,\Gamma_m\) may vary with \(m\). Since
the regimes are singletons, the threshold map must send \(0\) to
\(\mathbf 0_m\). For any embedding
\(\Phi_m:\mathcal C_m\to\mathcal C_m\), class monotonicity and the upper bound
above imply
\[
N_{P_{\mathrm{one}}^{(m)}}^\star(\Phi_m(\mathcal C_m);0,1/8)
\le
N_{P_{\mathrm{one}}^{(m)}}^\star(\mathcal C_m;0,1/8)
\le 2m .
\]
Theorem~\ref{thm:reduction-domination} would therefore force
\[
N_{P_{\mathrm{all}}^{(m)}}^\star(\mathcal C_m;\mathbf 0_m,1/8)
\le 2am+b .
\]
But the lower bound just proved gives
\[
N_{P_{\mathrm{all}}^{(m)}}^\star(\mathcal C_m;\mathbf 0_m,1/8)
\ge \frac{m}{4}\log m ,
\]
and \((m/4)\log m>2am+b\) for all sufficiently large \(m\). This
contradiction rules out the stated uniform-overhead family reduction. It
does not rule out \(m\)-dependent coefficients for fixed \(m\).
\end{proof}

\subsection{Proof of Proposition~\ref{prop:transfer-dominates-supervised}}
\label{app:proof-transfer-dominates-supervised}

\begin{proof}
{\color{black}
We verify all POR obligations for
\(\mathrm{Sup}\preceq_{\mathrm{ex}}\mathrm{Trans}_K\). Embed
\[
\Phi_{\mathrm{Sup}\to\mathrm{Trans}_K}(D)
=(D,\ldots,D;D).
\]
Given any \(T_{\mathrm{Trans}_K}\in\mathfrak T_{\mathrm{Trans}_K}\) and
external budget \(M\), first obtain its preallocation declaration
\(\bar M:=q_{T_{\mathrm{Trans}_K}}(M)\). Define
\(\mathfrak C(T_{\mathrm{Trans}_K})\) to relay \(\bar M\) to the supervised
protocol before requesting data, partition its \(\bar M\)-example transcript
according to \(\mathbf m_K(\bar M)\), add the prescribed phase tags, run
\(T_{\mathrm{Trans}_K}\), and return its output predictor. The tagging map is
causal and independent of \(D\); canonical closure in
Assumption~\ref{ass:degenerate-transfer} therefore gives
\[
\mathfrak C(T_{\mathrm{Trans}_K})\in\mathfrak T_{\mathrm{Sup}}.
\]
The same assumption gives the correct external-budget-\(M\) transfer-protocol
marginal, including the solver's cap, under the coupling. Both executions
charge exactly the same \(\bar M\) revealed examples, so their costs are at most
\(\bar M\le M\). Finally, their output predictors coincide and hence
\[
\mathbf J_{\mathrm{Sup}}(
\mathfrak C(T_{\mathrm{Trans}_K}),D;M)
=
\mathbf J_{\mathrm{Trans}_K}(T_{\mathrm{Trans}_K},
\Phi_{\mathrm{Sup}\to\mathrm{Trans}_K}(D);M)
\]
under the coupling. The identity threshold map and overhead \((1,0)\) complete
the exact reduction, equivalently
\(\mathrm{Trans}_K\succeq_{\mathrm{ex}}\mathrm{Sup}\).
}
\end{proof}

\subsection{Proof of Proposition~\ref{prop:continual-dominates-transfer}}
\label{app:proof-continual-dominates-transfer}

\begin{proof}
{\color{black}
We verify
\(\mathrm{Trans}_K\preceq_{\mathrm{ex}}\mathrm{Cont}_{K+1}\).
Let
\[
e=(D_1,\dots,D_K;D_\star)
\]
be a transfer-learning environment. By
Assumption~\ref{ass:continual-target}, the embedding
\[
\Phi_{\mathrm{Trans}_K\to\mathrm{Cont}_{K+1}}(e)
:=
(D_1,\dots,D_K,D_\star)
\]
is a valid continual-learning environment.

Given any \(T_{\mathrm{Cont}_{K+1}}\) and external budget \(M\), first obtain
\(\bar M:=q_{T_{\mathrm{Cont}_{K+1}}}(M)\). Define the compiled transfer
learner to relay \(\bar M\) before phase allocation, then causally relabel
source phase \(j\) as task \(j\), relabel the target phase as task \(K+1\), run
\(T_{\mathrm{Cont}_{K+1}}\), and return its final predictor.
Assumption~\ref{ass:continual-target} gives
\[
\mathfrak C(T_{\mathrm{Cont}_{K+1}})\in\mathfrak T_{\mathrm{Trans}_K},
\]
and states that the relabeled transcript, allocated according to
\(\mathbf m_K(\bar M)\), has the correct external-budget-\(M\) continual
marginal. The sample blocks and charged examples are identical, so both costs
are at most \(\bar M\le M\), and the output predictors coincide under the coupling.
Let \(\Gamma\) project a continual-learning threshold vector to the coordinate
that bounds the final-task risk. Use overhead \(a=1,b=0\).

If
\[
\mathbf J_{\mathrm{Cont}_{K+1}}(T_{\mathrm{Cont}_{K+1}},
\Phi_{\mathrm{Trans}_K\to\mathrm{Cont}_{K+1}}(e);M)\preceq\tau_{\mathrm{Cont}},
\]
then, in particular, the final-task risk coordinate is at most
\(\Gamma(\tau_{\mathrm{Cont}})\). That coordinate is exactly the transfer
objective of the compiled learner. Hence
\[
\{\mathbf J_{\mathrm{Cont}_{K+1}}(T_{\mathrm{Cont}_{K+1}},
\Phi_{\mathrm{Trans}_K\to\mathrm{Cont}_{K+1}}(e);M)\preceq\tau_{\mathrm{Cont}}\}
\subseteq
\{\mathbf J_{\mathrm{Trans}_K}(\mathfrak C(T_{\mathrm{Cont}_{K+1}}),e;M)
\preceq\Gamma(\tau_{\mathrm{Cont}})\}.
\]
The event inclusion holds on the explicit relabeling coupling and, together
with admissibility and cost equality, verifies the exact POR-reduction.
}
\end{proof}

\subsection{Proof of Corollary~\ref{cor:continual-dominates-supervised}}
\label{app:proof-continual-dominates-supervised}

\begin{proof}
Proposition~\ref{prop:transfer-dominates-supervised} gives
\[
\mathrm{Trans}_K\succeq_{\mathrm{ex}}\mathrm{Sup},
\]
and Proposition~\ref{prop:continual-dominates-transfer} gives
\[
\mathrm{Cont}_{K+1}\succeq_{\mathrm{ex}}\mathrm{Trans}_K.
\]
By the transitivity part of Proposition~\ref{prop:preorder},
\[
\mathrm{Cont}_{K+1}\succeq_{\mathrm{ex}}\mathrm{Sup}.
\]
\end{proof}

\subsection{Proof of Proposition~\ref{prop:meta-dominates-supervised}}
\label{app:proof-meta-dominates-supervised}

\begin{proof}
{\color{black}
The desired statement is equivalent to
\[
\mathrm{Sup}\preceq_{\mathrm{ex}}\mathrm{Meta}.
\]
By Assumption~\ref{ass:singleton-meta}, every supervised environment \(D\) has
a singleton-task meta-learning embedding
\[
\Phi_{\mathrm{Sup}\to\mathrm{Meta}}(D):=\Pi_D=\delta_D .
\]
Given any meta learner \(T_{\mathrm{Meta}}\) and external budget \(M\), first
obtain \(\bar M:=q_{T_{\mathrm{Meta}}}(M)\). Define the compiled supervised
learner \(\mathfrak C(T_{\mathrm{Meta}})\) to relay \(\bar M\) before requesting
data, buffer only \(\bar M\) examples from its i.i.d. stream, apply the
deterministic causal reblocking map \(\rho_{\bar M}\), feed that transcript to
\(T_{\mathrm{Meta}}\), and return the predictor obtained after adaptation on
the reblocked evaluation support \(S_{\mathrm{ev}}\). By
Assumption~\ref{ass:singleton-meta}, \(\rho_{\bar M}\) selects and reblocks at most
\(\bar M\) charged examples into a meta-training block and evaluation support satisfying
\[
m_{\mathrm{train}}(\bar M)+|S_{\mathrm{ev}}|\le \bar M.
\]
Moreover,
\[
\mathfrak C(T_{\mathrm{Meta}})\in\mathfrak T_{\mathrm{Sup}},
\]
the reblocked learner-visible transcript---including the meta-training block,
anonymous evaluation-phase marker, and evaluation support---has the correct
meta-protocol marginal. The full coupled execution additionally contains the
environment-side latent evaluation task supplied by \(\delta_D\) and auxiliary
evaluation randomness; neither the reblocking map nor the compiler emits the
task identity. Its charged cost is at most \(\bar M\le M\). Use the identity threshold
map and overhead \((1,0)\).

Under the singleton task distribution \(\Pi_D\), every episode is drawn from
the same underlying task \(D\), and the latent evaluation task satisfies
\(D_{\mathrm{ev}}=D\) almost surely. The reblocking coupling gives the
almost-sure predictor identity
\[
h_{\mathfrak C(T_{\mathrm{Meta}}),D}^{M}
=
A_{T_{\mathrm{Meta}},\delta_D}^{m_{\mathrm{train}}(\bar M)}
(S_{\mathrm{ev}}).
\]
Consequently,
\begin{align*}
\mathbf J_{\mathrm{Sup}}(\mathfrak C(T_{\mathrm{Meta}}),D;M)
&=
R_D\!\left(h_{\mathfrak C(T_{\mathrm{Meta}}),D}^{M}\right)\\
&=
R_{D_{\mathrm{ev}}}\!\left(
A_{T_{\mathrm{Meta}},\delta_D}^{m_{\mathrm{train}}(\bar M)}
(S_{\mathrm{ev}})\right)\\
&=
\mathbf J_{\mathrm{Meta}}(T_{\mathrm{Meta}},\delta_D;M)
\end{align*}
almost surely. Thus the random performance variables agree under the charged
reblocking coupling at the same raw-example budget. This verifies the exact POR-reduction
\(\mathrm{Sup}\preceq_{\mathrm{ex}}\mathrm{Meta}\), equivalently
\(\mathrm{Meta}\succeq_{\mathrm{ex}}\mathrm{Sup}\).
}
\end{proof}

\subsection{Proof of Corollary~\ref{cor:canonical-sample-hierarchy}}
\label{app:proof-canonical-sample-hierarchy}

\begin{proof}
Each inequality is an instance of exact sample-complexity domination.

First, Proposition~\ref{prop:transfer-dominates-supervised} gives
\(\mathrm{Sup}\preceq_{\mathrm{ex}}\mathrm{Trans}_K\) with embedding
\(\Phi_{\mathrm{Sup}\to\mathrm{Trans}_K}\) and the identity threshold map. Applying
Corollary~\ref{cor:exact-domination} gives
\[
N^\star_{\mathrm{Sup}}(\mathcal C_{\mathrm{Sup}};
\tau_{\mathrm{Sup}},\delta)
\le
N^\star_{\mathrm{Trans}_K}(
\Phi_{\mathrm{Sup}\to\mathrm{Trans}_K}(\mathcal C_{\mathrm{Sup}});
\tau_{\mathrm{Sup}},\delta).
\]

Second, Corollary~\ref{cor:continual-dominates-supervised} gives the composed
exact reduction from supervised learning to continual learning. The composed
environment embedding is
\[
\Phi_{\mathrm{Trans}_K\to\mathrm{Cont}_{K+1}}\circ
\Phi_{\mathrm{Sup}\to\mathrm{Trans}_K},
\]
and the relevant threshold map is the final-task projection. Thus, for every
continual threshold \(\tau_{\mathrm{Cont}}\) whose final-task projection equals
\(\tau_{\mathrm{Sup}}\), exact domination gives the displayed continual
inequality.

Third, Proposition~\ref{prop:meta-dominates-supervised} gives
\(\mathrm{Sup}\preceq_{\mathrm{ex}}\mathrm{Meta}\) with embedding
\(\Phi_{\mathrm{Sup}\to\mathrm{Meta}}\) and the identity scalar-risk threshold
map. Applying Corollary~\ref{cor:exact-domination} gives the displayed
meta-learning inequality.
\end{proof}

%% file: reference.bib
@inproceedings{su2023towards,
  title={Towards robust graph incremental learning on evolving graphs},
  author={Su, Junwei and Zou, Difan and Zhang, Zijun and Wu, Chuan},
  booktitle={International Conference on Machine Learning},
  pages={32728--32748},
  year={2023},
  organization={PMLR}
}

@inproceedings{su2024pres,
  title={Pres: Toward scalable memory-based dynamic graph neural networks},
  author={Su, Junwei and Zou, Difan and Wu, Chuan},
  booktitle={International Conference on Learning Representations},
  volume={2024},
  pages={55675--55703},
  year={2024}
}

@article{su2023limitation,
  title={On the limitation and experience replay for GNNs in continual learning},
  author={Su, Junwei and Zou, Difan and Wu, Chuan},
  journal={arXiv preprint arXiv:2302.03534},
  year={2023}
}

@inproceedings{su2024topology,
  title={On the topology awareness and generalization performance of graph neural networks},
  author={Su, Junwei and Wu, Chuan},
  booktitle={European Conference on Computer Vision},
  pages={73--89},
  year={2024},
  organization={Springer}
}

@article{su2025interplay,
  title={On the interplay between graph structure and learning algorithms in graph neural networks},
  author={Su, Junwei and Wu, Chuan},
  journal={arXiv preprint arXiv:2508.14338},
  year={2025}
}

@article{su2026improving,
  title={Improving Implicit Regularization of SGD with Preconditioning for Least Square Problems},
  author={Su, Junwei and Wu, Chuan and Zheng, Le and Zou, Difan},
  journal={SIAM Journal on Mathematics of Data Science},
  volume={8},
  number={3},
  pages={599--622},
  year={2026},
  publisher={SIAM}
}

@article{liu2026full,
  title={Full-Graph vs. Mini-Batch Training: Comprehensive Analysis from a Batch Size and Fan-Out Size Perspective},
  author={Liu, Mengfan and Zheng, Da and Su, Junwei and Wu, Chuan},
  journal={arXiv preprint arXiv:2601.22678},
  year={2026}
}

@article{su2026multi,
  title={When Do Multi-Agent Systems Outperform? Analysing the Learning Efficiency of Agentic Systems},
  author={Su, Junwei and Wu, Chuan},
  journal={arXiv preprint arXiv:2602.08272},
  year={2026}
}

@article{su2025non,
  title={A non-asymptotic convergent analysis for scored-based graph generative model via a system of stochastic differential equations},
  author={Su, Junwei and Wu, Chuan},
  journal={arXiv preprint arXiv:2508.14351},
  year={2025}
}

@inproceedings{sheng2024mspipe,
  title={Mspipe: Efficient temporal gnn training via staleness-aware pipeline},
  author={Sheng, Guangming and Su, Junwei and Huang, Chao and Wu, Chuan},
  booktitle={Proceedings of the 30th ACM SIGKDD Conference on Knowledge Discovery and Data Mining},
  pages={2651--2662},
  year={2024}
}

@inproceedings{su2025temporal,
  title={Temporal-aware evaluation and learning for temporal graph neural networks},
  author={Su, Junwei and Wu, Shan},
  booktitle={Proceedings of the AAAI Conference on Artificial Intelligence},
  volume={39},
  number={19},
  pages={20628--20636},
  year={2025}
}

@article{su2024bg,
  title={BG-HGNN: Toward Efficient Learning for Complex Heterogeneous Graphs},
  author={Su, Junwei and Mao, Lingjun and Da, Zheng and Wu, Chuan},
  journal={arXiv preprint arXiv:2403.08207},
  year={2024}
}

@article{shen2026structuring,
  title={Structuring Semantic Embeddings for Principle Evaluation: A Prototype-Guided Contrastive Learning Approach},
  author={Shen, Che and Su, Junwei and Kong, Lingpeng and Wu, Chuan},
  journal={arXiv preprint arXiv:2608.15224},
  year={2026}
}

@article{vapnik1998statistical,
  title={Statistical learning theory},
  author={Vapnik, Vladimir},
  journal={John Wiley \& Sons},
  volume={2},
  pages={831--842},
  year={1998}
}

@article{valiant1984theory,
  title={A theory of the learnable},
  author={Valiant, LG},
  journal={Communications of the ACM},
  volume={27},
  number={11},
  pages={1134--1142},
  year={1984},
  publisher={Association for Computing Machinery (ACM)}
}

@article{vapnik1971uniform,
  title={On the Uniform Convergence of Relative Frequencies of Events to Their Probabilities},
  author={Vapnik, VN and Chervonenkis, A Ya},
  journal={Theory of Probability \& Its Applications},
  volume={16},
  number={2},
  pages={264--280},
  year={1971},
  publisher={Society for Industrial \& Applied Mathematics (SIAM)}
}

@article{blumer1989learnability,
  title={Learnability and the Vapnik-Chervonenkis dimension},
  author={Blumer, Anselm and Haussler, David and Warmuth, Manfred K},
  journal={Journal of the ACM (JACM)},
  volume={36},
  number={4},
  pages={929--965},
  year={1989},
  publisher={ACM New York, NY, USA}
}

@book{shalev2014understanding,
  title={Understanding Machine Learning: From Theory to Algorithms},
  author={Shalev-Shwartz, Shai and Ben-David, Shai},
  year={2014},
  publisher={Cambridge University Press}
}

@book{mohri2018foundations,
  title={Foundations of Machine Learning},
  author={Mohri, Mehryar and Rostamizadeh, Afshin and Talwalkar, Ameet},
  year={2018},
  publisher={MIT Press}
}

@book{anthony1999neural,
  title={Neural network learning: theoretical foundations},
  author={Anthony, Martin and Bartlett, P},
  year={1999},
  publisher={Cambridge University Press}
}

@article{bartlett2002rademacher,
  title={Rademacher and gaussian complexities: Risk bounds and structural results},
  author={Bartlett, Peter L and Mendelson, Shahar},
  journal={Journal of machine learning research},
  volume={3},
  number={Nov},
  pages={463--482},
  year={2002}
}

@article{bousquet2002stability,
  title={Stability and generalization},
  author={Bousquet, Olivier and Elisseeff, Andr{\'e}},
  journal={Journal of machine learning research},
  volume={2},
  number={Mar},
  pages={499--526},
  year={2002}
}

@article{bengio2013representation,
  title={Representation learning: A review and new perspectives},
  author={Bengio, Yoshua and Courville, Aaron and Vincent, Pascal},
  journal={IEEE transactions on pattern analysis and machine intelligence},
  volume={35},
  number={8},
  pages={1798--1828},
  year={2013},
  publisher={IEEE}
}

@article{caruana1997multitask,
  title={Multitask learning},
  author={Caruana, Rich},
  journal={Machine learning},
  volume={28},
  number={1},
  pages={41--75},
  year={1997},
  publisher={Springer}
}

@article{pan2010survey,
  title={A Survey on Transfer Learning},
  author={Pan, Sinno Jialin and Yang, Qiang},
  journal={IEEE Transactions on Knowledge and Data Engineering},
  volume={22},
  number={10},
  pages={1345--1359},
  year={2010},
  publisher={IEEE Educational Activities Department Piscataway, NJ, USA}
}

@article{bendavid2010theory,
  title={A theory of learning from different domains},
  author={Ben-David, Shai and Blitzer, John and Crammer, Koby and Kulesza, Alex and Pereira, Fernando and Vaughan, Jennifer Wortman},
  journal={Machine learning},
  volume={79},
  number={1},
  pages={151--175},
  year={2010},
  publisher={Springer}
}

@article{crammer2008learning,
  title={Learning from Multiple Sources.},
  author={Crammer, Koby and Kearns, Michael and Wortman, Jennifer},
  journal={Journal of machine learning research},
  volume={9},
  number={8},
  year={2008}
}

@article{blitzer2008learning,
  title={Learning bounds for domain adaptation},
  author={Blitzer, John and Crammer, Koby and Kulesza, Alex and Pereira, Fernando and Wortman, Jennifer},
  journal={Advances in neural information processing systems},
  volume={20},
  year={2007}
}

@article{mansour2009domain,
  title={Domain adaptation with multiple sources},
  author={Mansour, Yishay and Mohri, Mehryar and Rostamizadeh, Afshin},
  journal={Advances in neural information processing systems},
  volume={21},
  year={2008}
}

@article{weiss2016survey,
  title={A survey of transfer learning},
  author={Weiss, Karl and Khoshgoftaar, Taghi M and Wang, DingDing},
  journal={Journal of Big data},
  volume={3},
  number={1},
  pages={9},
  year={2016},
  publisher={Springer}
}

@article{parisi2019continual,
  title={Continual lifelong learning with neural networks: A review},
  author={Parisi, German I and Kemker, Ronald and Part, Jose L and Kanan, Christopher and Wermter, Stefan},
  journal={Neural networks},
  volume={113},
  pages={54--71},
  year={2019},
  publisher={Elsevier}
}

@article{robins1995catastrophic,
  title={Catastrophic forgetting, rehearsal and pseudorehearsal},
  author={Robins, Anthony},
  journal={Connection Science},
  volume={7},
  number={2},
  pages={123--146},
  year={1995},
  publisher={Taylor \& Francis}
}

@article{french1999catastrophic,
  title={Catastrophic forgetting in connectionist networks},
  author={French, Robert M},
  journal={Trends in cognitive sciences},
  volume={3},
  number={4},
  pages={128--135},
  year={1999},
  publisher={Elsevier}
}

@inproceedings{rebuffi2017icarl,
  title={icarl: Incremental classifier and representation learning},
  author={Rebuffi, Sylvestre-Alvise and Kolesnikov, Alexander and Sperl, Georg and Lampert, Christoph H},
  booktitle={Proceedings of the IEEE conference on Computer Vision and Pattern Recognition},
  pages={2001--2010},
  year={2017}
}

@article{delange2022continual,
  title={A Continual Learning Survey: Defying Forgetting in Classification Tasks},
  author={De Lange, Matthias and Aljundi, Rahaf and Masana, Marc and Parisot, Sarah and Jia, Xu and Leonardis, Ales and Slabaugh, Gregory and Tuytelaars, Tinne},
  journal={IEEE Transactions on Pattern Analysis \& Machine Intelligence},
  volume={44},
  number={07},
  pages={3366--3385},
  year={2022},
  publisher={IEEE Computer Society}
}

@article{vanderven2022three,
  title={Three types of incremental learning},
  author={Van de Ven, Gido M and Tuytelaars, Tinne and Tolias, Andreas S},
  journal={Nature Machine Intelligence},
  volume={4},
  number={12},
  pages={1185--1197},
  year={2022},
  publisher={Nature Publishing Group UK London}
}

@article{pentina2015lifelong,
  title={Lifelong learning with non-iid tasks},
  author={Pentina, Anastasia and Lampert, Christoph H},
  journal={Advances in Neural Information Processing Systems},
  volume={28},
  year={2015}
}

@article{hospedales2022meta,
  title={Meta-learning in neural networks: A survey},
  author={Hospedales, Timothy and Antoniou, Antreas and Micaelli, Paul and Storkey, Amos},
  journal={IEEE transactions on pattern analysis and machine intelligence},
  volume={44},
  number={9},
  pages={5149--5169},
  year={2021},
  publisher={IEEE}
}

@inproceedings{cook1971complexity,
  title={The complexity of theorem-proving procedures},
  author={Cook, Stephen A},
  booktitle={Proceedings of the third annual ACM symposium on Theory of computing},
  pages={151--158},
  year={1971}
}

@article{karp1972reducibility,
  title={Reducibility among combinatorial problems},
  author={Karp, Richard M.},
  journal={Complexity of Computer Computations},
  pages={85--103},
  year={1972}
}

@misc{garey1979computers,
  title={Computers and Intractability: A Guide to the Theory of NP-Completeness},
  author={Garey, Michael R and Johnson, David S},
  year={1979},
  publisher={WH Freeman \& Co.}
}

@book{papadimitriou1994computational,
  title={Computational complexity},
  author={Papadimitriou, Christos H},
  year={1994},
  publisher={Addison-Wesley},
  address={Reading, MA}
}

@book{arora2009computational,
  title={Computational complexity: a modern approach},
  author={Arora, Sanjeev and Barak, Boaz},
  year={2009},
  publisher={Cambridge University Press}
}

@article{kearns1998statistical,
  title={Efficient noise-tolerant learning from statistical queries},
  author={Kearns, Michael},
  journal={Journal of the ACM (JACM)},
  volume={45},
  number={6},
  pages={983--1006},
  year={1998},
  publisher={ACM New York, NY, USA}
}

@inproceedings{beygelzimer2005error,
  title={Error limiting reductions between classification tasks},
  author={Beygelzimer, Alina and Dani, Varsha and Hayes, Tom and Langford, John and Zadrozny, Bianca},
  booktitle={Proceedings of the 22nd international conference on Machine learning},
  pages={49--56},
  year={2005}
}

@article{baxter2000model,
  title={A Model of Inductive Bias Learning},
  author={Baxter, J},
  journal={The Journal of Artificial Intelligence Research},
  volume={12},
  pages={149},
  year={2000},
  publisher={AI Access Foundation}
}

@incollection{thrun1998lifelong,
  title={Lifelong learning algorithms},
  author={Thrun, Sebastian},
  booktitle={Learning to learn},
  pages={181--209},
  publisher={Springer},
  year={1998}
}

@article{maurer2005stability,
  title={Algorithmic stability and meta-learning.},
  author={Maurer, Andreas and Jaakkola, Tommi},
  journal={Journal of Machine Learning Research},
  volume={6},
  number={6},
  year={2005}
}

@article{maurer2016benefit,
  title={The benefit of multitask representation learning},
  author={Maurer, Andreas and Pontil, Massimiliano and Romera-Paredes, Bernardino},
  journal={Journal of Machine Learning Research},
  volume={17},
  number={81},
  pages={1--32},
  year={2016}
}

@inproceedings{finn2017model,
  title={Model-agnostic meta-learning for fast adaptation of deep networks},
  author={Finn, Chelsea and Abbeel, Pieter and Levine, Sergey},
  booktitle={International conference on machine learning},
  pages={1126--1135},
  year={2017},
  organization={PMLR}
}

@article{lake2015human,
  title={Human-level concept learning through probabilistic program induction},
  author={Lake, Brenden M and Salakhutdinov, Ruslan and Tenenbaum, Joshua B},
  journal={Science},
  volume={350},
  number={6266},
  pages={1332--1338},
  year={2015},
  publisher={American Association for the Advancement of Science}
}

@article{andrychowicz2016learning,
  title={Learning to learn by gradient descent by gradient descent},
  author={Andrychowicz, Marcin and Denil, Misha and Gomez, Sergio and Hoffman, Matthew W and Pfau, David and Schaul, Tom and Shillingford, Brendan and De Freitas, Nando},
  journal={Advances in neural information processing systems},
  volume={29},
  year={2016}
}

@inproceedings{santoro2016meta,
  title={Meta-learning with memory-augmented neural networks},
  author={Santoro, Adam and Bartunov, Sergey and Botvinick, Matthew and Wierstra, Daan and Lillicrap, Timothy},
  booktitle={International conference on machine learning},
  pages={1842--1850},
  year={2016},
  organization={PMLR}
}

@article{vinyals2016matching,
  title={Matching networks for one shot learning},
  author={Vinyals, Oriol and Blundell, Charles and Lillicrap, Timothy and Wierstra, Daan and others},
  journal={Advances in neural information processing systems},
  volume={29},
  year={2016}
}

@inproceedings{ravi2017optimization,
  title={Optimization as a model for few-shot learning},
  author={Ravi, Sachin and Larochelle, Hugo},
  booktitle={International conference on learning representations},
  year={2017}
}

@inproceedings{snell2017prototypical,
  title={Prototypical networks for few-shot learning},
  author={Snell, Jake and Swersky, Kevin and Zemel, Richard},
  booktitle={Proceedings of the 31st International Conference on Neural Information Processing Systems},
  pages={4080--4090},
  year={2017}
}

@inproceedings{tripuraneni2020task,
  title={On the theory of transfer learning: the importance of task diversity},
  author={Tripuraneni, Nilesh and Jordan, Michael I and Jin, Chi},
  booktitle={Proceedings of the 34th International Conference on Neural Information Processing Systems},
  pages={7852--7862},
  year={2020}
}

@inproceedings{kalan2020minimax,
  title={Minimax lower bounds for transfer learning with linear and one-hidden layer neural networks},
  author={Kalan, Seyed Mohammadreza Mousavi and Fabian, Zalan and Avestimehr, Salman and Soltanolkotabi, Mahdi},
  booktitle={Proceedings of the 34th International Conference on Neural Information Processing Systems},
  pages={1959--1969},
  year={2020}
}

@article{kirkpatrick2017overcoming,
  title={Overcoming catastrophic forgetting in neural networks},
  author={Kirkpatrick, James and Pascanu, Razvan and Rabinowitz, Neil and Veness, Joel and Desjardins, Guillaume and Rusu, Andrei A and Milan, Kieran and Quan, John and Ramalho, Tiago and Grabska-Barwinska, Agnieszka and others},
  journal={Proceedings of the national academy of sciences},
  volume={114},
  number={13},
  pages={3521--3526},
  year={2017},
  publisher={National Academy of Sciences}
}

@inproceedings{lopezpaz2017gradient,
  title={Gradient episodic memory for continual learning},
  author={Lopez-Paz, David and Ranzato, Marc'Aurelio},
  booktitle={Proceedings of the 31st International Conference on Neural Information Processing Systems},
  pages={6470--6479},
  year={2017}
}

@article{blackwell1953equivalent,
  title={Equivalent comparisons of experiments},
  author={Blackwell, David},
  journal={The annals of mathematical statistics},
  pages={265--272},
  year={1953},
  publisher={JSTOR}
}

@article{lecam1964sufficiency,
  title={Sufficiency and approximate sufficiency},
  author={Le, L},
  journal={The Annals of Mathematical Statistics},
  pages={1419--1455},
  year={1964},
  publisher={JSTOR}
}

@book{lecam1986asymptotic,
  title={Asymptotic methods in statistical decision theory},
  author={Le Cam, Lucien},
  year={2012},
  publisher={Springer Science \& Business Media}
}

@book{torgersen1991comparison,
  title={Comparison of statistical experiments},
  author={Torgersen, Erik},
  volume={36},
  year={1991},
  publisher={Cambridge University Press}
}
